\definecolor{shadecolor}{gray}{.95}
\newcommand{\nocontentsline}[3]{}
\newcommand{\tocless}[2]{\bgroup\let\addcontentsline=\nocontentsline#1{#2}\egroup}
\newtheorem{theorem}{Theorem}
\newtheorem{assumption}{Assumption}
\newtheorem{lemma}{Lemma}
\begin{document}

%===================================
\twocolumn[

\aistatstitle{High-Dimensional Bayesian Optimisation with Variational Autoencoders and Deep Metric Learning}

% The \author macro works with any number of authors. There are two commands
% used to separate the names and addresses of multiple authors: \And and \AND.
%
% Using \And between authors leaves it to LaTeX to determine where to break the
% lines. Using \AND forces a line break at that point. So, if LaTeX puts 3 of 4
% authors names on the first line, and the last on the second line, try using
% \AND instead of \And before the third author name.

% \runningauthor{Grosnit, Tutunov, Maraval, Griffiths, Cowen-Rivers, Yang, Zhu, Wenlong, Chen, Wang, Bou Ammar}

% \aistatsauthor{
%   Antoine Grosnit*$\dagger$ \And Rasul Tutunov*$\dagger$ \And Alexandre Max Maraval*$\dagger$ \AND Ryan-Rhys Griffiths*$\dagger$ $\mathsection$ \And Alexander I.~Cowen-Rivers$\dagger$ $\mathparagraph$ \And Lin Yang $\ddagger$ \AND Lin Zhu$\ddagger$ \And Wenlong Lyu$\ddagger$ \And Zhitang Chen$\ddagger$ \AND Jun Wang$\dagger$ $\dagger\dagger$ \And Jan Peters$\mathparagraph$ \And Haitham Bou Ammar$\dagger$ $\ddagger\ddagger$}
  
% \aistatsauthor{
%   Antoine Grosnit*\footnotemark[1] \And Rasul Tutunov*\footnotemark[1] \And Alexandre Max Maraval*\footnotemark[1] \AND Ryan-Rhys Griffiths*\footnotemark[1]\footnotemark[2] \And Alexander I.~Cowen-Rivers\footnotemark[1]\footnotemark[3] \And Lin Yang\footnotemark[1] \AND Lin Zhu\footnotemark[1] \And Wenlong Lyu\footnotemark[1] \And Zhitang Chen\footnotemark[1] \AND Jun Wang\footnotemark[4] \And Jan Peters\footnotemark[3] \And Haitham Bou-Ammar\footnotemark[1]\footnotemark[4]}

\aistatsauthor{%
  Antoine Grosnit*%\thanks{Equal contribution. Correspondence to: <\texttt{firstname.name@huawei.com}>}
  \\
  Huawei Noah's Ark Lab
%   \texttt{antoine.grosnit@huawei.com} \\
\And
  Alexandre Max Maraval*%\footnotemark[1] 
  \\
  Huawei Noah's Ark Lab
%   \texttt{alexandre.maravel@huawei.com} \\
  \And
  Rasul Tutunov*%\footnotemark[1] 
  \\
  Huawei Noah's Ark Lab
%   \texttt{rasul.tutunov@huawei.com} \\
  \AND
  Ryan-Rhys Griffiths* \\
  Huawei Noah's Ark Lab,\\
  University of Cambridge
%   \texttt{ryan.rhys.griffiths1@huawei.com} \\
  \And
  Alexander I.~Cowen-Rivers\\
  Huawei Noah's Ark Lab\\
  Technische Universität Darmstadt\\
%   \texttt{alexander.cowen.rivers@huawei.com} \\
  \And
  Lin Yang\\
  Huawei Noah's Ark Lab
%   \texttt{yanglin33@huawei.com} \\
  \AND
  Lin Zhu\\
  Huawei Noah's Ark Lab
%   \texttt{zhulin41@huawei.com} \\
  \And
  Wenlong Lyu\\ 
  Huawei Noah's Ark Lab
  %   \texttt{lvwenlong2@huawei.com} \\
  \And
  Zhitang Chen\\
  Huawei Noah's Ark Lab
  \AND
  Jun Wang\\
  Huawei Noah's Ark Lab\\
  University College London\\
  \And
  Jan Peters\\
  Technische Universität Darmstadt\\
  \And
  Haitham Bou Ammar\\
  Huawei Noah's Ark Lab\\
  University College London
}

\aistatsaddress{
% \footnotemark[1]Huawei Noah's Ark Lab \And \footnotemark[2]University of Cambridge \AND 
% \footnotemark[3]Technische Universität Darmstadt \And \footnotemark[4]University College London
}
]

\begin{abstract}
We introduce a method combining variational autoencoders (VAEs) and deep metric learning to perform Bayesian optimisation (BO) over high-dimensional and structured input spaces. By adapting ideas from deep metric learning, we use label guidance from the black-box function to structure the VAE latent space, facilitating the Gaussian process fit and yielding improved BO performance. Importantly for BO problem settings, our method operates in semi-supervised regimes where only few labelled data points are available. We run experiments on three real-world tasks, achieving state-of-the-art results on the penalised logP molecule generation benchmark using just 3\% of the labelled data required by previous approaches. As a theoretical contribution, we present a proof of vanishing regret for VAE BO.
\end{abstract}

\etocsettocdepth.toc {chapter} % What's this?

\section{Introduction}
While Bayesian optimisation (BO) is a promising solution method for black-box optimisation problems~\citep{2016_Shahriari, 2018_Frazier, 2021_Turner}, scaling the approach to high-dimensional settings has proved challenging. Variational autoencoders (VAEs) have emerged as a powerful scaling strategy based on learning low-dimensional, nonlinear manifolds on which to perform BO \citep{2018_Gomez, 2020_Griffiths, 2018_Lu, 2018_Eissman, 2020_Moriconi, 2020_Siivola, 2020_Antonova}. VAE-based approaches are particularly suited to structured (i.e. graphs, strings or images) input spaces whereby projection to the VAE latent space enables continuous optimisation. Indeed, structured input spaces encompass a broadening spectrum of real-world tasks including, but not limited to, molecule generation \citep{2020_Korovina}, chemical reaction optimisation \citep{2021_Shields}, human motion prediction \citep{2019_Wei, 2020_Bourached} and neural architecture search \citep{2018_Kandasamy, 2019_Zhou, 2020_Ru}. 

% \Haitham{So far it is ok but there is a focus on VAE directly. Why not talk about generative mdoels and then instantiate using a VAE?}

The outstanding question for VAE BO however, is how best to leverage the black-box function in learning the latent space.  The first approaches to use VAEs for BO learned the VAE in a purely unsupervised fashion \citep{2018_Gomez, 2020_Griffiths} giving rise to pathological behaviour such as invalid decoder outputs. Purely unsupervised learning of the VAE entails that the learned latent space is not \textit{discriminative} \citep{2007_Urtasun} in the sense that it is not constructed using the black-box function labels. Such a strategy has long been noted to be sub-optimal for discriminative tasks in autoencoders \citep{2012_Snoek} and hence by analogy will be sub-optimal for VAE BO. As such, recent approaches \citep{2018_Eissman, 2020_Siivola, 2020_Tripp} have utilised ideas based on label guidance \citep{2007_Urtasun, 2012_Snoek} to construct discriminative VAE latent spaces that are more amenable to BO. 

Label guidance approaches may be categorised according to how the VAE and the surrogate model are trained. Joint training facilitates label guidance by propagating signal from the black-box function through the Gaussian process (GP) surrogate to the weights of the VAE networks. Joint training has been found to exhibit overfitting on real-world problems however \citep{2020_Siivola}. The leading approach to affect label guidance in disjoint training \citep{2020_Tripp} utilises a weighted retraining mechanism, assigning more influence to regions of the latent space with favourable black-box function values in subsequent retrainings of the VAE. Nevertheless, this approach may not produce an optimally discriminative latent space because latent points are not grouped according to their function values, thus hindering the GP fit.

% Mention problems with not having a discriminative element to the learning.

% Quote from Snoek et al. 2012: While this often leads to representations that are useful for discriminative tasks, it does require that the salient variations in the data distribution be relevant for the eventual labeling. This assumption does not necessarily always hold; often irrelevant factors can dominate the input distribution and make it poorly-suited for discrimination (Larochelle et al., 2007)

% Label guidance and a metric are useful inductive biases for the regression task.

% \Haitham{Before the coming paragraph we need a discussion on the problems as we said before. Like it is more than just dim reduction and it involves things related to generalisation and exploration and regression inductive biases.}
% \Haitham{It is still not clear what are the problems with current methods that take label information into account that we wish to solve. At the end of the previous sentence you want to have a paragraph for that.}
In this paper we take a new approach to construct discriminative latent spaces for VAE BO using ideas from deep metric learning (DML) \citep{2002_Xing}. Metric learning has been observed to improve generalisation performance in discriminative tasks when applied as a preprocessing step \citep{2020_Khosla} and additionally, metric learning encourages points with similar function values to be close in latent space. Mechanistically, we integrate DML into the VAE by including metric loss terms (e.g. contrastive \citep{2020_Khosla} or triplet \citep{hoffer2015, wu2019}) in the evidence lower bound (ELBO). To achieve synergy with the downstream task of BO, we argue that these losses should be smooth and continuous. We interpret the losses variationally through weighted likelihoods, yielding a new ELBO through which previous approaches are recovered as special cases. Our contributions may be summarised as:

% 1. Solving discrimination with 1 percent of labelled data by metric learning ... 
% 2. Proving sublinear regrets 
% 3. Empirical analysis showing good results on three benchmarks bla bla 

% This is a key paragraph in order that we do not get hit with the criticism that we are making unfounded claims.

% Our contributions may be summarised as:

% 1. A demonstration that deep metric learning structures the VAE latent space according to function values, yielding improved performance in downstream Bayesian optimisation tasks and allowing us to achieve state-of-the-art performance on the penalised logP molecule generation benchmark.

% 2. A proof of sublinear regret for VAE-BO.

% could separate out the above into empirical and theoretical contributions

% Our contributions may be summarised as: 1) A task-agnostic method to induce discriminative latent spaces for VAE BO improving sample-efficiency, enabling comparable performance to previous methods using just 1\% of the available labelled data; \Alex{actually only true in molecule task, in other tasks we do not recover} 2) A theoretical analysis of our algorithm with a proof of sublinear regret; 3) A practicable and ready-to-use open-source version of our code\footnote{link not disclosed during double-blind review process.}.

\begin{enumerate}
    \item A demonstration that DML structures the VAE latent space according to function values, yielding improved performance in downstream BO tasks.
    \item A demonstration that such a DML scheme is operational in the semi-supervised setting where state-of-the-art performance is achieved on the penalised logP molecule generation benchmark using 3\% of the labelled data required by previous approaches.
    \item A proof of sublinear regret for VAE BO.
\end{enumerate}

Additionally, we open-source all models\footnote{
% link to code repository not disclosed during double-blind review process.
\url{https://github.com/huawei-noah/HEBO/tree/master/T-LBO}
}.

\section{Background}
\label{backgr}

%In this section we briefly survey BO in high-dimensional settings as well as deep metric learning.

\subsection{Bayesian Optimisation}
In this paper, we wish to solve the optimisation problem
\begin{equation}
    \label{Eq:ProbOne}
    \bm{x}^{\star} = \arg\max_{\bm{x} \in \mathcal{X}} f(\bm{x}),
\end{equation}
where $f(\cdot):\mathcal{X} \rightarrow\mathbb{R}$ is an expensive black-box function over a high-dimensional and structured input domain $\mathcal{X}$. {BO}~\citep{1964_Kushner, 1975_Mockus, 1975_Zhilinskas, jones1998, 2010_Brochu, 2020_Grosnit} is a data-efficient methodology for determining $\bm{x}^{\star}$. There are two core components in BO; a surrogate model and an acquisition function. GPs~\citep{rasmussen2006} are the surrogate model of choice for $f(\cdot)$ as they maintain calibrated uncertainties to guide exploration. The acquisition function is responsible for suggesting new input points $\bm{x}$ to query at each iteration of BO and is designed to trade off exploration and exploitation in the black-box objective. Upon completion of each iteration, the queried points are appended to the dataset of the surrogate model which is then retrained. This process continues \textit{ad libitum} until a solution is obtained. In this paper we use the expected improvement (EI) \citep{1975_Mockus, jones1998} acquisition to facilitate comparison against recent approaches to VAE BO \citep{2020_Tripp} although we note that in general our framework is agnostic to the choice of acquisition. %While promising in low-dimensional settings, BO as described above falls short for high-dimensional problems, posing challenges both to GP modelling and acquisition function optimisation. Next, we detail the VAE-based approach to scalable BO. 
%Next, we detail the VAE-based approach to scale BO to high dimensions.

%Here, high-dimensionalities pose challenges to GP modelling, and discrete structures defy gradient-based optimisation during the acquisition maximisation step. Typically, successful techniques that generalise BO to those settings employ nonlinear dimensionality reduction that embeds the structured space $\mathcal{X}$ onto a continuous latent domain $\mathcal{Z} \subseteq \mathbb{R}^{d}$; effectively constructing an analogue lower-dimensional optimisation problem that is better-suited for GPs and first-order acquisition optimisers~\citep{moriconi2020highdimensional, GPLVMs, pmlr-v80-lu18c}. Of course, black-box evaluations still need to be executed in $\mathcal{X}$, and as such, one generally equips these methods with an inverse (or decoding) map from $\mathcal{Z}$ back to $\mathcal{X}$. In this work, we focus on deep-network based generative models that have recently shown impressive empirical gains when optimising for structured inputs~\citep{2020_Tripp}.  

\subsection{High-Dimensional BO with VAEs}\label{Sec:LBO}
Although many disparate attempts have been made to extend BO to high dimensions cf. Section \ref{Sec:RelatedWork}, in this paper we focus on VAE-based approaches (also known as latent space optimisation (LSO) \cite{2020_Tripp}). The VAE is used to map between $\mathcal{X}$, a structured input space (e.g. graphs) and $\mathcal{Z}\subseteq\mathbb{R}^d$, a low-dimensional latent space. The model's encoder $q_{\bm{\phi}}(\cdot|\bm{x}): \mathcal{X} \rightarrow \mathcal{P}(\mathcal{Z})$ induces a probability distribution over $\mathcal{Z}$ conditioned on $\bm{x} \in \mathcal{X}$, while the decoder $g_{\bm{\theta}}(\cdot|\bm{z}): \mathcal{Z} \rightarrow \mathcal{P}(\mathcal{X})$ is a stochastic inverse map from $\mathcal{Z}$ to $\mathcal{X}$. The weights $\bm{\phi}$ and $\bm{\theta}$ are obtained by maximising the ELBO which contains a reconstruction error term and a regularisation term that encourages the approximate posterior to be close to the prior $p(\bm{z})$, i.e. $
\textbf{ELBO}(\bm{\theta}, \bm{\phi}) =\mathbb{E}_{q_{\bm{\phi}}(\bm{z}|\bm{x})}[\log g_{\bm{\theta}}(\bm{x}|\bm{z})] - \text{KL}(q_{\bm{\phi}}(\bm{z}|\bm{x})||p(\bm{z}))$. The VAE is typically pre-trained using a set of unlabelled data.

The problem formulation of VAE BO bears notable differences to standard BO. We seek to determine $\bm{z}^{\star}$ such that the expected function value evaluated on $x^{\star} \sim g_{\bm{\theta}^{\star}}(\cdot|\bm{z}^{\star})$ is maximised i.e. $\arg\max_{\bm{z}\in\mathcal{Z}} \mathbb{E}_{\bm{x}\sim g_{\bm{\theta}^{\star}}(\cdot|\bm{z})}[f(\bm{x})]$. As such, we assume that the trained decoder possesses support over $\bm{x}^{\star}$ i.e. $\exists\bm{z}\in\mathcal{Z}: \mathbb{P}\text{r}\left[\bm{x}^*\in g_{\bm{\theta}^*}(\cdot|\bm{z})\right] > 0$, an assumption that we verify empirically in Section~\ref{Sec:Theory}. This formulation may be regarded as a generalisation of standard BO, whereby we aim to acquire an optimal conditional distribution from which $\bm{x}^{\star}$ is sampled. Thus, when $g_{\bm{\theta}^{\star}}(\cdot|\bm{z})$ follows a Dirac distribution, one recovers the solution of the optimisation problem in Equation~\ref{Eq:ProbOne}. Given that the input is stochastic, $\bm{z} \sim q_{\bm{\phi}^{\star}}(\cdot | \bm{x})$, the surrogate may be viewed as a Gaussian process latent variable model (GPLVM)~\citep{2020_Siivola}.

% \begin{figure}[h!]
%     \centering
%     \includegraphics[trim= 27em 4em 24em 2em, clip=true, width=0.46\textwidth]{}
%     \caption{High-D BO in latent space.}
%   \label{high_d_bo_fig}
% \end{figure}

\textbf{Label Guidance in Latent Space:} BO solves a regression subproblem in $\mathcal{Z}$. To be informative for regression tasks, $\mathcal{Z}$ can be constructed using the black-box function labels. Inspired by the finding that mild supervision can be beneficial when initialising discriminative deep networks~\citep{2007_Urtasun, 2006_Bengio}, a plethora of models have been proposed which facilitate label guidance by jointly training GPLVMs together with the autoencoder~\citep{2018_Eissman, 2020_Siivola, 2012_Snoek, moriconi2020highdimensional, liu2019}. Though successful in isolated instances, the recent findings of~\citep{2020_Siivola} suggest that \emph{disjoint training with label guidance} is preferable to avoid overfitting, yielding improved BO performance. As such, we follow the disjoint training approach detailed in~\citep{2020_Tripp} which has enjoyed success in solving a range of high-dimensional optimisation problems over structured input spaces. The technique of \citep{2020_Tripp} couples BO with the VAE through a weighted retraining scheme based on ranking evaluated function values. 

% Similar to~\citep{2020_Tripp}, we first compute a set of weights $w(\bm{x}_i) \propto f(\bm{x}_{i})$ for all $i\in[1:N]$ from a dataset $\mathcal{D}=\langle\bm{x}_{i}, f(\bm{x}_i)\rangle_{i=1}^{N}$, and introduce a weighted ELBO to incentivise reconstruction of promising function values: $\mathbb{E}_{\bm{x}\sim \mathcal{U}(\cdot) }[\textbf{ELBO}_{\text{weighted}}(\bm{\theta}, \bm{\phi})] = \mathbb{E}_{\bm{x} \sim \mathcal{U}(\cdot)}[w(\bm{x})\textbf{ELBO}(\bm{\theta}, \bm{\phi})]$ where $\mathcal{U}(\cdot)$ is a uniform distribution in $[\bm{x}_1,\ldots,\bm{x}_N]$.

\subsection{Deep Metric Learning}\label{Sec:DML}

%\Alex{New story proposal (with Ryan): instead of building a story on our specific soft-triplet loss and soft-contrastive loss, why not simply argue that metric learning in general is good for VAE-BO ? This incurs a significant change in the story of the paper and raises a couple of questions:
%\begin{itemize}
 %   \item in terms of contributions, not putting the accent on our adaptation of triplet and contrastive to regression but emphasizing metric learning for VAE-BO in general, will the contributions of this paper be enough ? 
  %  \item if yes, we should add the log-ratio loss as well as the naive loss to the results, shorten the section on continuous and smoothing (potentially putting it in the appendix) and make the general case for metric learning. Will the theorem still be valid or is it specific to our s-triple and s-cont losses ?
  %  \item if no, then what about log-ratio and naive losses ? we then should condense text of section 3
%\end{itemize} 
%}

The goal of deep metric learning (DML) may be loosely stated as the identification and extraction of good features for downstream tasks~\cite{2020_Elezi}. Metric learning is termed “deep” when used in conjunction with deep networks which in the case of the VAE constitute the encoder-decoder networks. In this paper we wish to use deep metric learning to construct discriminative latent spaces for VAE BO. In our experiments we use a variety of metric losses which we detail in Section~\ref{Sec:Exps}. We introduce one of the most widely-used metric losses, the triplet loss~\cite{hoffer2015, wu2019} here to serve as \emph{our running example}. 

    %The goal of deep metric learning may be loosely articulated as the identification and extracti%In Section~\ref{Sec:Exps}, we elaborate other readily applicable metric losses that all tend to improve VAE-BO's performance. 

% \Alex{The following will depend on the story we choose.} To do so, we combine concepts from ordinal deep learning \citep{liu2019} with extensions of one of the most widely-used metric loss functions, the triplet loss~\citep{hoffer2015, wu2019}. We also conduct experiments with another widely-used metric loss, the contrastive loss~\citep{2020_Khosla}, but defer all related details to Appendix~\ref{App:contrastive} due to space constraints.  %\Haitham{Sentence about why those two.}

% \begin{figure}[h!]
%     \centering
%     \includegraphics[trim= 36em 17em 35em 8.5em, clip= true, width=0.46\textwidth]{}
%     \caption{Metric learning in latent space.}
% \end{figure}

\textbf{Triplet Loss:} The triplet loss $\mathcal{L}_{\text{triple}}(\cdot)$,  frequently encountered in classification settings, measures distances between input triplets. To define $\mathcal{L}_{\text{triple}}(\cdot)$, an anchor/base input (e.g., an image of a dog) $\bm{x}^{(\text{b})}$, a positive input (e.g., a rotated image of a dog) $\bm{x}^{(\text{p})}$ and a negative input (e.g., an image of a cat) $\bm{x}^{(\text{n})}$ are required. The aim of this loss is to minimise the distance between the anchor and the positive point while maximising the distance between the anchor and the negative point. More precisely, given a separation margin $\rho$, the triplet is encoded to $\bm{z}^{(\text{b})}$, $\bm{z}^{(\text{p})}$ and $\bm{z}^{(\text{n})}$ such that: $||\bm{z}^{(\text{b})} - \bm{z}^{(\text{p})} ||_{q} + \rho\leq ||\bm{z}^{(\text{b})} - \bm{z}^{(\text{n})} ||_{q}$. Consequently, minimising  $\mathcal{L}_{\text{triple}}(\cdot)= \max\left\{0, ||\bm{z}^{(\text{b})} - \bm{z}^{(\text{p})} ||_{q} + \rho - ||\bm{z}^{(\text{b})} - \bm{z}^{(\text{n})} ||_{q} \right\}$ yields a structured space where positive and negative pairs cluster together subject to separation by a margin $\rho$. 

\section{High-Dimensional BO with VAEs and DML}\label{Sec:Sol}

% Following the introduction, a discriminative latent space is desirable for VAE BO. 

Deep metric learning has been shown to be highly effective in constructing discriminative features for downstream classification tasks in computer vision \citep{2020_Khosla} and natural language processing \citep{fang2020cert}. These successes point towards deep metric learning being a promising approach for affecting discriminative latent spaces in VAE BO. Deep metric learning and VAEs are typically combined by including an additional metric loss term in the ELBO of the VAE \citep{2018_Ishfaq, 2021_Koge}. In Section \ref{Subsec:continuous-DML} and Section \ref{Sec:NewVAE} we discuss two design considerations that are unique to metric learning applied to VAE BO: 1) continuity and smoothness and 2) sample efficiency. In Section \ref{Sec:Theory} we present a proof of sublinear regret for the VAE BO scheme.

\subsection{Smooth \& Continuous Metric Losses}\label{Subsec:continuous-DML}
% The original triplet loss was introduced in the context of classification which assumes finite and discrete class labels. In standard BO however, black-box objectives are continuous so we require an extension of $\mathcal{L}_{\text{triple}}(\cdot)$ to continuous labels to facilitate GP regression. 

%\Alex{Beyond accounting for continuous labels, it is also beneficial for losses to be smoothly varying with respect to their inputs (CITATION?). To enforce that behaviour in metric losses, we require that the penalisation be proportional to the difference in function values.}}
%We tackle the problem in Section~\ref{Sec:ProbI} in two steps. First, we introduce a threshold $r$ that allows a discretisation of the output space based on differences in black-box function values and then offer soft metric losses that ensure an increase in penalisation magnitudes as differences in latent norms decrease. %This, in turn, enables an alignment with GP regression leading to better discriminating latent space features; see Section~\ref{Sec:ExpDisc}. 

% \textbf{Triplet Loss for BO:} 

From Section~\ref{Sec:DML}, we note that the triplet loss, $\mathcal{L}_{\text{triple}}(\cdot)$, requires a triplet coupling constructed using label information. To extend this idea beyond classification, we introduce a threshold $\eta$ and execute triplet matching based on differences in black-box function values such that for a base input $\bm{x}^{\text{(b)}}$, we create the relative set of positive points $\mathcal{D}_{\text{p}}(\bm{x}^{\text{(b)}}; \eta)=\langle\bm{x} \in \mathcal{D}: |f(\bm{x}^{(\text{b})}) - f(\bm{x})| < \eta\rangle$ and the relative set of negative points $\mathcal{D}_{\text{n}}(\bm{x}^{\text{(b)}}; \eta)=\langle\bm{x} \in \mathcal{D}: |f(\bm{x}^{(\text{b})}) - f(\bm{x})| \geq \eta\rangle$. At this stage, we can apply  $\mathcal{L}_{\text{triple}}(\cdot)$ during the training phase of the VAE to induce a useful metric in $\mathcal{Z}$. 

In doing so, however, we observed that using $\mathcal{L}_{\text{triple}}(\cdot)$ as is yielded unstable behaviour attributed to an absence of differentiablity across the domain of valid triplets. This problem can be remedied by applying a soft-plus smooth approximation to the $\text{ReLU}(\cdot)$ leading to\footnote{We also set $\rho = 0$ as it has been observed to lead to faster convergence by sampling semi-hard triplets~\citep{schroff2015}.}: 
\begin{equation}
\label{Eq:triple}
    \mathcal{L}_{\text{triple}}^{(\text{BO})}(\cdot) \propto \log(1+\exp(\Delta_{\bm{z}}^{+} - \Delta_{\bm{z}}^{-})) 
\end{equation}
with $\Delta_{\bm{z}}^{+} = ||\bm{z}^{(\text{b})} - \bm{z}^{(\text{p})}||_{q}$ and $\Delta_{\bm{z}}^{-} = ||\bm{z}^{(\text{b})} - \bm{z}^{(\text{n})}||_{q}$, such that $\bm{z}^{(\text{p})} \sim q_{\bm{\phi}}(\cdot|\bm{x}^{(\text{p})})$ and $\bm{z}^{(\text{n})} \sim q_{\bm{\phi}}(\cdot|\bm{x}^{(\text{n})}) \ \forall \bm{x}^{(\text{p})}\in \mathcal{D}_{\text{p}}(\bm{x}^{\text{(b)}};\eta)$ and $\forall \bm{x}^{(\text{n})}\in \mathcal{D}_{\text{n}}(\bm{x}^{\text{(b)}};\eta)$.

\textbf{Softening the Triplet Loss:} Although Equation~\ref{Eq:triple} facilitates the application of metric learning in BO, it is important to note that penalisation \emph{magnitudes} are independent of the black-box function values; see Figure~\ref{Fig:SoftTriplet} in Appendix~\ref{App: Triplet_loss_section}. Such a factor can influence feature discrimination when used in conjunction with GP regressors since VAEs are not \emph{directly} ensuring an increase in similarity of function values as $\bm{z} \rightarrow \bm{z}^{\prime}$. Hence, we introduce a simple yet effective modification to $\mathcal{L}_{\text{triple}}^{(\text{BO})}(\cdot)$ that incorporates weightings for positive and negative pairs $w^{(\text{p})} \propto \eta - |f(\bm{x}^{(\text{b})}) - f(\bm{x}^{(\text{p})})|$ and $w^{(\text{n})} \propto |f(\bm{x}^{(\text{b})}) - f(\bm{x}^{(\text{n})})| - \eta$ leading us to $\mathcal{L}_{\text{s-triple}}^{(\text{BO})}(\cdot)\propto w^{(\text{p})}w^{(\text{n})} \mathcal{L}_{\text{triple}}^{(\text{BO})}(\cdot)$. Clearly, $w^{(\text{p})}$ increases penalisation magnitudes as function value differences decrease; encouraging closer latent points. Analogously, $w^{(\text{n})}$ promotes latent space separation as function values grow farther apart. In our experiments we also use the continuous log-ratio metric loss \citep{2019_Kim}.

\subsection{Sample Efficiency with Semi-Supervised Metric-Regularised VAEs}\label{Sec:NewVAE}
Deep metric learning in its general form presumes access to vast quantities of data. To compute $\mathcal{L}_{\text{triple}}(\cdot)$ (our running example) the data must also admit a categorisation between positive and negative input pairs\footnote{Other forms of metric losses we present in Section~\ref{Sec:Exps} might not require such a categorisation. They, however, still assume access to black-box evaluations.}. Generally, this dichotomisation requires access to class labels, which are either readily available in supervised settings or constructed implicitly during data augmentation in self-supervised learning~\citep{yang2006distance, jing2020self}. 
When it comes to high-dimensional BO on the other hand, determining a categorisation rule using either direct supervision or data augmentation is difficult; access to abundant function evaluations is incongruous with sample-efficient optimisation and data augmentation requires significant prior knowledge of downstream tasks contrary to typical settings for black-box optimisation. Therefore, in addition to constructing a suitable deep metric loss for GP regression, we also require that our solution method learns a discriminative latent space with few queries of the black-box.

To enable sample efficiency, we propose to pre-train a VAE in an unsupervised fashion followed by supervised fine-tuning based on BO-derived function evaluations \emph{and} deep metric learning. Such a hybrid semi-supervised framework combining both labelled and unlabelled data presents a solution for a limited black-box evaluation budget. During pre-training, we assume access to large amounts of unlabelled structured data $\mathcal{D}_{\mathbb{U}}=\langle \bm{x}_{m}^{(\text{u})} \rangle_{m=1}^{M}$ and train a standard VAE \emph{without} any label guidance as originally proposed in~\cite{kingma2014}. 

Having pre-trained the VAE, we then utilise function evaluations (i.e. label information) from the BO loop to induce a latent space that facilitates the fit of the GP surrogate. To this end, we derive a new ELBO combining that of~\cite{2020_Tripp} with a deep metric regularisation term of the form $\textbf{ELBO}_{\textbf{DML}}(\bm{\theta}, \bm{\phi}) = \text{Com}_{\text{label}}(\bm{\theta}, \bm{\phi}) - \text{Com}_{\text{metric}}(\bm{\theta}, \bm{\phi})$ (where $\text{Com}$ is an abbreviation for component). Both parts of our ELBO are computed using a labeled dataset $\mathcal{D}_{\mathbb{L}}=\{\bm{x}_{i}, f(\bm{x}_{i})\}_{i=1}^{N}$ representing the points acquired by BO in $N$ rounds. The first component $\text{Com}_{\text{label}}(\bm{\theta}, \bm{\phi})$ is from~\cite{2020_Tripp} and is defined through a set of weights $w(\bm{x}_{i})\propto f(\bm{x}_{i})$ for an input $\bm{x}_i \in \mathcal{D}_{\mathbb{L}}$ as 
$\text{Com}_{\text{label}}(\bm{\theta}, \bm{\phi}) = w(\bm{x}_{i})[\mathbb{E}_{q_{\bm{\phi}}(\bm{z}_{i}|\bm{x}_{i})}[\log g_{\bm{\theta}}(\bm{x}_{i}|\bm{z}_{i})] - \text{KL}(q_{\bm{\phi}}(\bm{z}_{i}|\bm{x}_{i})||p(\bm{z}))]$. 

The second component, however, is unique to this work and acts as a regulariser to shape latent spaces to be favourable for GP modelling (cf. Section~\ref{Subsec:continuous-DML}). In general, we can adapt \emph{any} deep metric loss to our formulation. Due to space constraints, we now instantiate our framework with soft-triplets and refer the reader to Appendix~\ref{App:contrastive} for other forms. Given $\mathcal{D}_{\mathbb{L}}$, we first construct all \emph{valid triplets} according to the threshold $\eta$ as introduced in Section~\ref{Subsec:continuous-DML}. Following a similar reasoning to~\cite{2020_Tripp}, we adopt a weighting scheme such that for any valid triplet $\left\langle \bm{x}_{i}, \bm{x}_{j}, \bm{x}_{k}\right\rangle$, we compute a weighting factor $w_{i,j,k}\propto w(\bm{x}_{i})w(\bm{x}_{j})w(\bm{x}_{k})\propto f(\bm{x}_i)f(\bm{x}_j)f(\bm{x}_k)$ that premultiplies the metric regulariser yielding
\begin{equation*}
    \text{Com}_{\text{metric}=\text{s-triple}} (\bm{\theta}, \bm{\phi}) = w_{ijk} \mathbb{E}_{q_{\bm{\phi}}(\underline{\bm{z}}|\underline{\bm{x}})}[\mathcal{L}_{\text{s-triple}}^{(\text{BO})}(\underline{\bm{z}})],
\end{equation*}
where we use $\underline{\bm{x}} = \left\langle \bm{x}_{i}, \bm{x}_{j}, \bm{x}_{k}\right\rangle$ and $\underline{\bm{z}} = \left\langle \bm{z}_{i}, \bm{z}_{j}, \bm{z}_{k}\right\rangle$ such that $q_{\bm{\phi}}(\underline{\bm{z}}|\underline{\bm{x}}) =  q_{\bm{\phi}}(\bm{z}_i|\bm{x}_i)q_{\bm{\phi}}(\bm{z}_j|\bm{x}_j)q_{\bm{\phi}}(\bm{z}_k|\bm{x}_k)$. 

Enumerating all possible triplets from $\mathcal{D}_{\mathbb{L}}$ can quickly become infeasible at a cost of $\mathcal{O}(N^3)$ and so in practice, we compute the above components over mini-batches\footnote{When mini-batching, weights need to be normalised. We absorb normalising constants into the learning rate. } of size $N_{\text{mini}} << N$.

{\textbf{Weighted Target Prediction VAEs:}} Whilst conducting our experiments, we observed that weighted ELBOs that simultaneously reconstruct and predict function values~\citep{2018_Eissman} performed strongly in the molecule generation tasks we consider. As such, we introduce a novel baseline that extends weighted retraining from~\cite{2020_Tripp} to include target prediction such that $\text{Com}^{(\text{TP})}_{\text{label}}(\bm{\theta}, \bm{\phi}) = \text{Com}_{\text{label}}(\bm{\theta}, \bm{\phi}) + w(\bm{x}_{i}) \mathbb{E}_{q_{\bm{\phi}}(\bm{z}_{i}|\bm{x}_i)}[\log h_{\bm{\theta}}(f(\bm{x}_{i})|\bm{z}_{i})]$ with $h_{\bm{\theta}}(f(\bm{x}_{i})|\bm{z}_{i})$ being an additional decoder network (sharing parameters with $g_{\bm{\theta}}(\cdot)$) geared towards reconstructing $f(\bm{x}_i)$ in $\mathcal{D}_{\mathbb{L}}$.

\subsection{Algorithm \& Theoretical Guarantees}\label{Sec:Theory}
The pseudocode in Algorithm~\ref{Algo:Overall} summarises our approach which consists of two main loops. In the first, a VAE is trained by optimising the ELBO derived in Section~\ref{Sec:NewVAE} arriving at optimal encoder and decoder parameters $\bm{\theta}_{\ell}^{\star}$ and $\bm{\phi}_{\ell}^{\star}$ (line 3). 

\begin{algorithm}[H]
        \caption{High-D BO with VAEs and Deep Metric Learning}
        \label{Algo:Overall}
        \begin{algorithmic}[1]
          \STATE \textbf{Inputs:} Budget $B$, frequency $q$, pre-trained VAE, $\mathcal{D}_{\mathbb{L}}$, and stopping criteria $\tau$
          \STATE \textbf{for} $\ell=1$ to $L \equiv \lceil B/q \rceil$:
          \STATE \hspace{1em} Solve $\bm{\theta}_{\ell}^{\star}, \bm{\phi}_{\ell}^{\star} = \arg\max_{\bm{\theta}, \bm{\phi}} \textbf{ELBO}_{\textbf{DML}}(\bm{\theta}, \bm{\phi})$  
          \STATE \hspace{1em} Compute $\mathcal{D}_{\mathbb{Z}}=\langle\bm{z}_{i}, f(\bm{x}_{i}) \rangle_{i=1}^{N}$ using the encoder \\
          %\hspace{2em}s.t. $\bm{z}_{i} \sim q_{\bm{\phi}_{\ell}^{\star}}(\cdot|\bm{x}^{(\text{l})}_{i}), \ \forall \bm{x}^{(\text{l})}_{i} \in \mathcal{D}_{\mathbb{L}}$   
          \STATE \hspace{1em} \textbf{for} $k=0$ to $q-1$ \text{and} $\text{EI}(\hat{\bm{z}}_{\ell, k+1}) \geq \tau$:
          \STATE \hspace{2em} Fit surrogate GP on $\langle \bm{z}_{i}, f(\bm{x}_{i})\rangle_{i=1}^{N}$ 
          \STATE \hspace{2em} Optimise EI for $\hat{\bm{z}}_{\ell, k+1}$ 
          \STATE \hspace{2em} Compute $\hat{\bm{x}} = g_{\bm{\theta}_{l}^{\star}}(\cdot|\hat{\bm{z}}_{\ell, k+1})$
          \STATE \hspace{2em} Evaluate $f(\hat{\bm{x}})$ and augment data $\mathcal{D}_{\mathbb{Z}}$ and $\mathcal{D}_{\mathbb{L}}$
          \STATE \hspace{1em} \textbf{end for}
          \STATE \hspace{-.1em} \textbf{end for}
          \STATE \textbf{Output:} $\bm{x}^{\star} = \arg\max_{\bm{x}\in \mathcal{D}_{\mathbb{L}}}f(\bm{x})$
        \end{algorithmic}
      \end{algorithm}
  Given $\bm{\phi}_{\ell}^{\star}$, we compute a dataset $\mathcal{D}_{\mathbb{Z}} = \langle \bm{z}_{i} = \mathbb{E}_{q_{\bm{\phi}_{\ell}^{\star}}(\bm{z}|\bm{x}_{i})}[ \bm{z}], f(\bm{x}_i)\rangle_{i=1}^{N}$ and execute a standard BO loop (lines 5-9) to determine new query points $\hat{\bm{z}}_{\ell, k+1}$ for evaluation. Decoding $\hat{\bm{z}}_{\ell, k+1}$, we then evaluate $\hat{\bm{x}}$ to obtain black-box values which are appended to the dataset. The process above runs for a total of $q$ iterations or until a stopping criterion ($\text{EI}(\hat{\bm{z}}_{\ell, k+1}) < \tau$) is met. After the termination of both loops, Algorithm~\ref{Algo:Overall} outputs the best candidate acquired so far (line 11). \\
%trim=left bottom right top
%\begin{wrapfigure}{r}{0.7\textwidth}
%    \begin{minipage}{0.7\textwidth}
%      \begin{algorithm}[H]
%        \caption{High-d BO with VAEs and Metric Learning}
%        \begin{algorithmic}[1]
%          \STATE \textbf{Inputs:} Budget $B$, frequency $q$, data-sets $\mathcal{D}_{\text{label}}$  $\mathcal{D}_{\text{unlabelled}}$
          %\STATE \textbf{for} $\ell=1$ to $\lceil B/q \rceil$:
          %\STATE \hspace{1em} Use $\mathcal{D}_{\text{label}}$ to compute $\{c_{i,j,k}\}_{i,j,k=1}^{N, N,N}$ and $\{w(\bm{x}_{i}^{(\text{l})})\}_{i=1}^N$ 
          %\STATE \hspace{1em} Solve $\bm{\theta}^{\star}, \bm{\phi}^{\star} = \arg\max_{\bm{\theta}, \bm{\phi}} \text{ELBO}_{\text{weighted}}^{(\text{s-triple})}(\cdot)$ 
          %\STATE \hspace{1em} Compute $\mathcal{D}_{\mathcal{Z}}^{(1)}=\{\bm{z}_{i}, f(\bm{x}_{i})\}_{i=1}^{N}$ s.t. $\bm{z}_{i} \sim q_{\bm{\phi}^{\star}}(\cdot|\bm{x}_{i}), \ \forall \bm{x}_{i} \in \mathcal{D}_{\text{label}}$  
          %\STATE \hspace{1em}
        %\end{algorithmic}
      %\end{algorithm}
    %\end{minipage}
%\end{wrapfigure}
\textbf{Theoretical Guarantees:} The remainder of this section is dedicated to providing vanishing regret guarantees for Algorithm~\ref{Algo:Overall}. Such results challenge standard notions of regret analysis in BO due to two coupled loops affecting feasibility sets. To illustrate, imagine that under a fixed $\ell$, $g_{\bm{\theta}^{\star}_\ell}(\cdot)$ does not possess the capacity to recover any input in $\mathcal{X}$. In such a case, although the BO loop (lines 5-9) can arrive at an optimum $\bm{z}^{\star}_{\ell}$, this point when decoded does not necessarily correspond to the true $\bm{x}^{\star} = \arg\max_{\bm{x} \in \mathcal{X}} f(\bm{x})$. To shed light on such behaviour, we define a new notion of cumulative regret that encompasses both $\ell$ and $k$: 
\begin{equation*}
\text{Reg}_{L, q}=
\sum_{\ell=1}^{L}\sum_{k=1}^{q} \left(f(\bm{x}^{\star}) -
\mathbb{E}_{\bm{x}_{\ell, k}^{\star} \sim g_{\bm{\theta}^{\star}_{\ell}}(\cdot |\hat{\bm{z}}_{\ell, k})}\left[f(\bm{x}^{\star}_{\ell, k})\right]\right).
\end{equation*}
\textbf{Domain Recovery Assumptions for VAEs:} In analysing asymptotic regret, we impose three assumptions (see Appendix~\ref{App:Theory}) two of which are standard practice in BO~\citep{RegretEI2017} bounding norms and posterior variances, while the third is new and corresponds to handling the representational power of $g_{\bm{\theta}^{\star}_\ell}(\cdot)$. Here we assume that as the outer loop progresses (i.e. as we gather more data), the VAE improves its ability to recover around $\bm{x}^{\star}$ such that for any $\ell \ge \ell^{\prime}$, there exists a $\bar{\bm{z}}(\ell) \in \mathcal{Z}$ where $\mathbb{P}\text{r}\left[\bm{x}^{\star}\sim g_{\bm{\theta}_{\ell}^{\star}}(\cdot | \bar{\bm{z}}(\ell))\right] \geq 1 - \gamma(\ell)$ with $\gamma(\ell)$ being a decreasing function. Although this assumption is less restrictive than having the VAE reconstruct $\bm{x}^{\star}$ at any iteration, it requires further analysis due to its relation to the generalisation properties of generative models. Providing a PAC Bayes generalisation bound of VAEs in the context of high-D BO undoubtedly constitutes an exciting direction for future work. Here, we instead motivate our assumption through a tightness analysis of necessary and sufficient conditions (see Appendix~\ref{App:Theory}) and run a dedicated experiment (see Appendix~\ref{App:AssumptionV}) showing its validity in one of our empirical settings. With this, we prove sub-linear convergence in non-convex black-box optimisation:
\begin{theorem}\label{Th: main_theorem}
Algorithm \ref{Algo:Overall} with $q = \lceil B^{\frac{2}{3}}\rceil$, $L = \lceil B^{\frac{1}{3}}\rceil$ and under the assumptions in Appendix~\ref{App:Theory} admits sub-linear regrets, i.e., 
$\lim_{B\to\infty}\frac{1}{B} \text{Reg}_{L, q} \rightarrow 0$, with a probability of at least $1-\delta$ for $\delta\in(0,1)$.
\end{theorem}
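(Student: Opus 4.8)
The plan is to bound the instantaneous regret $r_{\ell,k} := f(\bm{x}^{\star}) - \mathbb{E}_{\bm{x}_{\ell,k}^{\star}\sim g_{\bm{\theta}^{\star}_{\ell}}(\cdot|\hat{\bm{z}}_{\ell,k})}[f(\bm{x}_{\ell,k}^{\star})]$ at each pair $(\ell,k)$ and then sum. Writing $F_{\ell}(\bm{z}) := \mathbb{E}_{\bm{x}\sim g_{\bm{\theta}^{\star}_{\ell}}(\cdot|\bm{z})}[f(\bm{x})]$ for the latent-space objective induced by the $\ell$-th decoder and $\bm{z}^{\star}_{\ell} := \arg\max_{\bm{z}\in\mathcal{Z}}F_{\ell}(\bm{z})$ for its maximiser, I would split each term as
\begin{equation*}
r_{\ell,k} = \underbrace{\left(f(\bm{x}^{\star}) - F_{\ell}(\bm{z}^{\star}_{\ell})\right)}_{\text{recovery gap } \Delta_{\ell}} + \underbrace{\left(F_{\ell}(\bm{z}^{\star}_{\ell}) - F_{\ell}(\hat{\bm{z}}_{\ell,k})\right)}_{\text{inner BO regret}}.
\end{equation*}
This isolates a term measuring how well the frozen VAE at stage $\ell$ can represent $\bm{x}^{\star}$ (independent of $k$) from a term that is exactly the suboptimality of the inner EI loop on the fixed objective $F_{\ell}$.

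For the recovery gap I would invoke the third (new) domain-recovery assumption together with the boundedness of $f$ implied by the norm-bounding assumptions. Since $g_{\bm{\theta}^{\star}_{\ell}}(\cdot|\bar{\bm{z}}(\ell))$ places mass at least $1-\gamma(\ell)$ on $\bm{x}^{\star}$ and $f$ takes values in a range $[f_{\min},f(\bm{x}^{\star})]$ of width $C$, we obtain $F_{\ell}(\bm{z}^{\star}_{\ell})\ge F_{\ell}(\bar{\bm{z}}(\ell))\ge (1-\gamma(\ell))f(\bm{x}^{\star})+\gamma(\ell)f_{\min}$, hence $\Delta_{\ell}\le C\gamma(\ell)$. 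As $\Delta_{\ell}$ is constant across the $q$ inner steps, its total contribution is $q\sum_{\ell=1}^{L}\Delta_{\ell}\le Cq\sum_{\ell=1}^{L}\gamma(\ell)$; after dividing by $B\approx Lq$ this becomes $\tfrac{C}{L}\sum_{\ell=1}^{L}\gamma(\ell)$, a Ces\`aro mean of a vanishing sequence, which therefore tends to $0$ as $L\to\infty$.

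For the inner BO regret I would treat each outer iteration $\ell$ as a self-contained run of $q$ EI steps on the fixed function $F_{\ell}$ and apply the EI regret bound of the cited analysis under the posterior-variance and norm assumptions; the pre-existing $N$ points only enrich the conditioning set and so can only tighten the bound. This yields, with probability at least $1-\delta/L$, a per-stage cumulative regret $\sum_{k=1}^{q}(F_{\ell}(\bm{z}^{\star}_{\ell})-F_{\ell}(\hat{\bm{z}}_{\ell,k}))=\tilde{\mathcal{O}}(\sqrt{q\,\gamma_{q}})$, with $\gamma_{q}$ the maximal information gain over $q$ rounds. A union bound over the $L$ stages (confidence parameters inflated by $\log(L/\delta)$) makes all these events hold jointly with probability at least $1-\delta$, for a total inner contribution $\tilde{\mathcal{O}}(L\sqrt{q\,\gamma_{q}})$; divided by $B\approx Lq$ this is $\tilde{\mathcal{O}}(\sqrt{\gamma_{q}/q})\to 0$ since $\gamma_{q}=o(q)$ for standard kernels.

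Combining both pieces and substituting $q=\lceil B^{2/3}\rceil$, $L=\lceil B^{1/3}\rceil$ (so that $Lq=\Theta(B)$ and $L,q\to\infty$), both normalised terms vanish, establishing $\tfrac{1}{B}\mathrm{Reg}_{L,q}\to 0$ with probability at least $1-\delta$; indeed these exponents are precisely the ones that balance the two error sources under a $\gamma(\ell)=\Theta(1/\ell)$ decay. I expect the main obstacle to be the recovery term: converting the probabilistic reconstruction guarantee of the third assumption into a clean deterministic bound on the latent-objective gap $\Delta_{\ell}$, and verifying that the coupling between the two loops is benign — in particular that re-encoding all past points at each stage does not invalidate the per-stage EI bound and that the union bound across stages preserves the overall $1-\delta$ confidence.
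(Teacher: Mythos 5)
Your proposal is correct and follows essentially the same route as the paper's proof: both split each instantaneous regret into a VAE domain-recovery gap controlled by the third assumption (converted into a deterministic $\gamma(\ell)G_f$-type bound via boundedness of $f$, whose Ces\`aro average vanishes) and an inner-loop EI regret controlled by the cited EI analysis under the posterior-variance assumptions, then balance the two with $q=\lceil B^{2/3}\rceil$, $L=\lceil B^{1/3}\rceil$. The only cosmetic differences are that the paper's Lemma~\ref{App:additional_lemma_one} organises the per-step split around the incumbent $\xi_{\ell,k}$ rather than the latent optimum $F_\ell(\bm{z}^*_\ell)$ (re-deriving the EI bound inline instead of invoking it as a black box), and that it explicitly buckets the epochs $\ell<\tilde{\ell}$ preceding the onset of the recovery assumption into a separate $2qG_f(\tilde{\ell}-1)$ term that also vanishes after normalisation.
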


%\underline{\textbf{Proof Sketch:}} The full proof in Appendix \ref{App:Theory} operates in three steps. First, given a set of encoders and decoders from all re-training phases up to an epoch $\ell$, we show sub-linear asymptotic regret bounds of the order $\mathcal{O}(\sqrt{q}\log^c q)$ for a positive constant $c$. Subsequently, we focus the analysis on cumulative regret between epochs to arrive at $\text{Regret}_{L, q}(\langle \hat{\bm{z}}_{\ell, k}\rangle_{\ell,k}^{L, q}) \approx \mathcal{O}\left(q\ell^{\prime} + \sfrac{B\log^c q}{\sqrt{q}} + \sfrac{\int^q_{0}\gamma(a)da}{q}\right)$. Finally, assuming an improving recovery property and splitting the overall budget $B$ between the outer epochs and BO iterations yields the statement of the theorem. 

\section{Experiments \& Results}\label{Sec:Exps}
We apply Algorithm~\ref{Algo:Overall} to three high-dimensional, structured BO tasks (see Appendix~\ref{App:Exps} for full details).

\textbf{Topology Shape Fitting:} As a new toy problem, we employ the topology dataset from~\citep{sosnovik2017neural} and formulate an optimisation problem that seeks to generate a $40\times 40$ image (representing a mechanical design) such that the cosine similarity $\cos(\bm{x},\bm{x}')=\bm{x}\bm{x'}^T/\Vert \bm{x} \Vert \Vert \bm{x}' \Vert$ to a pre-defined target image is minimised. We use a VAE with latent space of dimension 2 and 10'000 data points (see code for exact architecture).\\
\textbf{Expression Reconstruction:} Following~\citep{2020_Tripp, kusner2017grammar}, we consider generating single-variable expressions from a formal grammar (e.g. \texttt{3*sin(2+x)}) and minimising a distance (based on Mean Squared Error of expression evaluated at fixed points) to a target equation \texttt{x*sin(x*x)}. We allow access to 40,000 data points and use the grammar VAE from~\citep{kusner2017grammar} with a latent space of dimension 25.\\
\textbf{Chemical Design:} Similar to~\citep{2020_Tripp}, we optimise the penalised water-octanol partition coefficient (PlogP) objective of molecules using the ZINC250K dataset~\citep{sterling2015zinc}. Each molecule is represented as a unique SMILES sequence and we utilise a Junction-Tree VAE~\citep{jin2018} with a latent space dimension of 56 for encoding and generating novel and valid molecules. 

%We baseline against state-of-the-art latent space BO (LBO)~\citep{2020_Tripp}, input warping techniques~\citep{2020_Rivers} applied to LBO (W-LBO), and VAEs with target predictions (R-LBO) as detailed in~\citep{2018_Lu, siivola2020}. In addition to those, we offer an \emph{original} baseline that combines cost-awareness from~\citep{lee2020cost} with LBO. Here, \emph{VAE reconstruction errors} are modelled as auxiliary GPs and fed to BO through a new acquisition $\alpha_{\text{EI}}/(\text{recon-error}())^{\lambda}$ with $\lambda$ being a scheduling parameter. 

\textbf{Choice of Metric Loss:} In our exposition, we introduced the triplet loss as a running example. Since our goal is to investigate the performance of general deep metric learning in conjunction with VAE BO, we implement three additional metric losses. We describe these losses in brief here and refer the reader to Appendix~\ref{App:AdditionalMetrics} for more details.\\
{\textbf{Simple Loss:}} For a pair of inputs $\left\langle \bm{x}_{i}, \bm{x}_{j}\right \rangle$, we  regularise the VAE using $\Delta f_{ij}= f(\bm{x}_{i}) - f(\bm{x}_j)$ as: $\text{Com}_{\text{metric}=\text{simple}} (\cdot, \cdot) \propto w_{ij} \mathbb{E}_{q_{\bm{\phi}}(\cdot)}\left[| \ ||\Delta \bm{z}_{ij}||-|\Delta f_{ij}| \ |\right]$ with $\Delta \bm{z}_{ij}= \bm{z}_{i} - \bm{z}_{j}$.\\
{\textbf{Contrastive Loss:}} The contrastive loss is another widely used deep metric~\cite{Contrastive_2006}. It operates on input pairs and separates latent encodings based on class label information. Like the triplet loss, the contrastive loss is not directly applicable to BO and requires similar modifications to those in Section~\ref{Subsec:continuous-DML} that we describe in Appendix~\ref{App:contrastive} leading us to: $\text{Com}_{\text{metric}=\text{s-cont}} (\cdot, \cdot) = w_{ij} \mathbb{E}_{q_{\bm{\phi}}(\cdot)}[\mathcal{L}_{\text{s-cont}}^{(\text{BO})}(\cdot)]$. \\
{\textbf{Log-Ratio Loss:}} The log-ratio loss~\citep{2019_Kim} is a continuous triplet loss that may also be applied to VAE BO. Given a triplet of inputs $\left\langle\bm{x}_{i}, \bm{x}_{j}, \bm{x}_{k}\right\rangle$ with $\bm{x}_{i}$ being the anchor, we define: 
$\text{Com}_{\text{metric}=\text{log-ratio}} (\cdot, \cdot) = w_{ijk} \mathbb{E}_{q_{\bm{\phi}}(\cdot)}[(\log \sfrac{||\Delta \bm{z}_{ij}||}{||\Delta \bm{z}_{ik}||} - \log \sfrac{|\Delta f_{ij}|}{|\Delta f_{ik}|})^{2}]$. 

We next highlight the findings of our experiments.

\subsection{DML Induces Discriminative Latent Spaces}\label{Sec:ExpDisc}
We assess the capability of DML to construct useful discriminative latent spaces for GPs by conducting modelling experiments across all three tasks. Utilising the same weight design as~\citep{2020_Tripp}, we add $\text{Com}_{\text{metric}}(\cdot)$ to the weighted ELBO and use the combined loss to train the VAE. Equipped with the trained encoder, we map points in the original space $\mathcal{D}_{\mathbb{L}}$ to a latent dataset $\mathcal{D}_{\mathbb{Z}}$ on which we fit a GP, using the labels of the original points. We then assess the latent space in terms of separation and ability for the GP to generalise.\\
\textbf{Separation in Latent Encodings:} In Section~\ref{Sec:Sol}, we noted that deep metric learning induces a discriminative latent space for regression by encouraging encoded inputs with similar function values to cluster together as well as encoded inputs with different function values to be separated. To confirm this behaviour, we study the the distribution of distances between points in the latent space of the molecule task (results on all other tasks can be found in Appendix~\ref{App:Exps}). 

%\Alex{remove Weierstrass and correlation matrix, keep only histograms but add naive and log-ratio too. }

%To visualise how metric learning works in practice we conduct an experiment on the Weierstrass function, know to be continuous but non-differentiable. We train a VAE with and without metric learning to encode and reconstruct the interval $[0,5] \subset \mathbb{R}$ on which we define the function. The latent space being also one-dimensional, this simple experiment allows us to visualise the effects of metric learning. From Figure~\ref{fig:toy_weierstrass_latent} we observe that while the latent space of the classical VAE (vanilla) is structured in the same way as the input space, the triplet-VAE latent space reveals the desired structure: points with low function values are brought together (rather to the lower part of the latent interval), points with higher values are brought together as well (rather to the higher part of the latent interval) and both groups of points are pushed away from each other.

%\begin{figure}[ht!]
%    \centering
 %   \includegraphics[scale=0.5]{}
 %   \caption{Weierstrass function (top) and visualisation of the Vanilla-VAE (middle) and Triplet-VAE (bottom) latent spaces.}
 %   \label{fig:toy_weierstrass_latent}
%\end{figure}
\begin{figure*}[t!]
    \centering
    \includegraphics[scale=0.21]{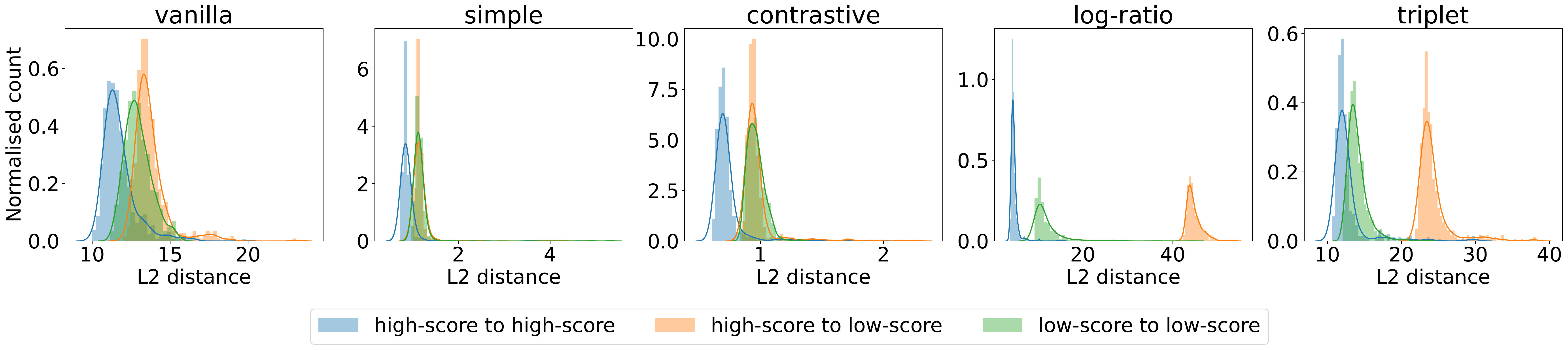}
    \caption{Distance distribution between latent representations in the molecule task. From left to right: Vanilla VAE, VAE trained with simple metric loss, VAE with contrastive loss, VAE trained with log-ratio loss and Triplet-VAE. The $x$ axis is the $L_2$ distance between points in latent space while the $y$ axis is the normalised count.}
    \label{fig:latent_dist_histogram}
\end{figure*}

To do so, we rank the inputs in $\mathcal{D}_{\mathbb{L}}$ according to their black-box function values and split them equally into two parts: high and low-scoring inputs. Mapping these inputs onto $\mathcal{Z}$ using our trained encoder, we compute the distances between all pairs of latent encodings. Consequently, attaining three groups of distances: 1) between high-scoring inputs and other high-scoring inputs, 2) low-scoring to low-scoring, and 3) high-scoring to low-scoring points. Figure~\ref{fig:latent_dist_histogram} shows the distribution of such distances when using different metric regularisers. It is clear, that augmenting VAEs with $\text{Com}_{\text{metric}}(\cdot)$ achieves the goal of latent space clustering in accordance with function values. Importantly, soft-triplets and log-ratio losses yield the best results where low-scoring and high-scoring inputs are tightly clustered together but inter-cluster separation is large. \\
\textbf{Generalisation:} To verify if clustered latent inputs improve GP generalisation, we unify the experimental setting across all tasks and utilise $80\%$ of $\mathcal{D}_{\mathbb{Z}}$ for training a sparse GP with 500 inducing points. Table~\ref{table:gp_pred_mse} reports the predictive log likelihood on $20\%$ held-out validation sets. This is repeated over 5 random splits. Although this data differs from that which would iteratively be acquired through BO, this experiment serves as a study of the effect of clustered inputs in GP regression\footnote{Note that we are interested in standard GP regression that is the most widely used in BO. In our experiments we also baseline against warped-input GPs~\cite{2020_Cowen}.} (an essential component inside a BO loop). The GP fit on a latent space induced by a Vanilla VAE is outperformed on all tasks demonstrating that metric learning-induced separation in the latent space aids GP generalisation. %\Alex{Either simplify or detail but what is the message of this paragraph/table ? Should we say that in general, metric learning GP is better fit than Vanilla, but since the datasets are not the same across methods since different points are acquired during BO, this does not correlate directly with regret results ?}

\begin{table}[h!]\caption{GP predictive log-likelihood $\pm$ 1-standard deviation on the validation set.} %(\textbf{keys}: V=Vanilla VAE, S=VAE trained with simple metric loss, L=VAE trained with log-ratio loss, C=VAE trained with Soft-Contrastive loss, T=VAE trained with Soft-Triplet loss).}
\centering
\begin{tabular}{c | c c c } 
\toprule
   & Top. & Expr. & Mol.  \\ 
\hline
Vanilla  & -1.87 (0.06) & -2.99 (0.06) & -1.79 (0.21) \\ 
Simple  & -2.57 (0.03) & -3.4 (0.08) & -2.05 (0.26) \\
% \red{R}  & \red{\textbf{-1.35 (0.01)}} & \red{-2.02 (0.06)} & \red{-2.68 (0.82)} \\
Cont.  & -1.75 (0.02) & \textbf{-1.39 (0.04)} & -1.75 (0.18) \\
LogR.  & \textbf{-1.05 (0.01)} & -3.26 (0.31) & -2.12 (0.39) \\
Triplet  & -2.03 (0.02) & -1.91 (0.08) & \textbf{-1.55 (0.35)} \\
\bottomrule
\end{tabular}
\label{table:gp_pred_mse}
\end{table}

\subsection{DML Improves High-D VAE BO}\label{Sec:Blo}
The experiments consist of three main steps (see Appendix~\ref{App:Exps}), namely training the VAE, fitting the GP and running the BO loop. As in~\citep{2020_Tripp} we periodically retrain the VAE after a set number of iterations and collected points. To ensure a fair comparison with prior work, we unify our experimental setup with~\citep{2020_Tripp} setting the retraining frequency of the VAE to $r=50$ steps and the rank weight parameter to $k=10^{-3}$ (see~\citep{2020_Tripp} for an ablation study on those two hyper-parameters). In this experiment, we pre-train the VAEs using \emph{all} available data per task (10,000 in topology, 40,000 in expression and 250,000 in molecules). 

\textbf{ELBO Specifications \& Baselines:} We call {LBO} the baseline method from~\citep{2020_Tripp}; {TP-LBO} is similar to~\citep{2020_Tripp} but adds the target prediction~\citep{2018_Eissman} component $\text{Com}_{\text{label}}^{(\text{TP})}$; {W-LBO} is again similar to~\citep{2020_Tripp} but uses a GP model with \emph{input warping} from~\cite{2020_Cowen}; S-LBO, C-LBO, LR-LBO, and T-LBO are our methods combining different metric losses with VAE BO. Table~\ref{table:warm_elbo_components} summarises the ELBO components used for each method (we refer the reader to Appendix~\ref{App:elbo_comp_details} for a detailed description). 

\begin{table}[h!]\caption{ELBO Components per baseline.}
\centering
\begin{tabular}{c| c} 
 \toprule
     Notation    & ELBO \\
 \hline
 LBO    & $\text{Com}_{\text{label}}(\cdot)$ \\ 
 TP-LBO  & $\text{Com}^{(\text{TP})}_{\text{label}}(\cdot)$ \\
 W-LBO  & $\text{Com}_{\text{label}}(\cdot)$ \& input warping\\ 
 S -LBO & $\text{Com}_{\text{label}}(\cdot) + \text{Com}_{\text{metric}=\text{simple}}(\cdot)$ \\
 C-LBO  & $\text{Com}_{\text{label}}(\cdot) + \text{Com}_{\text{metric=s-cont}}(\cdot)$ \\
 LR-LBO  & $\text{Com}_{\text{label}}(\cdot) + \text{Com}_{\text{metric=log-ratio}}(\cdot)$ \\
 T-LBO  & $\text{Com}_{\text{label}}(\cdot) + \text{Com}_{\text{metric=s-triple}}(\cdot)$ \\
 \bottomrule
\end{tabular}
\label{table:warm_elbo_components}
\end{table}
We additionally include random search ({RS}) in the topology and expression tasks and the molecule-specific baselines ({CEM-PI}, {DbAS}, {FBVAE} and {RWR}) from~\citep{2020_Tripp}.
\begin{figure*}[ht!]
    \centering
    \includegraphics[scale=0.107]{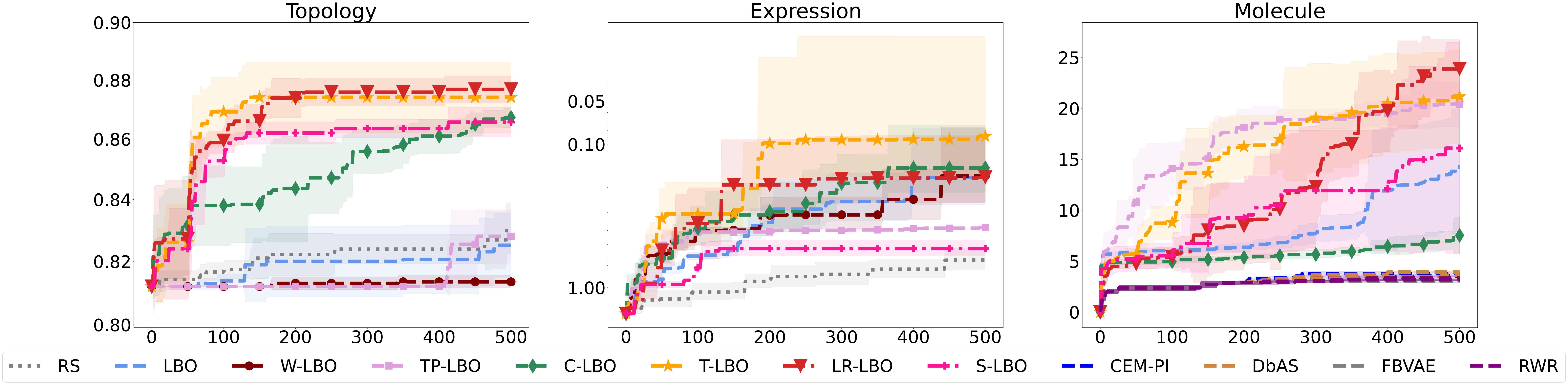}
    \caption{Cosine similarity score on the Topology task, regret on the Expression task and penalised logP-score for the Molecule task. The best value averaged over 5 seeds (and its standard deviation) is shown at each iteration.}
    \label{fig:score_warm}
\end{figure*}

\newpage

\textbf{Results:} Figure~\ref{fig:score_warm} summarises our findings on all three tasks averaged over 5 random seeds. First, metric learning improves {LBO}'s performance, which we found to be less competitive in the topology and molecule tasks. Metric losses based on triplet information (i.e. soft-triplet and log-ratio) consistently outperform all methods across all tasks. At the same time, contrastive BO achieves significant gains in the first two tasks (topology and expression) but under-performs in the molecule task. Simple metric losses, moreover, attain improvements to LBO in topology and molecule tasks but not in the expression task. Target prediction VAEs (i.e. {TP-LBO}), on the other hand, yield competitive results in the molecule task but fail in topology and expression. Both {T-LBO} and {LR-LBO} are performant in all three tasks. Finally, W-LBO achieved no gains on topology and expression tasks compared to LBO and thus was not run in the molecule task.

%\framebox{\parbox{\textwidth-4\fboxsep}{\textbf{A.II:} Deep metric learning improves high-d BO in latent spaces of VAEs achieving state-of-the-art performance.}}

%We title our methods as C-LBO (Appendix~\ref{App:contrastive}) and T-LBO and upon augmenting cost-awareness as EC-LBO and ET-LBO (see Figure~\ref{fig:score_warm}). As target prediction All our methods used $\text{Comp}_{\mathbb{L}}$ \emph{without} target prediction (i.e.,   

%we utilise $\text{Comp}_{\mathbb{L}}(\cdot)$ (without target prediction) and $\text{Comp}_{\text{s-triple}}(\cdot)$. 

\subsection{High-D VAE BO with Limited Black-Box Queries}\label{subsec:cold_start_res} 
The previous section demonstrated that metric learning aids VAE BO with access to large amounts of labelled data. In many black-box problems however, we are more interested semi-supervised setting that we introduced in Section~\ref{Sec:NewVAE}. Here, we only allow access to \emph{$1\%$} of the labels (chosen at random) from $\mathcal{D}_{\mathbb{L}}$ and pre-train the VAEs solely to reconstruct the structured inputs. Given those models, we then implement Algorithm~\ref{Algo:Overall}, executing BO and periodic refinement of the latent space based on label information (i.e. weighted retraining). During this fine-tuning phase, we use the baselines from Table~\ref{table:warm_elbo_components}, reporting the results in the molecule task in the main paper and all others in Appendix~\ref{App:Exps}. 

We implement Algorithm~\ref{Algo:Overall} identically for all baselines and impose a total budget of 1000 iterations. We terminate the loop early in case any baseline recovers previous state-of-the-art PLogP values~\cite{2020_Tripp}. For a fair comparison to~\cite{2020_Tripp}, we also pre-train their VAE using all unlabeled data but additionally incorporate an implementation with $1\%$ of $\mathcal{D}_{\mathbb{L}}$ which we entitle LBO-$1\%$. 

\begin{figure}[h!]
    \centering
    \includegraphics[scale=0.11]{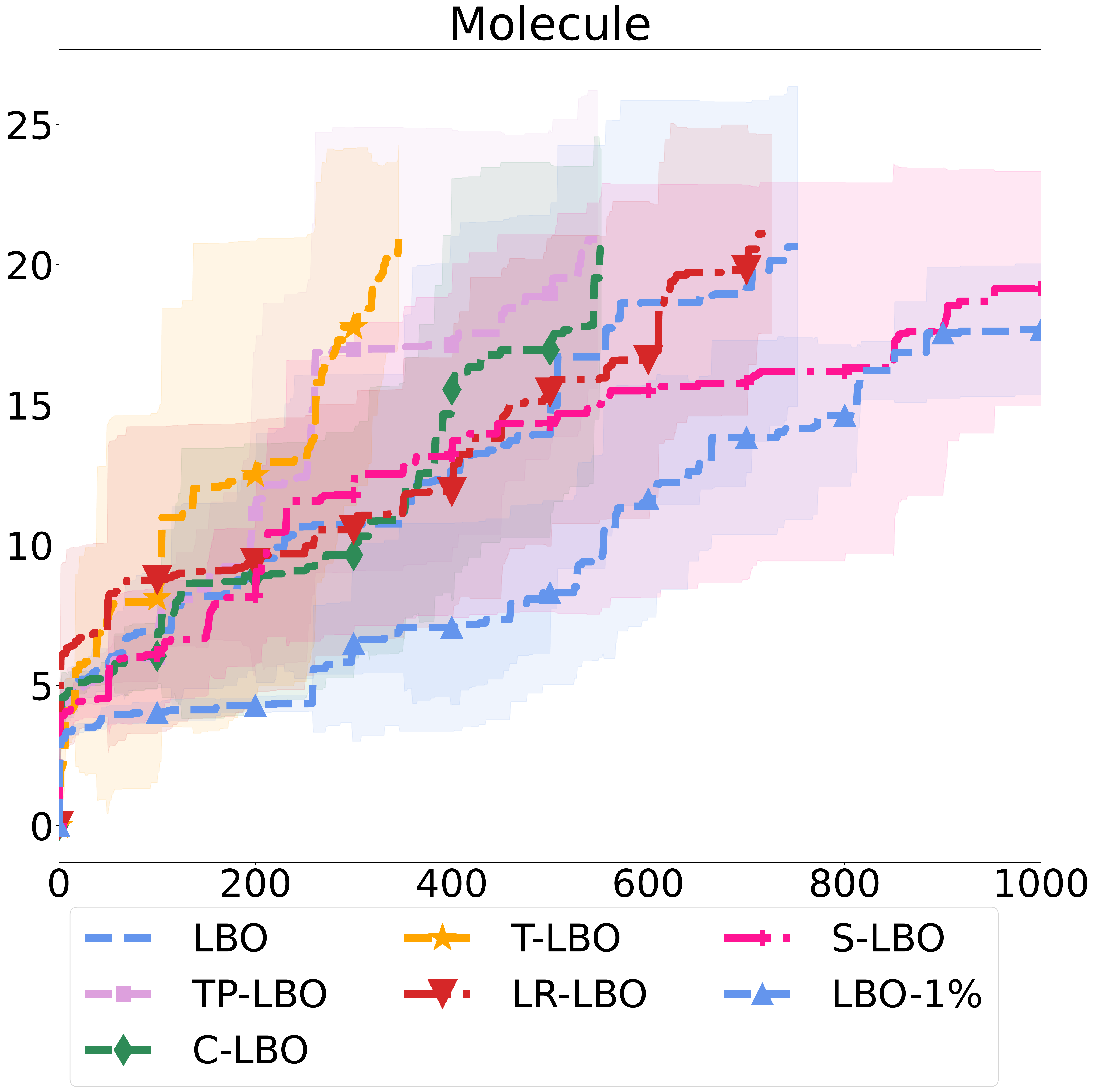}
    \caption{BO results for the PlogP molecule generation benchmark using the semi-supervised approach. Shaded regions denote $\pm$ 1-standard deviation over 5 random seeds.}
    \label{fig:score_cold}
\end{figure}

\textbf{Molecule Generation with Limited Labels:} % Due to high budget demands (resources and compute) needed for running experiments on molecule generation, we chose the top four best-performing algorithms from the previous section to maximise penalised logP.
Figure~\ref{fig:score_cold} summarises our results conveying that all algorithms except LBO-$1\%$ and S-LBO can indeed recover the best PlogP score from the Section~\ref{Sec:Blo} while needing few black-box evaluations \emph{in total}\footnote{By total, we mean \emph{all} labelled data used to train the GPs and the VAEs in weighted retraining as well as the data acquired in BO.}. It takes on average 2437 evaluations for {T-LBO}, 2596 for {TP-LBO}, and 2753 for {LBO}. Again, we realise that metric learning is beneficial since T-LBO outperforms other algorithms reducing total black-box evaluation demands by $\approx 11.5 \%$ compared to LBO and by $\approx 6.1\%$ versus TP-LBO. On the other hand, C-LBO and LR-BO provide competitive baselines to LBO but underperform compared to both TP-LBO and T-LBO. Both S-LBO and  LBO-$1\%$ fall short when data is limited, exhausting all 1000 iterations while attaining only about $86\%$ and $77\%$ respectively of the objective score from the fully supervised setting of Section~\ref{Sec:Blo}. To the best of our knowledge, this is the first recorded result of penalised logP molecule values with thousands rather than hundreds of thousands of \emph{total} black-box evaluations. 

From the above results, we conclude that: 1) BO can still be successful in high-d structured tasks when \emph{only limited labels are available} as long as algorithms use the semi-supervised setting, 2) our adaptations of triplet metric learning methods further those improvements, consistently achieving the best performance across all tasks, and 3) our proposed weighted target prediction VAEs indeed present a competitive baseline in the molecule task.

%\begin{figure}[t!]
%    \centering
%    \includegraphics[scale=0.12]{figures/bo/score_cold_molecule.pdf}
%    \caption{BO results for PlogP on molecule task with semi-supervised approach.}
%    \label{fig:score_cold}
%\end{figure}

Additional results comparing our method to competitor methods on the ZINC 250K PlogP molecule generation benchmark can be found in Table~\ref{table:sota_table}. It is worth noting that these alternative methods do not necessarily use a BO-based approach. Nonetheless, this comparison reveals that our method, to the best of our knowledge, outperforms other methods in raw PlogP score while using only a fraction of the labelled data they require.

\begin{table}[]\caption{Top scoring molecules on the penalised logP (PlogP) benchmark. Number of function evaluations given in the right column. Full results are provided in Appendix B (We note that we achieve a new state-of-the-art score of 38.57 using only 7,750 function evaluations i.e. ca. 3\% of the available labels.)}
\centering
\begin{tabular}{c c c} 
 \toprule
 Method & PlogP & \# evals \\
 \hline
 ZINC-250K~\citep{sterling2015zinc} & $4.52$ & -\\ \hline
 JANUS~\citep{2021_janus} & $21.92$ & 250,500\\
 IS-MI~\citep{2021_Notin} & $27.60$ & 250,500\\
 LSO~\citep{2020_Tripp} & $27.84$ & 250,500\\
 All SMILES VAE~\citep{2019_Alperstein} & $29.80$ & 250,500\\ \hline
%  ST-LBO (ours) & $24.06$ & 2,300\\
%  ST-LBO (ours) & $34.83$ & 3,450\\
 T-LBO (ours) & \textbf{38.57} & 7,750\\
 \bottomrule
\end{tabular}
\label{table:sota_table}
\end{table}

\section{Related Work}\label{Sec:RelatedWork}

%\begin{figure}
%    \centering
%    \includegraphics[width=.5\textwidth]{Ryan/Assets/RelatedWork.pdf}
%    \caption{Caption}
 %   \label{fig:my_label}
%\end{figure}

\textbf{High-Dimensional BO:} 
High-d BO schemes can be categorised into methods based on embeddings and methods that rely on assumptions about the problem structure. The foundational work on embedding-based methods was undertaken in \citep{2016_Wang} where random embeddings were used to scale BO to a billion dimensions. This work was built on in subsequent work \citep{2014_Garnett, 2017_Rana, 2017_Li, 2020_Letham}. Methods based on the assumption of additive structure in the objective have also been widely applied \citep{2015_Kandasamy, 2017_Gardner, 2020_Binois}. Methods that rely on assumptions about the problem structure include local modelling approaches such as TuRBO \citep{2019_Eriksson} or context-specific kernels \citep{2018_Oh} as well as methods based on deep kernel learning \citep{2016_Wilson, 2018_Jean}. None of the aforementioned approaches however are well-suited to high-dimensional and structured input spaces. BO over structured inputs such as strings \citep{2020_Moss_Boss, 2020_Moss}, graphs \citep{2019_Dhamala} and combinatorial inputs \citep{2020_Deshwal} is an active area of research. Non VAE-based approaches however, lack the capabilities to generate novel structures such as molecules \citep{2018_Gomez} without invoking domain-specific engineering such as synthesis graphs as in \citep{2020_Korovina}. VAE-based methods are prevalent \citep{2018_Gomez, 2020_Griffiths, 2018_Lu, 2018_Eissman, 2020_Siivola, 2020_Tripp, 2019_Zhang} yet suffer from the outstanding question of how best to encourage label guidance, the problem addressed in this paper.

% \Haitham{Maybe something like this: Deep metric learning is a fast growing field of research. It is beyond the scope of this paper to present a comprehensive survey of all such techniques~\citep{DMLs}. Our particular setting is in VAE-BO and as such, we use this section to survey the related work on combining deep metric learning with VAEs.}

\textbf{Deep Metric Learning:} While many approaches aim to extend deep metric losses to regression settings \citep{2020_Thoma} or construct discriminative latent spaces by other means \citep{2018_Tzoreff, 2021_Ha}, we use this section to survey related work on combining deep metric learning with VAEs. To the best of our knowledge ours is the first work to consider deep metric learning VAEs in the context of BO. In \citep{2018_Ishfaq} a triplet loss VAE tailored for classification tasks is introduced. \citep{2021_Tanaka} use a contrastive loss under weak supervision with an application towards finding disentangled representations of musical instrument sounds. The closest deep metric learning and VAE model to ours is that of \citep{2021_Koge} where the authors use the continuous log-ratio loss \citep{2019_Kim} designed for prediction tasks. Although applicable to continuous domains, the work in~\citep{2019_Kim} relies on data augmentation protocols that assume prior knowledge of the black-box. We discuss additional related work in Appendix~\ref{App:additional_related_work}.

% 1. Method for high-D BO with VAEs using deep metric learning which addresses the need for discriminative latent spaces in VAE BO
% 2. Regret proof for VAE-BO
% 3. SOTA performance on property-guided molecule generation
% 4. Achieve these results using just 1% of the available labelled data.

% Future Work

% 1. Important to note that this method is not limited to structured input spaces, the principles outlined in this paper may be applied to continuous input spaces, yielding general high-dimensional Bayesian optimisation.
% 2. More principled objectives for the molecules task.

% We propose a method for high-D BO with VAEs using deep metric learning addressing the needs for discriminative latent spaces in VAE BO. We present a proof of sublinear regret for our method, achieving SOTA performance on property-guided molecule generation. Importantly, in our semi-supervised setting, we achieve comparable performance to previous results using just 1\% of the available labelled data. Future work could feature the exploration of different forms of smooth and continuous metric losses and more principled objectives for molecule generation \citep{2019_Brown, 2020_moses, 2021_Griffiths}. Our theoretical results are predicated on an assumption of coverage over $\bm{x*}$ for VAEs. In future work, we wish to relax this assumption and prove a PAC-Bayes generalisation bound. Lastly, we hope that the principles outlined in this paper may be used to design VAE-based BO schemes that operate successfully in continuous input spaces, achieving the goal of high-dimensional Bayesian optimisation.

\section{Conclusion}

We propose a method for high-d BO with VAEs using deep metric learning to affect a discriminative latent space. We instantiate our method using four different metric losses, demonstrating state-of-the-art performance on the ZINC-250K penalised logP molecule generation benchmark. Importantly, in the semi-supervised setting, comparable performance to previous approaches is achieved using just 1\% of the available labelled data and superior performance with ca. 3\% of the labels. Additionally, we present a proof of sublinear regret and introduce a new competitive baseline that combines weighted retraining with target prediction yielding favourable results in molecule generation.  

Future work could feature the exploration of different forms of metric losses \citep{2013_Bellet} as well as more chemically-principled objectives for molecule generation \citep{2019_Brown, 2020_moses, 2021_Griffiths}. Our theoretical results are predicated on an assumption of coverage over $\bm{x}^{\star}$ for VAEs. In subsequent work, we wish to relax this assumption and prove a PAC-Bayes generalisation bound. We hope that the principles outlined in this paper may be used to design VAE-based BO schemes that operate successfully over continuous and structured input spaces.

\bibliographystyle{unsrtnat}
\bibliography{bib}

\newpage
\etocsettocdepth.toc {subsection}

% \addstufftotoc{\nobreak\smallskip\protect\centering{\includegraphics[height=22\baselineskip]{Ryan/Assets/LSBONewest.pdf}}}

%\newpage
\appendix
%\appendixpage
%\localtableofcontents
\onecolumn
\tableofcontents

\section{Derivations of Evidence Lower Bounds}
In this section, we describe softening strategies for the triplet and contrastive metric loss functions and provide derivations of the associated ELBO objectives.

\subsection{Contrastive Loss VAE \& $\mathcal{L}_{\text{cont.}}$}\label{App:contrastive}
\paragraph{Contrastive Loss:} Most frequently encountered in classification settings, the contrastive loss aims to minimise the Euclidean distance between inputs, e.g. images, of the same class whilst maximising the distance between inputs of different classes. The features learned by contrastive loss deep metric learning have been observed to improve generalisation performance in downstream classification tasks \cite{2020_Khosla} a property we hypothesise to be important in BO. Concretely, for two inputs $\left\langle \bm{x}_{i}, c_i\right\rangle$ and $\left\langle \bm{x}_{j}, c_j \right \rangle$ with $c_i$ and $c_j$ being class labels, a contrastive loss in its most basic form~\cite{hadsell2006dimensionality} can be computed as: $\mathcal{L}_{\text{cont.}} (\cdot) \propto ||\bm{z}_{i} - \bm{z}_{j}||_{q}  \ \ \text{if $c_i = c_j$} \ \ \text{or} \ \ \mathcal{L}_{\text{cont.}} (\cdot) \propto \max\{0, \rho - ||\bm{z}_{i} - \bm{z}_{j}||_{q}\} \ \ \text{if $c_i \neq c_j$}$, where $||\cdot||_q$ denotes a $q$ norm, $\bm{z}_{i}$ and $\bm{z}_{j}$ are latent encodings of $\bm{x}_{i}$ and $\bm{x}_{j}$, and $\rho$ is a tuneable margin that defines a radius around $||\bm{z}_{i} - \bm{z}_j||_{q}$. Clearly, dissimilar pairs contribute to the loss only if $||\bm{z}_{i} - \bm{z}_j||_{q} \leq \rho$, otherwise $\mathcal{L}_{\text{cont.}}(\cdot) = 0$. This definition allows immediate extension to continuous class labels by assuming $c_i = c_j \text{ if } |f(\bm{x}_i) - f(\bm{x}_j)| < \eta$ and $c_i\neq c_j$ otherwise, where $\eta$ is a threshold parameter controlling the granularity of class separation. Finally, in order to connect the value of the contrastive loss $\mathcal{L}_{\text{cont}}(\cdot)$ with the magnitude of class mismatch ~\cite{pan2018}, the associated margin is chosen as $\rho_{i,j} = |f(\bm{x}_i) - f(\bm{x}_j)|$.

\paragraph{Soft Contrastive Loss:}
As shown in Figure \ref{Fig:SoftContrastive}, $\mathcal{L}_{\text{cont}}(\cdot)$ as defined above exhibits discontinuous behaviour around the line $|f(\bm{x}_i) - f(\bm{x}_j)| = \eta$ which can be detrimental for GP regression. To remedy this issue, we introduce two contrastive penalty measures $\mathcal{L}^{(\text{BO})}_{\text{cont}, 1}(\cdot)$ and $\mathcal{L}^{(\text{BO})}_{\text{cont}, 2}(\cdot)$ defined as:
\begin{align}\label{eq:soft-cont}
    \mathcal{L}^{(\text{BO})}_{\text{cont}, 1}(\bm{z}_i, \bm{z}_j) =& \text{ReLU}\bigg[\frac{1}{\eta}\max\{\eta, \Delta_{\bm{z}}\}\times \left[\min\{\eta, \Delta_{\bm{z}}\} - \Delta_f\right]\bigg]\mathbbm{1}_{\{\Delta_f < \eta\}},\\\nonumber
    \mathcal{L}^{(\text{BO})}_{\text{cont}, 2}(\bm{z}_i, \bm{z}_j) =& \text{ReLU}\bigg[\left[2 - \frac{1}{\eta}\min\{\eta, \Delta_{\bm{z}}\}\right]\times \left[\Delta_f-\max\{\eta, \Delta_{\bm{z}}\}\right]\bigg]\mathbbm{1}_{\{\Delta_f \ge \eta\}}
\end{align}
where $\mathbbm{1}_{\{A\}}$ is a characteristic function for condition $A$ \footnote{I.e. $\mathbbm{1}_{\{A\}} = 1$ if 
condition $A$ is met and $0$ otherwise}, $\Delta_{\bm{z}} = ||\bm{z}_i - \bm{z}_j||_q$, $\Delta_f = |f(\bm{x}_i) - f(\bm{x}_j)|$, and $\eta > 0$ is a proximity hyperparameter. The first penalty measure  $\mathcal{L}^{(\text{BO})}_{\text{cont}, 1}(\bm{z}_i, \bm{z}_j)$ discourages  points to be distant in the latent space if their objective function values are close. The first factor $\frac{1}{\eta}\max\{\eta, \Delta_{\bm{z}}\}$ plays the role of a multiplicative weight, scaling proportionally to $\Delta_{\bm{z}}$ and allowing us to tune the value $\mathcal{L}^{(\text{BO})}_{\text{cont}, 1}(\bm{z}_i, \bm{z}_j)$ proportionally to $\Delta_{\bm{z}}$. The second factor $\min\{\eta, \Delta_{\bm{z}}\} - \Delta_f$ imposes a gradual change of the contrastive loss near the line $|f(\bm{x}_i) - f(\bm{x}_j)| = \eta$ (cf. Figure \ref{Fig:SoftContrastive}). The function $\mathcal{L}^{(\text{BO})}_{\text{cont}, 2}(\bm{z}_i, \bm{z}_j)$ discourages points to be close in the latent space if their objective function values are distant. This function also comprises a multiplicative weight $[2 - \frac{1}{\eta}\min\{\eta,\Delta_{\bm{z}}\}]$ that decreases linearly for $0 < \Delta_{\bm{z}}\le \eta$ and a smoothing factor $\Delta_f - \max\{\eta, \Delta_{\bm{z}}\}$ which assures smooth behaviour around the line $|f(\bm{x}_i) - f(\bm{x}_j)| = \eta$ (cf. Figure \ref{Fig:SoftContrastive}).   
\begin{figure*}
\centering
\subfloat{\includegraphics[scale=.25]{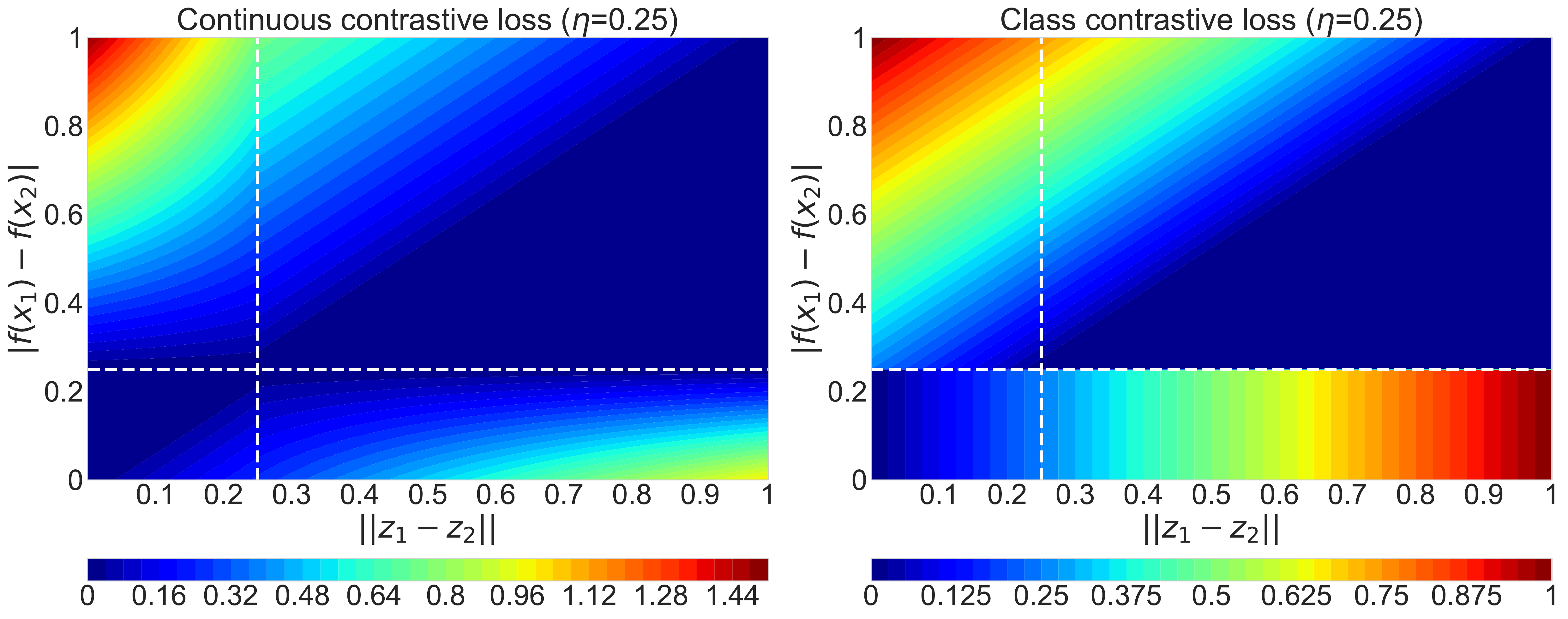}}
\caption{\textbf{Soft Contrastive Loss}. The right figure shows the discontinuity in the original class contrastive loss: $\mathcal{L}_{\text{cont}}(\cdot)\propto \Delta_{\bm{z}}$ if $|f(\bm{x}_i) - f(\bm{x}_j)| < \eta$ and $\mathcal{L}_{\text{cont}}(\cdot)\propto \text{ReLU}(|f(\bm{x}_i) - f(\bm{x}_j)| - \Delta_{\bm{z}})$  if $|f(\bm{x}_i) - f(\bm{x}_j)| \ge \eta$ in the absence of the softening mechanism presented in equation (\ref{eq:soft-cont}). The discontinuity appears for all values of $\Delta_{\bm{z}}$ when approaching the line $|f(\bm{x}_i) - f(\bm{x}_j)| = \eta$ from below, and for values $0 <\Delta_{\bm{z}} \le \eta$ when approaching this line from above. The softening mechanism allows to make a continuous transition between these two regimes. As shown in Table \textcolor{blue}{1} in the main paper, smooth behaviour of the contrastive loss facilitates GP regression.}
\label{Fig:SoftContrastive}
\end{figure*} 
Combining these two measure gives a soft contrastive loss over continuous support:
\begin{equation*}
    \mathcal{L}^{(\text{BO})}_{\text{s-cont}}(\bm{z}_i,\bm{z}_j) = \mathcal{L}^{(\text{BO})}_{\text{cont}, 1}(\bm{z}_i, \bm{z}_j) + \mathcal{L}^{(\text{BO})}_{\text{cont}, 2}(\bm{z}_i, \bm{z}_j)
\end{equation*}

\paragraph{Variational Soft Contrastive Loss:} To construct the joint latent model for this contrastive loss let us introduce a pair of  Bernoulli random variables $a_{ij}$ and $b_{ij}$ for input points $\bm{x}_i, \bm{x}_j\in\mathcal{D}_{\mathbb{L}}$ such that $a_{ij} = \mathbbm{1}_{\{|f(\bm{x}_i) - f(\bm{x}_j)| < \eta\}}$ and $b_{ij} = \mathbbm{1}_{\{|f(\bm{x}_i) - f(\bm{x}_j)| \ge \eta\}}$.
Given latent representations $\bm{z}_i, \bm{z}_j\sim q_{\bm{\phi}}(\cdot|\bm{x}_i, \bm{x}_j)$ for input points $\bm{x}_i$, $\bm{x}_j\in\mathcal{D}_{\mathbb{L}}$ respectively, we set probability distributions for the random variables $a_{ij}$ and $b_{ij}$ to be:
\begin{align*}
    &\mathbb{P}[a_{ij}=1|\bm{z}_i, \bm{z}_j] = e^{-\mathcal{L}^{(\text{BO})}_{\text{cont}, 1}(\bm{z}_i,\bm{z}_j)}, \\ &\mathbb{P}[b_{ij}=1|\bm{z}_i, \bm{z}_j] = e^{-\mathcal{L}^{(\text{BO})}_{\text{cont}, 2}(\bm{z}_i,\bm{z}_j)}.
\end{align*}
In other words, random variable $a_{ij}$ is more likely to take on the value $1$ for latent inputs $\bm{z}_i,\bm{z}_j$ with close function values (i.e. $|f(\bm{x}_i) - f(\bm{x}_j)| < \eta$) if the distance between these two points in the latent space is small (i.e. $\Delta_{\bm{z}} \le |f(\bm{x}_i) - f(\bm{x}_j)|$). Random variable $b_{ij}$ on the other hand, is more likely to take on a value of $1$ for latent inputs $\bm{z}_i,\bm{z}_j$ with distant function values (i.e. $|f(\bm{x}_i) - f(\bm{x}_j)| \ge \eta$) if these two  points in the latent space are distant (i.e. $\Delta_{\bm{z}} > |f(\bm{x}_i) - f(\bm{x}_j)|$). It is important to note that the labelled dataset $\mathcal{D}_{\mathbb{L}} = \langle\bm{x}^{(\text{l})}_n, f(\bm{x}^{(\text{l})}_n)\rangle^N_{n=1}$ provides us with realisations of the random variables $\mathcal{A} = \langle a_{ij}\rangle^{N,N}_{i,j=1}$ and $\mathcal{B} = \langle b_{ij} \rangle^{N,N}_{i,j=1}$ for all pairs of input points $\bm{x}^{(\text{l})}_{i,j} = \langle \bm{x}_i,\bm{x}_j \rangle$. Merging these realisations with the unlabelled data $\mathcal{D}_{\mathbb{U}}= \langle\bm{x}^{(\text{u})}_m\rangle^M_{m=1}$ for the joint log-likelihood gives: 
\begin{align*}
    \log p_{\bm{\phi},\bm{\theta}}(\mathcal{D}_{\mathbb{U}}, &\mathcal{D}_{\mathbb{L}},\mathcal{A},\mathcal{B})  = \log\int p(\mathcal{A}|\bm{z}_{\mathbb{L}})p(\mathcal{B}|\bm{z}_{\mathbb{L}})p_{\bm{\theta}}(\mathcal{D}_{\mathbb{L}}|\bm{z}_{\mathbb{L}})p_{\bm{\theta}}(\mathcal{D}_{\mathbb{U}}|\bm{z}_{\mathbb{U}})p(\bm{z}_{\mathbb{L}})p(\bm{z}_{\mathbb{U}})d\bm{z}_{\mathbb{L}}d\bm{z}_{\mathbb{U}}
\end{align*}
where marginalisation is over the collection of latent points  $\bm{z}_{\mathbb{L}} = \langle \bm{z}^{(\text{l})}_n\rangle^N_{n=1}$ and $\bm{z}_{\mathbb{U}} = \langle\bm{z}^{(\text{u})}_m\rangle^M_{m=1}$. Adopting the weight function $w(\bm{x}^{(\text{l})})\propto f(\bm{x}^{(\text{l})})$ for labelled input datapoints $\bm{x}^{(\text{l})}\in\mathcal{D}_{\mathbb{L}}$ from ~\cite{2020_Tripp} and utilising a weighted log-likelihood formulation~\cite{WeightedLogLikelihood}:
\begin{align*}
    &\log p_{\bm{\phi},\bm{\theta}}(\mathcal{D}_{\mathbb{U}}, \mathcal{D}_{\mathbb{L}},  \mathcal{A},\mathcal{B}) = \log\bigg[\int\mathcal{G}\prod_{n=1}^N\left[p_{\bm{\theta}}(\bm{x}^{(\text{l})}_n|\bm{z}^{(\text{l})}_n)p_{\bm{\theta}}(f(\bm{x}^{(\text{l})}_n)|\bm{z}^{(\text{l})}_n)p(\bm{z}^{(\text{l})}_n)\right]^{w(\bm{x}^{(\text{l})}_n)} \prod_{m=1}^Mp_{\bm{\theta}}(\bm{x}^{(\text{u})}_m|\bm{z}^{(\text{u})}_m)p(\bm{z}^{(\text{u})}_{m})d\bm{z}_{\mathbb{L}}d\bm{z}_{\mathbb{U}}\bigg], 
\end{align*}
where $\mathcal{G} = \left[\prod^{N,N}_{i,j=1}p(a_{ij}|\bm{z}^{(\text{l})}_{i,j})p(b_{ij}|\bm{z}^{(\text{l})}_{i,j})\right]^{w_{i,j}}$ with $w_{i,j} = w(\bm{x}^{(\text{l})}_i)w(\bm{x}^{(\text{l})}_j)$ and $\bm{z}^{(\text{l})}_{i,j} = \langle\bm{z}^{(\text{l})}_i, \bm{z}^{(\text{l})}_j\rangle$. Introducing weighted variational distributions $q_{\bm{\phi}}(\bm{z}_{\mathbb{L}}, \bm{z}_{\mathbb{U}}|\mathcal{D}_{\mathbb{L}}, \mathcal{D}_{\mathbb{U}}) = q^{(\mathbb{L})}_{\bm{\phi}}(\bm{z}_{\mathbb{L}}|\mathcal{D}_{\mathbb{L}})q^{(\mathbb{U})}_{\bm{\phi}}(\bm{z}_{\mathbb{U}}|\mathcal{D}_{\mathbb{U}})$, where $q^{(\mathbb{L})}_{\bm{\phi}}(\bm{z}_{\mathbb{L}}|\mathcal{D}_{\mathbb{L}}) = \prod_{n=1}^N\left[q^{(\mathbb{L})}_{\bm{\phi}}(\bm{z}^{(\text{l})}_n|\bm{x}^{(\text{l})}_n)\right]^{w(\bm{x}^{(\text{l})}_n)}$ and $q^{(\mathbb{U})}_{\bm{\phi}}(\bm{z}_{\mathbb{U}}|\mathcal{D}_{\mathbb{U}}) = \prod_{m=1}^{M}q^{(\mathbb{U})}_{\bm{\phi}}(\bm{z}^{(\text{u})}_{m}|\bm{x}^{(\text{u})}_m)$ we obtain:
\begin{align*}
    \log & p_{\bm{\phi},\bm{\theta}}(\mathcal{D}_{\mathbb{U}}, \mathcal{D}_{\mathbb{L}},  \mathcal{A},\mathcal{B}) = \log\Bigg[\int\mathcal{G}\prod_{n=1}^N\left[\frac{p_{\bm{\theta}}(\bm{x}^{(\text{l})}_n|\bm{z}^{(\text{l})}_n)p_{\bm{\theta}}(f(\bm{x}^{(\text{l})}_n)|\bm{z}^{(\text{l})}_n)p(\bm{z}^{(\text{l})}_n)}{q^{(\mathbb{L})}_{\bm{\phi}}(\bm{z}^{(\text{l})}_n|\bm{x}^{(\text{l})}_n)}\right]^{w(\bm{x}^{(\text{l})}_n)}\times\\\nonumber
    &\hspace{25em}\prod_{m=1}^M\frac{p_{\bm{\theta}}(\bm{x}^{(\text{u})}_m|\bm{z}^{(\text{u})}_{m})p(\bm{z}^{(\text{u})}_{m})}{q^{(\mathbb{U})}_{\bm{\phi}}(\bm{z}^{(\text{u})}_{m}|\bm{x}^{(\text{u})}_m)}q^{(\mathbb{L})}_{\bm{\phi}}(\bm{z}_{\mathbb{L}}|\mathcal{D}_{\mathbb{L}})q^{(\mathbb{U})}_{\bm{\phi}}(\bm{z}_{\mathbb{U}}|\mathcal{D}_{\mathbb{U}})d\bm{z}_{\mathbb{L}}d\bm{z}_{\mathbb{U}}\Bigg].
\end{align*}
Using Jensen's inequality:
\begin{align*}
    \log p_{\bm{\phi},\bm{\theta}}(\mathcal{D}_{\mathbb{U}}, \mathcal{D}_{\mathbb{L}},  \mathcal{A},\mathcal{B}) &\ge \int\sum_{m=1}^M\log\left[\frac{p_{\bm{\theta}}(\bm{x}^{(\text{u})}_m|\bm{z}^{(\text{u})}_{m})p(\bm{z}^{(\text{u})}_{m})}{q^{(\mathbb{U})}_{\bm{\phi}}(\bm{z}^{(\text{u})}_{m}|\bm{x}^{(\text{u})}_m)}\right]q^{(\mathbb{U})}_{\bm{\phi}}(\bm{z}_{\mathbb{U}}|\mathcal{D}_{\mathbb{U}})d\bm{z}_{\mathbb{U}} \\\nonumber
    &+\int\sum_{n=1}^Nw(\bm{x}^{(\text{l})}_n)\log\left[\frac{p_{\bm{\theta}}(\bm{x}^{(\text{l})}_n|\bm{z}^{(\text{l})}_n)p_{\bm{\theta}}(f(\bm{x}^{(\text{l})}_n)|\bm{z}^{(\text{l})}_n)p(\bm{z}^{(\text{l})}_n)}{q^{(\mathbb{L})}_{\bm{\phi}}(\bm{z}^{(\text{l})}_n|\bm{x}^{(\text{l})}_n)}\right] q^{(\mathbb{L})}_{\bm{\phi}}(\bm{z}_{\mathbb{L}}|\mathcal{D}_{\mathbb{L}})d\bm{z}_{\mathbb{L}}\\\nonumber
    &+\int\left[\sum^{N,N}_{i,j=1}w_{i,j}\log\left[p(a_{ij}|\bm{z}^{(\text{l})}_{i,j})p(b_{ij}|\bm{z}^{(\text{l})}_{i,j})\right]\right]q^{(\mathbb{L})}_{\bm{\phi}}(\bm{z}_{\mathbb{L}}|\mathcal{D}_{\mathbb{L}})d\bm{z}_{\mathbb{L}},
\end{align*}
and rewriting using expectation operators and the $\text{KL}$ divergence:
\begin{align*}
    \log  p_{\bm{\phi},\bm{\theta}}(\mathcal{D}_{\mathbb{U}}, \mathcal{D}_{\mathbb{L}},  \mathcal{A},\mathcal{B}) &\ge\sum_{n=1}^N w(\bm{x}_{n}^{(\text{l})})\Big[\mathbb{E}_{q^{(\mathbb{L})}_{\bm{\phi}}(\bm{z}_{n}^{(\text{l})}|\bm{x}_{n}^{(\text{l})})}[\log p_{\bm{\theta}}(\bm{x}_n^{(\text{l})}|\bm{z}_{n}^{(\text{l})})+ \log p_{\bm{\theta}}(f(\bm{x}_{n}^{(\text{l})})|\bm{z}_{n}^{(\text{l})})]-\text{KL}(q^{(\mathbb{L})}_{\bm{\phi}}(\bm{z}^{(\text{l})}_{n}|\bm{x}^{(\text{l})}_{n})||p(\bm{z}_{n}^{(\text{l})}))\Big] \\\nonumber
    & + \sum_{m=1}^M\Big[\mathbb{E}_{q^{(\mathbb{U})}_{\bm{\phi}}(\bm{z}_{m}^{(\text{u})}|\bm{x}^{(\text{u})}_m)}\left[\log[p_{\bm{\theta}}(\bm{x}^{(\text{u})}_m|\bm{z}^{(\text{u})}_{m})]\right]- \text{KL}(q^{(\mathbb{U})}_{\bm{\phi}}(\bm{z}_{m}^{(\text{u})}|\bm{x}^{(\text{u})}_{m})||p(\bm{z}^{(\text{u})}_{m})) \Big] \\\nonumber
    & + \sum_{i,j=1}^{N,N}w_{i,j}\bigg[\mathbb{E}_{q^{(\mathbb{L})}_{\bm{\phi}}(\bm{z}^{(\text{l})}_{i,j}|\bm{x}^{(\text{l})}_{i,j})}\left[\log(p(a_{ij}|\bm{z}^{(\text{l})}_{i,j}))\right]+ \mathbb{E}_{q^{(\mathbb{L})}_{\bm{\phi}}(\bm{z}^{(\text{l})}_{i,j}|\bm{x}^{(\text{l})}_{i,j})}\left[\log(p(b_{ij}|\bm{z}^{(\text{l})}_{i,j}))\right]\bigg].
\end{align*}
Now, using the form of probability distribution for Bernoulli random variables $a_{ij}$ and $b_{ij}$ we have (considering cases with $a_{ij} = 1$ and $b_{ij} = 1$, similar to~\cite{karaletsos2016}):
\begin{align*}
    \mathbb{E}_{q^{(\mathbb{L})}_{\bm{\phi}}(\bm{z}^{(\text{l})}_{i,j}|\bm{x}^{(\text{l})}_{i,j})}\left[\log(p(a_{ij}|\bm{z}^{(\text{l})}_{i,j}))\right] &= (1 - a_{ij})\mathbb{E}_{q^{(\mathbb{L})}_{\bm{\phi}}(\bm{z}^{(\text{l})}_{i,j}|\bm{x}^{(\text{l})}_{i,j})}\left[\log\left[1 - e^{-\mathcal{L}^{(\text{BO})}_{\text{cont}, 1}(\bm{z}^{(\text{l})}_{i,j})}\right]\right]  -a_{ij}\mathbb{E}_{q^{(\mathbb{L})}_{\bm{\phi}}(\bm{z}^{(\text{l})}_{i,j}|\bm{x}^{(\text{l})}_{i,j})}\left[\mathcal{L}^{(\text{BO})}_{\text{cont}, 1}(\bm{z}^{(\text{l})}_{i,j})\right] \\
    &= -\mathbb{E}_{q^{(\mathbb{L})}_{\bm{\phi}}(\bm{z}^{(\text{l})}_{i,j}|\bm{x}^{(\text{l})}_{i,j})}\left[\mathcal{L}^{(\text{BO})}_{\text{cont}, 1}(\bm{z}^{(\text{l})}_{i,j})\right],\\\nonumber
    \mathbb{E}_{q^{(\mathbb{L})}_{\bm{\phi}}(\bm{z}^{(\text{l})}_{i,j}|\bm{x}^{(\text{l})}_{i,j})}\left[\log(p(b_{ij}|\bm{z}^{(\text{l})}_{i,j}))\right] &=  (1 - b_{ij})\mathbb{E}_{q^{(\mathbb{L})}_{\bm{\phi}}(\bm{z}^{(\text{l})}_{i,j}|\bm{x}^{(\text{l})}_{i,j})}\left[\log\left[1 - e^{-\mathcal{L}^{(\text{BO})}_{\text{cont}, 2}(\bm{z}^{(\text{l})}_{i,j})}\right]\right] -b_{ij}\mathbb{E}_{q^{(\mathbb{L})}_{\bm{\phi}}(\bm{z}^{(\text{l})}_{i,j}|\bm{x}^{(\text{l})}_{i,j})}\left[\mathcal{L}^{(\text{BO})}_{\text{cont}, 2}(\bm{z}^{(\text{l})}_{i,j})\right] \\
    &= -\mathbb{E}_{q^{(\mathbb{L})}_{\bm{\phi}}(\bm{z}^{(\text{l})}_{i,j}|\bm{x}^{(\text{l})}_{i,j})}\left[\mathcal{L}^{(\text{BO})}_{\text{cont}, 2}(\bm{z}^{(\text{l})}_{i,j})\right].
\end{align*}
Combining these results gives the expression for the composite ELBO objective:
\begin{align*}
    \textbf{ELBO}_{\textbf{DML}}(\bm{\phi}, \bm{\theta}|\mathcal{D}_{\mathbb{L}}, \mathcal{D}_{\mathbb{U}}, \mathcal{A}, \mathcal{B}) &=\sum_{n=1}^N w(\bm{x}_{n}^{(\text{l})})\bigg[\mathbb{E}_{q^{(\mathbb{L})}_{\bm{\phi}}(\bm{z}_{n}^{(\text{l})}|\bm{x}_{n}^{(\text{l})})}[\log p_{\bm{\theta}}(\bm{x}_n^{(\text{l})}|\bm{z}_{n}^{(\text{l})})\\
    & \hspace{10em} +\log p_{\bm{\theta}}(f(\bm{x}_{n}^{(\text{l})})|\bm{z}_{n}^{(\text{l})})]-\text{KL}(q^{(\mathbb{L})}_{\bm{\phi}}(\bm{z}^{(\text{l})}_{n}|\bm{x}^{(\text{l})}_{n})||p(\bm{z}_{n}^{(\text{l})}))\bigg] \\\nonumber
    & + \sum_{m=1}^M\bigg[\mathbb{E}_{q^{(\mathbb{U})}_{\bm{\phi}}(\bm{z}_{m}^{(\text{u})}|\bm{x}^{(\text{u})}_m)}\left[\log[p_{\bm{\theta}}(\bm{x}^{(\text{u})}_m|\bm{z}^{(\text{u})}_{m})]\right] -\text{KL}(q^{(\mathbb{U})}_{\bm{\phi}}(\bm{z}_{m}^{(\text{u})}|\bm{x}^{(\text{u})}_{m})||p(\bm{z}^{(\text{u})}_{m})) \bigg] \\\nonumber
    &\underbrace{- \sum_{i,j=1}^{N,N}w_{i,j}\mathbb{E}_{q^{(\mathbb{L})}_{\bm{\phi}}(\bm{z}^{(\text{l})}_{i,j}|\bm{x}^{(\text{l})}_{i,j})}\left[\mathcal{L}^{(\text{BO})}_{\text{s-cont}}(\bm{z}^{(\text{l})}_{i,j})\right]}_{\text{Com}_{\text{metric=s-cont}}(\cdot)}.
\end{align*}
An inspection of the objective uncovers two familiar components: the first term is $\text{Com}_{\text{unlabelled}}(\cdot)$ the standard variational ELBO objective~\cite{kingma2014} and the second term is $\text{Com}_{\text{label}}(\cdot)$ the weighted ELBO objective from~\cite{2020_Tripp} endowed with black-box function observations. Finally, $\text{Com}_{\text{metric=s-cont}}(\cdot)$ is a novel contrastive loss-based amendment responsible for the construction of the latent space.

\subsection{Triplet Loss VAE \& $\mathcal{L}_{\text{triple}}$}\label{App: Triplet_loss_section}
\paragraph{Triplet Loss:} The triplet loss $\mathcal{L}_{\text{triple}}(\cdot)$, differs from $\mathcal{L}_{\text{cont.}}(\cdot)$ in that it measures distances between input triplets rather than input pairs $\bm{x}_{i}, \bm{x}_{j}$. To define $\mathcal{L}_{\text{triple}}(\cdot)$, we require an anchor/base input (e.g., an image of a dog) $\bm{x}^{(\text{b})}$, a positive input (e.g., a rotated image of a dog) $\bm{x}^{(\text{p})}$ and a negative input (e.g., an image of a cat) $\bm{x}^{(\text{n})}$. Given a separation margin $\rho$, we map to encodings $\bm{z}^{(\text{b})}$, $\bm{z}^{(\text{p})}$ and $\bm{z}^{(\text{n})}$ such that: $||\bm{z}^{(\text{b})} - \bm{z}^{(\text{p})} ||_{q} + \rho\leq ||\bm{z}^{(\text{b})} - \bm{z}^{(\text{n})} ||_{q}$. Consequently, minimising  $\mathcal{L}_{\text{triple}}(\cdot)= \max\left\{0, ||\bm{z}^{(\text{b})} - \bm{z}^{(\text{p})} ||_{q} + \rho - ||\bm{z}^{(\text{b})} - \bm{z}^{(\text{n})} ||_{q} \right\}$ yields a structured space where positive and negative pairs cluster together subject to separation by a margin $\rho$.

\paragraph{Soft Triplet Loss:}
As shown in Figure \ref{Fig:SoftTriplet}, the triplet loss $\mathcal{L}_{\text{triple}}(\cdot)$ exhibits discontinuous behaviour around the planes $|f(\bm{x}_a) - f(\bm{x}_p)| = \eta$ and $|f(\bm{x}_a) - f(\bm{x}_n)| = \eta $, which can be detrimental for GP regression. To remedy this issue, we introduce a soft version of the triplet loss function by considering the following penalty measure for a given latent anchor point $\bm{z}_i$ and $\bm{z}_j, \bm{z}_k$:  
\begin{align}\label{eq:soft-triplet}
    \mathcal{L}^{(\text{BO})}_{\text{s-triple}}&(\bm{z}_i, \bm{z}_j, \bm{z}_k) = \log\left[1 + e^{\Delta^{+}_{\bm{z}} - \Delta^{-}_{\bm{z}}}\right]w^{(\text{p})}w^{(\text{n})}\times \mathbbm{1}_{\{|f(\bm{x}_i) - f(\bm{x}_j)| < \eta \ \& \ |f(\bm{x}_i) - f(\bm{x}_k)| \ge \eta\}},
\end{align}
where $\Delta^{+}_{\bm{z}} = ||\bm{z}_i - \bm{z}_j||_q$, $\Delta^{-}_{\bm{z}} = ||\bm{z}_i - \bm{z}_k||_q$ and $w^{(\text{p})} = \frac{f_{\nu}(\eta - |f(\bm{x}_i) - f(\bm{x}_{j})|)}{f_{\nu}(\eta)}$, $w^{(\text{n})} = \frac{f_{\nu}( |f(\bm{x}_i) - f(\bm{x}_{k})| - \eta)}{f_{\nu}(1 - \eta)}$ are weight measures associated with points  $\bm{x}_j\in\mathcal{D}_{\text{p}}(\bm{x}_i;\eta)$ and $\bm{x}_k\in\mathcal{D}_{\text{n}}(\bm{x}_i;\eta)$ respectively. $f_{\nu}(a) = \text{tanh}(a/2\nu)$ is a smoothing function with $\nu$ a temperature parameter such that if $\lim_{\nu\to 0}f_{\nu}(a) = 1$, $\mathcal{L}^{(\text{BO})}_{\text{s-triple}}(\bm{z}_i, \bm{z}_j, \bm{z}_k)$ approaches the standard triplet loss. Intuitively, this function encourages points with similar function values to the anchor to be close to it and points with dissimilar function values to be distant from the anchor. The weight $w^{(\text{p})} \propto \eta - |f(\bm{x}_i) - f(\bm{x}_j)|$ assigns higher weight for points in $\mathcal{D}_{\text{p}}(\bm{x}_i;\eta)$ that have close function values to the anchor $f(\bm{x}_i)$ and weight $w^{(\text{n})}\propto |f(\bm{x}_i) - f(\bm{x}_k)| - \eta $ assigns higher weight for points in $\mathcal{D}_{\text{n}}(\bm{x}_i;\eta)$ that have distant function values to the anchor $f(\bm{x}_i)$. These weights allow us to smooth the penalty function around the planes $|f(\bm{x}_i) - f(\bm{x}_j)| = \eta$ and $|f(\bm{x}_i) - f(\bm{x}_k)| = \eta$ (cf. Figure \ref{Fig:SoftTriplet}).
\begin{figure*}
\centering
\subfloat[Triplet loss map as a function of  negative pair embedding distance $||\bm{z}_a - \bm{z}_n||$ and negative pair label distance $|f(\bm{x}_a) - f(\bm{x}_n)|$ with (left) and without (right) incorporation of softening weights.]{\includegraphics[scale=.25]{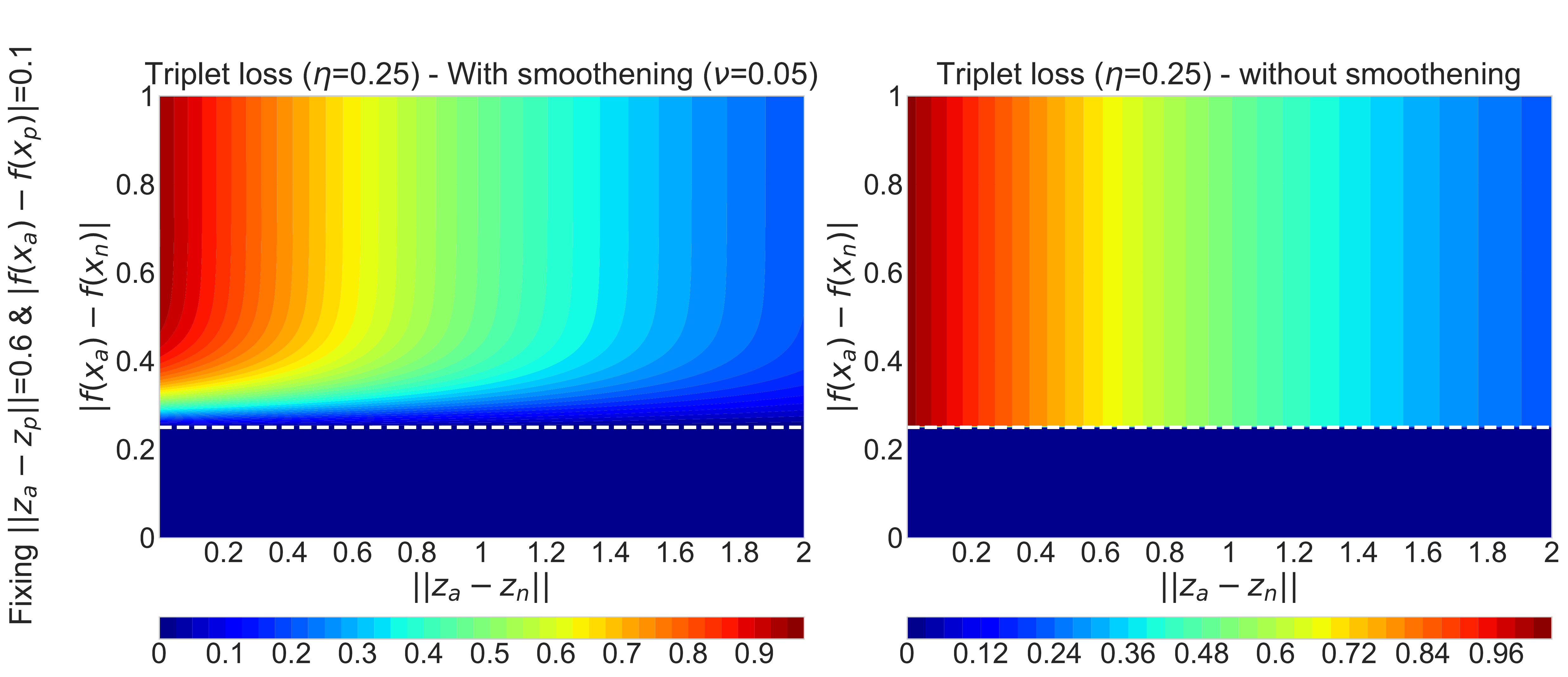}}
\par
\centering
\subfloat[Triplet loss map as a function of  positive pair embedding distance $||\bm{z}_a - \bm{z}_p||$ and positive pair label distance $|f(\bm{x}_a) - f(\bm{x}_p)|$ with (left) and without (right) incorporation of softening weights.]{\label{main:c}\includegraphics[scale=.25]{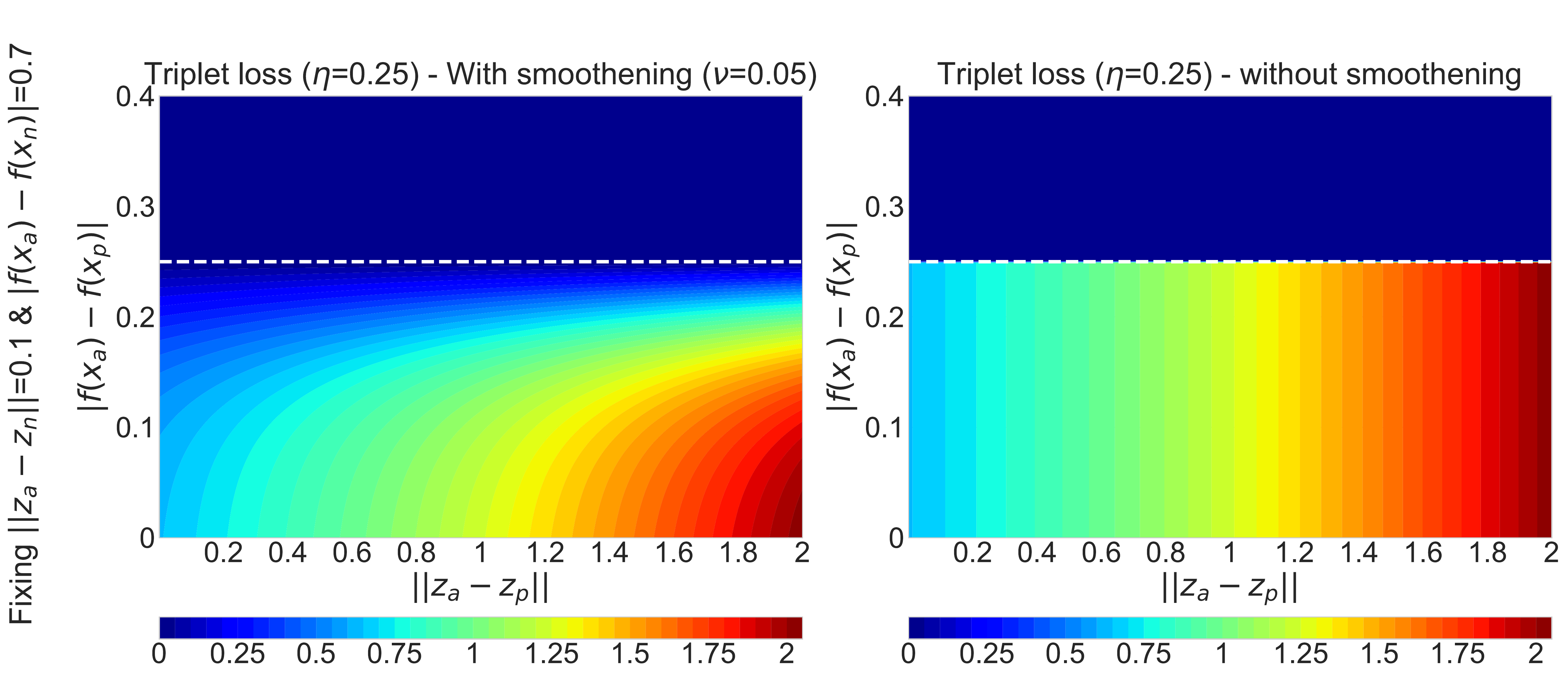}}
\caption{\textbf{Soft Triplet Loss}. The right figure shows the discontinuity in the original class triplet loss given in the main paper by equation (\textcolor{blue}{2}) in the absence of the softening mechanism presented in equation (\ref{eq:soft-triplet}). The discontinuity appears at the level $|f(\bm{x}_a) - f(\bm{x}_p)| = \eta$  (resp. $|f(\bm{x}_a) - f(\bm{x}_n)| = \eta$) corresponding to the limit beyond which $\bm{x}_p$ (resp. $\bm{x}_n$) no longer belongs to the set of positive (resp. negative) datapoints with respect to an anchor point $\bm{x}_a$. The left figures demonstrates that the softening mechanism enables a continuous transition between these two regimes.}
\label{Fig:SoftTriplet}
\end{figure*} 
Indeed, without $w^{(\text{p})}$ and $w^{(\text{n})}$, magnitudes of $|f(\bm{x}_i) - f(\bm{x}_j)|$ and $|f(\bm{x}_i) - f(\bm{x}_k)|$ do not affect the penalty measure and only their relation to parameter $\eta$ is relevant. Presence of weights  $w^{(\text{p})}$ and $w^{(\text{n})}$, on the other hand, allow us to penalise points in the latent space based on the distance between their associated function values. In particular, considering the limits $\lim |f(\bm{x}_i) - f(\bm{x}_j)|\to \eta^{-}$ and $\lim |f(\bm{x}_i) - f(\bm{x}_k)| \to \eta^{+}$ the penalty measure converges to $0$ with the rate controlled by parameter $\eta$.

\paragraph{Variational Soft Triplet Loss:}
To construct the joint latent model shaped by triplet loss, for each datapoint $\bm{x}_i\in\mathcal{D}_{\mathbb{L}}$ we use the definition of positive and negative datapoints with respect to $\bm{x}_i$: 
\begin{align*}
    &\mathcal{D}_{\text{p}}(\bm{x}_i;\eta) = \langle\bm{x}\in\mathcal{D}_{\mathbb{L}}: |f(\bm{x}_i) - f(\bm{x})| < \eta\rangle \ \ \ \  \text{and} \ \ \ \mathcal{D}_{\text{n}}(\bm{x}_i;\eta) = \langle\bm{x}\in\mathcal{D}_{\mathbb{L}}: |f(\bm{x}_i) - f(\bm{x})| \ge \eta\rangle,
\end{align*}
with $\eta > 0$ being a proximity hyperparameter. For each ordered triple of input points  $\bm{x}_{i},\bm{x}_j,\bm{x}_k\in\mathcal{D}_{\mathbb{L}}$ we consider a Bernoulli random variable $c_{i,j,k} = \mathbbm{1}_{\{|f(\bm{x}_i) - f(\bm{x}_j)| < \eta \& |f(\bm{x}_i) - f(\bm{x}_k)|\ge \eta\}}$. Given the latent representations $\bm{z}_i, \bm{z}_j, \bm{z}_k\sim q_{\bm{\phi}}(\cdot,\cdot,\cdot|\bm{x}_i,\bm{x}_j,\bm{x}_k)$ for input points $\bm{x}_i$, $\bm{x}_j$, $\bm{x}_k$ respectively, the probability distribution for random variable $c_{i,j,k}$ is given as:
\begin{align*}
    &\mathbb{P}[c_{i,j,k} = 1|\bm{z}_i, \bm{z}_j, \bm{z}_k] = e^{-\mathcal{L}^{(\text{BO})}_{\text{s-triple}}(\bm{z}_i, \bm{z}_j, \bm{z}_k))},
\end{align*}
where $\mathcal{L}^{(\text{BO})}_{\text{s-triple}}(\bm{z}_i, \bm{z}_j, \bm{z}_k))$ is the softening triplet loss function for continuous support defined in Equation (\ref{eq:soft-triplet}). In other words, the random variable $c_{i,j,k}$ is more likely to take on the value $1$ for the anchor point $\bm{z}_i$ and points $\bm{z}_j\in\mathcal{D}_{\text{p}}(\bm{x}_i;\eta)$,$\bm{z}_k\in\mathcal{D}_{\text{n}}(\bm{x}_i;\eta)$ if point $\bm{z}_j$ with a small function distance $|f(\bm{x}_i) - f(\bm{x}_j)|$ is much closer (i.e (i.e.$\Delta^+_{\bm{z}} \ll \Delta^-_{\bm{z}}$) to the anchor than point $\bm{z}_k$ with a large function distance $|f(\bm{x}_i) - f(\bm{x}_k)|$. It is again important to note that dataset $\mathcal{D}_{\mathbb{L}} = \langle\bm{x}^{(\text{l})}_n, f(\bm{x}^{(\text{l})}_n)\rangle^N_{n=1}$ provides us with realisations of random variables $\mathcal{C} = \langle c_{i,j,k}\rangle^{N,N,N}_{i,j,k=1}$ for all ordered triplets $\bm{x}^{(\text{l})}_{i,j,k} = \langle\bm{x}^{(\text{l})}_i,\bm{x}^{(\text{l})}_j,\bm{x}^{(\text{l})}_k\rangle$. Combining these realisations with unlabelled data $\mathcal{D}_{\mathbb{U}} = \langle\bm{x}^{(\text{u})}_m\rangle^M_{m=1}$, the joint log-likelihood is given as (following marginalisation over latent points $\bm{z}_{\mathbb{L}} = \langle \bm{z}^{(\text{l})}_n\rangle^N_{n=1}$ and $\bm{z}_{\mathbb{U}} = \langle\bm{z}^{(\text{u})}_m\rangle^M_{m=1}$):
\begin{align*}
    \log & p_{\bm{\phi},\bm{\theta}}(\mathcal{D}_{\mathbb{L}}, \mathcal{D}_{\mathbb{U}},  \mathcal{C}) = \log\int p_{\bm{\theta}}(\mathcal{D}_{\mathbb{L}}|\bm{z}_{\mathbb{L}}) \times p_{\bm{\theta}}(\mathcal{C}|\bm{z}_{\mathbb{L}})p_{\bm{\theta}}(\mathcal{D}_{\mathbb{U}}|\bm{z}_{\mathbb{U}})p(\bm{z}_{\mathbb{U}})p(\bm{z}_{\mathbb{L}})d\bm{z}_{\mathbb{L}}d\bm{z}_{\mathbb{U}}
\end{align*}
Adopting the weight function $w(\bm{x}^{(\text{l})})\propto f(\bm{x}^{(\text{l})})$ for labelled input datapoints $\bm{x}^{(\text{l})}\in\mathcal{D}_{\mathbb{L}}$ from~\cite{2020_Tripp} and utilising a weighted log-likelihood formulation~\cite{WeightedLogLikelihood}:
\begin{align*}
    \log p_{\bm{\phi},\bm{\theta}}(\mathcal{D}_{\mathbb{L}}, \mathcal{D}_{\mathbb{U}},  \mathcal{C}) = \log\Bigg[\int\mathcal{H}\prod_{n=1}^N\left[p_{\bm{\theta}}(\bm{x}^{(\text{l})}_n|\bm{z}^{(\text{l})}_n)p_{\bm{\theta}}(f(\bm{x}^{(\text{l})}_n)|\bm{z}^{(\text{l})}_n)p(\bm{z}^{(\text{l})}_n)\right]^{w(\bm{x}^{(\text{l})}_n)} \prod_{m=1}^Mp_{\bm{\theta}}(\bm{x}^{(\text{u})}_m|\bm{z}^{(\text{u})}_m)p(\bm{z}^{(\text{u})}_{m})d\bm{z}_{\mathbb{L}}d\bm{z}_{\mathbb{U}}\Bigg], 
\end{align*}
where $\mathcal{H} = \left[\prod^{N,N,N}_{i,j,k=1}p(c_{i,j,k}|\bm{z}^{(\text{l})}_i, \bm{z}^{(\text{l})}_j,\bm{z}^{(\text{l})}_k)\right]^{w_{i,j,k}}$ with $w_{i,j,k} = w(\bm{x}^{(\text{l})}_i)w(\bm{x}^{(\text{l})}_j)w(\bm{x}^{(\text{l})}_k)$. Introducing weighted variational distributions $q_{\bm{\phi}}(\bm{z}_{\mathbb{L}}, \bm{z}_{\mathbb{U}}|\mathcal{D}_{\mathbb{L}}, \mathcal{D}_{\mathbb{U}}) = q^{(\mathbb{L})}_{\bm{\phi}}(\bm{z}_{\mathbb{L}}|\mathcal{D}_{\mathbb{L}})q^{(\mathbb{U})}_{\bm{\phi}}(\bm{z}_{\mathbb{U}}|\mathcal{D}_{\mathbb{U}})$ where $q^{(\mathbb{L})}_{\bm{\phi}}(\bm{z}_{\mathbb{L}}|\mathcal{D}_{\mathbb{L}}) = \prod_{n=1}^N\left[q^{(\mathbb{L})}_{\bm{\phi}}(\bm{z}^{(\text{l})}_n|\bm{x}^{(\text{l})}_n)\right]^{w(\bm{x}^{(\text{l})}_n)}$ and $q^{(\mathbb{U})}_{\bm{\phi}}(\bm{z}_{\mathbb{U}}|\mathcal{D}_{\mathbb{U}}) = \prod_{m=1}^{M}q^{(\mathbb{U})}_{\bm{\phi}}(\bm{z}^{(\text{u})}_{m}|\bm{x}^{(\text{u})}_m)$ we have:
\begin{align*}
    \log p_{\bm{\phi},\bm{\theta}}(\mathcal{D}_{\mathbb{L}}, \mathcal{D}_{\mathbb{U}},  \mathcal{C}) &=  \log\Bigg[\int\mathcal{H}\prod_{n=1}^N\left[\frac{p_{\bm{\theta}}(\bm{x}^{(\text{l})}_n|\bm{z}^{(\text{l})}_n)p_{\bm{\theta}}(f(\bm{x}^{(\text{l})}_n)|\bm{z}^{(\text{l})}_n)p(\bm{z}^{(\text{l})}_n)}{q^{(\mathbb{L})}_{\bm{\phi}}(\bm{z}^{(\text{l})}_n|\bm{x}^{(\text{l})}_n)}\right]^{w(\bm{x}^{(\text{l})}_n)}\\
    & ~~~~ \times \prod_{m=1}^M\frac{p_{\bm{\theta}}(\bm{x}^{(\text{u})}_m|\bm{z}^{(\text{u})}_{m})p(\bm{z}^{(\text{u})}_{m})}{q^{(\mathbb{U})}_{\bm{\phi}}(\bm{z}^{(\text{u})}_{m}|\bm{x}^{(\text{u})}_m)}q^{(\mathbb{L})}_{\bm{\phi}}(\bm{z}_{\mathbb{L}}|\mathcal{D}_{\mathbb{L}})q^{(\mathbb{U})}_{\bm{\phi}}(\bm{z}_{\mathbb{U}}|\mathcal{D}_{\mathbb{U}})d\bm{z}_{\mathbb{L}}d\bm{z}_{\mathbb{U}}\Bigg].
\end{align*}
Applying Jensen's inequality:
\begin{align*}
    \log p_{\bm{\phi},\bm{\theta}}(\mathcal{D}_{\mathbb{L}}, \mathcal{D}_{\mathbb{U}},  \mathcal{C}) \ge & \int\sum_{m=1}^M\log\left[\frac{p_{\bm{\theta}}(\bm{x}^{(\text{u})}_m|\bm{z}^{(\text{u})}_{m})p(\bm{z}^{(\text{u})}_{m})}{q^{(\mathbb{U})}_{\bm{\phi}}(\bm{z}^{(\text{u})}_{m}|\bm{x}^{(\text{u})}_m)}\right]q^{(\mathbb{U})}_{\bm{\phi}}(\bm{z}_{\mathbb{U}}|\mathcal{D}_{\mathbb{U}})d\bm{z}_{\mathbb{U}} \\
    & + \int\sum_{n=1}^Nw(\bm{x}^{(\text{l})}_n)\log\left[\frac{p_{\bm{\theta}}(\bm{x}^{(\text{l})}_n|\bm{z}^{(\text{l})}_n)p_{\bm{\theta}}(f(\bm{x}^{(\text{l})}_n)|\bm{z}^{(\text{l})}_n)p(\bm{z}^{(\text{l})}_n)}{q^{(\mathbb{L})}_{\bm{\phi}}(\bm{z}^{(\text{l})}_n|\bm{x}^{(\text{l})}_n)}\right] q^{(\mathbb{L})}_{\bm{\phi}}(\bm{z}_{\mathbb{L}}|\mathcal{D}_{\mathbb{L}})d\bm{z}_{\mathbb{L}}  \\\nonumber
    & + \int\left[\sum^{N,N,N}_{i,j,k=1}w_{i,j,k}\log\left[p(c_{ijk}|\bm{z}^{(\text{l})}_i, \bm{z}^{(\text{l})}_j, \bm{z}^{(\text{l})}_k)\right]\right] q^{(\mathbb{L})}_{\bm{\phi}}(\bm{z}_{\mathbb{L}}|\mathcal{D}_{\mathbb{L}})d\bm{z}_{\mathbb{L}},
\end{align*}
and rewriting using expectation operators and the $\text{KL}$ divergence:
\begin{align*}
    \log p_{\bm{\phi},\bm{\theta}}(\mathcal{D}_{\mathbb{L}}, \mathcal{D}_{\mathbb{U}},  \mathcal{C}) &\ge\sum_{n=1}^N w(\bm{x}_{n}^{(\text{l})})\bigg[\mathbb{E}_{q^{(\mathbb{L})}_{\bm{\phi}}(\bm{z}_{n}^{(\text{l})}|\bm{x}_{n}^{(\text{l})})}[\log p_{\bm{\theta}}(\bm{x}_n^{(\text{l})}|\bm{z}_{n}^{(\text{l})}) +\log p_{\bm{\theta}}(f(\bm{x}_{n}^{(\text{l})})|\bm{z}_{n}^{(\text{l})})]-\text{KL}(q^{(\mathbb{L})}_{\bm{\phi}}(\bm{z}^{(\text{l})}_{n}|\bm{x}^{(\text{l})}_{n})||p(\bm{z}_{n}^{(\text{l})}))\bigg] \\\nonumber
    &+\sum_{m=1}^M\bigg[\mathbb{E}_{q^{(\mathbb{U})}_{\bm{\phi}}(\bm{z}_{m}^{(\text{u})}|\bm{x}^{(\text{u})}_m)}\left[\log[p_{\bm{\theta}}(\bm{x}^{(\text{u})}_m|\bm{z}^{(\text{u})}_{m})]\right] - \text{KL}(q^{(\mathbb{U})}_{\bm{\phi}}(\bm{z}_{m}^{(\text{u})}|\bm{x}^{(\text{u})}_{m})||p(\bm{z}^{(\text{u})}_{m})) \bigg]\\\nonumber
    & + \sum_{i,j,k=1}^{N,N,N}w_{i,j,k}\bigg[\mathbb{E}_{q^{(\mathbb{L})}_{\bm{\phi}}(\bm{z}^{(\text{l})}_i,\bm{z}^{(\text{l})}_j,\bm{z}^{(\text{l})}_k|\bm{x}^{(\text{l})}_{i,j,k})}\Big[\log(p(c_{ijk}|\bm{z}^{(\text{l})}_i, \bm{z}^{(\text{l})}_j,\bm{z}^{(\text{l})}_k))\Big]\bigg].
\end{align*}
Now, using the form of probability distribution for Bernoulli random variables $c_{ijk}$ we have (considering cases with $c_{ijk} = 1$, similar to~\cite{karaletsos2016}):
\begin{align*}
    &\mathbb{E}_{q^{(\mathbb{L})}_{\bm{\phi}}(\bm{z}^{(\text{l})}_i, \bm{z}^{(\text{l})}_j, \bm{z}^{(\text{l})}_k|\bm{x}^{(\text{l})}_{i,k,k})}\left[\log(p(c_{ijk}|\bm{z}^{(\text{l})}_i, \bm{z}^{(\text{l})}_j, \bm{z}^{(\text{l})}_k))\right] =-\mathbb{E}_{q^{(\mathbb{L})}_{\bm{\phi}}(\bm{z}^{(\text{l})}_i,\bm{z}^{(\text{l})}_j, \bm{z}^{(\text{l})}_k|\bm{x}^{(\text{l})}_{i,j,k})}\left[\mathcal{L}^{(\text{BO})}_{\text{s-triple}}(\bm{z}^{(\text{l})}_i, \bm{z}^{(\text{l})}_j, \bm{z}^{(\text{l})}_k))\right].
\end{align*}
Combining these results gives the expression for the composite ELBO objective:
\begin{align*}
    \textbf{ELBO}_{\textbf{DML}}(\bm{\phi}, \bm{\theta}|\mathcal{D}_{\mathbb{L}}, \mathcal{D}_{\mathbb{U}}, \mathcal{C}) &=\sum_{n=1}^N w(\bm{x}_{n}^{(\text{l})})\Big[\mathbb{E}_{q^{(\mathbb{L})}_{\bm{\phi}}(\bm{z}_{n}^{(\text{l})}|\bm{x}_{n}^{(\text{l})})}\big[\log p_{\bm{\theta}}(\bm{x}_n^{(\text{l})}|\bm{z}_{n}^{(\text{l})})\\
    & \hspace{14em} +\log p_{\bm{\theta}}(f(\bm{x}_{n}^{(\text{l})})|\bm{z}_{n}^{(\text{l})})\big]-\text{KL}(q^{(\mathbb{L})}_{\bm{\phi}}(\bm{z}^{(\text{l})}_{n}|\bm{x}^{(\text{l})}_{n})||p(\bm{z}_{n}^{(\text{l})}))\Big]\\
    & + \sum_{m=1}^M\bigg[\mathbb{E}_{q^{(\mathbb{U})}_{\bm{\phi}}(\bm{z}_{m}^{(\text{u})}|\bm{x}^{(\text{u})}_m)}\left[\log[p_{\bm{\theta}}(\bm{x}^{(\text{u})}_m|\bm{z}^{(\text{u})}_{m})]\right] - \text{KL}(q^{(\mathbb{U})}_{\bm{\phi}}(\bm{z}_{m}^{(\text{u})}|\bm{x}^{(\text{u})}_{m})||p(\bm{z}^{(\text{u})}_{m})) \bigg]\\\nonumber
    &\underbrace{-\sum_{i,j,k=1}^{N,N,N}w_{i,j,k}\mathbb{E}_{q^{(\mathbb{L})}_{\bm{\phi}}(\bm{z}^{(\text{l})}_i,\bm{z}^{(\text{l})}_j, \bm{z}^{(\text{l})}_k|\bm{x}^{(\text{l})}_{i,j,k})}\left[\mathcal{L}^{(\text{BO})}_{\text{s-triple}}(\bm{z}^{(\text{l})}_i, \bm{z}^{(\text{l})}_j, \bm{z}^{(\text{l})}_k)\right]}_{\text{Comp}_{\text{metric=s-triple}}(\cdot)}.
\end{align*}
An inspection of the objective again uncovers two familiar components: the first term is $\text{Com}_{\text{unlabelled}}(\cdot)$ the standard variational ELBO objective~\cite{kingma2014} and the second term is  $\text{Com}_{\text{label}}(\cdot)$ the weighted ELBO objective from ~\cite{2020_Tripp} endowed with black-box function observations. Finally, $\text{Com}_{\text{metric=s-triple}}(\cdot)$ is a novel triplet loss-based amendment responsible for the construction of the latent space. 

\subsection{ELBO Components of Experiments from the Main Paper}\label{App:elbo_comp_details}
Briefly defined in the main paper, each acronym used in the experiments section corresponds to a different experimental setting. Re-iterating here, we wish to provide specific description of the component(s) constituting their respective ELBO as well as what components are used for pre-training and retraining. \autoref{table:warm_elbo_components} from the main text summarises the information from the following paragraphs.

\subsubsection{LBO}
The acronym \textbf{LBO} is used to describe the setting presented in \citep{2020_Tripp} in which all the available labelled data points $\mathcal{D}_{\mathbb{L}}$ are used to pre-train and retrain the VAE. Its ELBO is then simply the weighted ELBO from \citep{2020_Tripp} (i.e. $\text{Com}_{\text{label}}(\cdot)$)
\begin{align}\label{eq:LBO-ELBO}
    \sum_{n=1}^N w(\bm{x}_{n}^{(\text{l})})\bigg[\mathbb{E}_{q^{(\mathbb{L})}_{\bm{\phi}}(\bm{z}_{n}^{(\text{l})}|\bm{x}_{n}^{(\text{l})})}[\log p_{\bm{\theta}}(\bm{x}_n^{(\text{l})}|\bm{z}_{n}^{(\text{l})})]-\text{KL}(q^{(\mathbb{L})}_{\bm{\phi}}(\bm{z}^{(\text{l})}_{n}|\bm{x}^{(\text{l})}_{n})||p(\bm{z}_{n}^{(\text{l})}))\bigg].
\end{align}
\subsubsection{W-LBO}
The acronym \textbf{W-LBO} describes the same setting as \textbf{LBO} with respect to the VAE and its ELBO, i.e. during pre-training and retraining we seek to minimize ELBO~(\ref{eq:LBO-ELBO}). The difference resides in the way the surrogate model is built for the BO steps, as for \textbf{W-LBO} a parametrised input transformation known as input warping is performed and tuned during the fitting of the other parameters of the GP model. It is interesting to benchmark our approach against this method as it also acts on the model's input space (in our case the VAE latent space) and can be viewed as a space-shaping transformation aiming to improve the surrogate model fit \cite{2014_Snoek, 2020_Balandat, 2020_Cowen}.
\subsubsection{TP-LBO}
This setting is similar to \textbf{LBO} but also uses target prediction \citep{2018_Eissman}. The ELBO used for \textbf{TP-LBO} (Target-Prediction-LBO) is the one described in Section~\ref{Sec:NewVAE} which is a combination of a (weighted) VAE reconstruction loss and a (weighted) regression loss. The loss is $$\text{Com}_{\text{label}}(\bm{\theta}, \bm{\phi}) + w(\bm{x}_{i}) \mathbb{E}_{q_{\bm{\phi}}(\bm{z}_{i}|\bm{x}_i)}[\log h_{\bm{\theta}}(f(\bm{x}_{i})|\bm{z}_{i})]$$ with $h_{\bm{\theta}}(f(\bm{x}_{i})|\bm{z}_{i})$ being an additional decoder network (sharing parameters with $g_{\bm{\theta}}(\cdot)$) geared towards reconstructing $f(\bm{x}_i)$ in $\mathcal{D}_{\mathbb{L}}$.  

\subsubsection{C-LBO}
In \textbf{C-LBO} (Contrastive-LBO), as in previous settings, we have access to all the labelled points $\mathcal{D}_{\mathbb{L}}$ so the ELBO used for pre-training and retraining is a combination of $\text{Com}_{\text{label}}(\cdot)$ without target prediction ~(\ref{eq:LBO-ELBO}) and $\text{Com}_{\text{s-cont}}(\cdot)$, i.e. 
\begin{align*}
    \sum_{n=1}^N w(\bm{x}_{n}^{(\text{l})})\bigg[\mathbb{E}_{q^{(\mathbb{L})}_{\bm{\phi}}(\bm{z}_{n}^{(\text{l})}|\bm{x}_{n}^{(\text{l})})}[\log p_{\bm{\theta}}(\bm{x}_n^{(\text{l})}|\bm{z}_{n}^{(\text{l})})]-\text{KL}(q^{(\mathbb{L})}_{\bm{\phi}}(\bm{z}^{(\text{l})}_{n}|\bm{x}^{(\text{l})}_{n})||p(\bm{z}_{n}^{(\text{l})}))\bigg] + \text{Com}_{\text{s-cont}}(\cdot)
\end{align*}
as defined above in \ref{App:contrastive}.
\subsubsection{T-LBO}
The ELBO used in the setting \textbf{T-LBO} (Triplet-LBO), both for pre-training and retraining, is 
\begin{align*}
    \sum_{n=1}^N w(\bm{x}_{n}^{(\text{l})})\bigg[\mathbb{E}_{q^{(\mathbb{L})}_{\bm{\phi}}(\bm{z}_{n}^{(\text{l})}|\bm{x}_{n}^{(\text{l})})}[\log p_{\bm{\theta}}(\bm{x}_n^{(\text{l})}|\bm{z}_{n}^{(\text{l})})]-\text{KL}(q^{(\mathbb{L})}_{\bm{\phi}}(\bm{z}^{(\text{l})}_{n}|\bm{x}^{(\text{l})}_{n})||p(\bm{z}_{n}^{(\text{l})}))\bigg] + \text{Com}_{\text{s-triple}}(\cdot).
\end{align*}
Indeed, we have access to all labelled points here as well so we use $\text{Com}_{\text{label}}(\cdot)$ without target prediction~(\ref{eq:LBO-ELBO}) in conjunction with $\text{Com}_{\text{s-triple}}(\cdot)$ described in \ref{App: Triplet_loss_section}.

\subsection{Alternative Metrics}\label{App:AdditionalMetrics}
\subsubsection{S-LBO}
The simple loss makes use of the distance between function values to shape the latent space such that encoded points have similar distance, i.e. minimising $\text{Com}_{\text{metric}=\text{simple}} (\cdot, \cdot) \propto w_{ij} \mathbb{E}_{q_{\bm{\phi}}(\cdot)}\left[| \ ||\Delta \bm{z}_{ij}||-|\Delta f_{ij}| \ |\right]$ with $\Delta \bm{z}_{ij}= \bm{z}_{i} - \bm{z}_{j}$.
However this loss has a disadvantage in that it will try to move points such that the loss is minimised even though this is uninformative for the GP. Indeed in the case where $\Delta f$ is large, if $\Delta z$ is larger, this loss will try to bring encodings closer such that the loss is minimised. But if $\Delta f$ is already larger than the GP lengthscale, reducing the distance between the encodings will not have a strong impact on the modelling as the predicted correlation between function values is already low. This simplicity in the loss may make the training of the VAE focus on points that are already far away and that do not need more attention, while it could focus on more important points, e.g. points that have a smaller scale in the total loss but are much more important to cluster or separate.

\subsubsection{LR-LBO}
This loss introduced in~\citep{2019_Kim} aims to relax the triplet loss to continuous labels without the need for the usage of a threshold to separate positive $\bm{z}_j$ and negative $\bm{z}_k$ points with respect to the anchor/base point $\bm{z}_i$: $\text{Com}_{\text{metric}=\text{log-ratio}} (\cdot, \cdot) = w_{ijk} \mathbb{E}_{q_{\bm{\phi}}(\cdot)}[(\log \sfrac{||\Delta \bm{z}_{ij}||}{||\Delta \bm{z}_{ik}||} - \log \sfrac{|\Delta f_{ij}|}{|\Delta f_{ik}|})^{2}]$. However, this loss also suffers from a flaw in view of the downstream GP modelling step. 

Suppose a triplet of points such that $|\Delta f_{ij}| = |\Delta f_{ik}|$ and $||\Delta \bm{z}_{ij}|| = ||\Delta \bm{z}_{ik}|| $, then the log-ratio loss associated to this triplet is zero. However, these three points can be in a configuration that would need to be changed, e.g. points $\bm{z}_j$ and $\bm{z}_k$ can be both far away from anchor $\bm{z}_i$ while their images $f_j$ and $f_k$ can be equally close to the anchor's image $f_i$. Therefore bad configurations where $||\Delta \bm{z}_{ij}|| = ||\Delta \bm{z}_{ik}||$ are large but $|\Delta f_{ij}| = |\Delta f_{ik}|$ are small (and vice versa) are not penalised properly.
% Suppose $ ||\Delta \bm{z}_{ij}|| = c_p |\Delta f_{ij}|$ and $||\Delta \bm{z}_{ik}|| = c_n |\Delta f_{ik}|$, then the ratio inside the loss becomes $\log c_p - \log c_n$. If the anchor $\bm{z}_i$ is far from the positive point $\bm{z}_j$ and at the same time if the anchor is close to the negative point $\bm{z}_k$ and $c_p = c_n$, then the loss is zero. So we could have the situation where the points are not organised properly in the latent space according to their function values but they won't get changed through gradient descent.

\subsection{Semi-Supervised Setting}
In this setting (c.f. Section~\ref{subsec:cold_start_res}) we only assume knowledge of $1\%$ of the labelled dataset to start with, $\mathcal{D}_{\mathbb{L}}^{\text{1\%}}$. For all algorithms, the VAE is trained on unlabelled data only (i.e. using $\text{Com}_{\text{label}}$). We then allow the access to $1\%$ of the labelled dataset $\mathcal{D}_{\mathbb{L}}$ to start the BO loop, i.e. fitting the GP model and start BO iterations. The regular retrainings of the VAE are then done on $\mathcal{D}_{\mathbb{U}}$ and the collected labelled data points up to this point. Results show that even in this extreme setting, DML combined with BO is able to outperform baseline \citep{2020_Tripp}.

Note that we have run the special \textbf{LBO-1\%} setting in which we train the VAE \emph{from scratch} only on $\mathcal{D}_{\mathbb{L}}^{1\%}$ which means we do not allow access to large amounts of unlabelled data. This is done to check if the baseline from \citep{2020_Tripp} can recover competitive results when starting from an extremely small labelled dataset. As seen in the main paper results, it is unable to do so in less than 1000 iterations.

\section{Experiment Setup}\label{App:Exps}
\subsection{Task Descriptions}
In this section we provide further details about each task described in brief in the main paper.
\subsubsection{Topology Shape Fitting}
In this task we use the Topology dataset \cite{sosnovik2017neural} which is a set of $40 \times 40$ gray-scale images generated from mechanical parts. These images are categorised into 10'000 classes, each class containing 100 images of the same part at different stages of reconstruction/resolution. We select only the last or best image for each class, leaving us with effectively 10'000 images in total in our dataset. Picking one image at random and setting it as the target for all subsequent experiments, we seek to optimize the cosine similarity $\cos(\bm{x},\bm{x}')=\bm{x}\bm{x'}^T/\Vert \bm{x} \Vert \Vert \bm{x}' \Vert$ between a new point $x$ and the target $x'$. We use a VAE with a continuous latent space of dimension $20$ (i.e. $\mathcal{Z} \subseteq \mathbb{R}^{20}$) and a standard convolutional architecture, alternating strided convolutional and batch normalisation layers. 
\subsubsection{Expression Reconstruction}
Similar to \cite{2020_Tripp} and \cite{kusner2017grammar} the goal is to generate single-variable mathematical expressions that minimises the mean squared error to a target expression \texttt{1/3 * x * sin(x*x)} evaluated at 1000 values uniformly-distributed between $-10$ and $+10$. The dataset consists of 100'000 such univariate expressions generated by a formal grammar using the GrammarVAE from \cite{kusner2017grammar}. The expressions are first embedded to a $12 \times 15$ matrix following \cite{kusner2017grammar} and subsequently encoded to a continuous space $\mathcal{Z} \subseteq \mathbb{R}^{25}$. Note that in our experiments we only use a subset of the total dataset as it already contains 
the target equation. Ranking the points by their score, we take the bottom $35\%$ of the dataset and $N_{good}$ equations sampled from the remainder, to which we remove the top $3\%$ best points, i.e. we sample $N_{good}$ ($=5\%$ of dataset) equations from the $65^{th}$-percentile to the $3^{rd}$-percentile. In addition to removing the best possible expressions from the dataset, this procedure also leaves a greater margin for progression enabling us to compare the experimental settings and tested algorithms more easily. We end up with 40'000 expressions in the dataset.
\subsubsection{Chemical Design}
Following \cite{jin2018}, the goal of this task is to optimise the properties of molecules from the ZINC250K dataset \cite{sterling2015zinc} where each molecule's property or score is its penalised \textit{water-octanol partition coefficient} (PlogP). Molecules are represented as a unique SMILES sequence using a Junction-Tree VAE \cite{jin2018}, a state-of-the-art generative model for producing valid molecular structures. In this task the continuous latent space used to represent the inputs is of dimension 56 (i.e. $\mathcal{Z} \subseteq \mathbb{R}^{56}$).
\subsection{Phases of Experiments Explained} 
Each experimental setting is comprised of three steps. First a VAE is trained on a dataset to learn how to map the original input space to a low-dimensional latent space and reconstruct points. Then before starting the BO, we fit our surrogate GP model. Finally we run the BO loop. We give further details on each step in the following.
\subsubsection{Training the VAE}
For the Topology and Expression tasks, we choose to train the VAE from scratch. In the molecule task we start with an already-trained JTVAE \citep{jin2018} as performed in \citep{2020_Tripp}. We train their respective VAE with the desired ELBO components and dataset depending on the experimental setting (e.g. with metric learning or without, with labelled or unlabelled data, with weights or without \dots). This model will then be used in the BO loop and will be updated as we collect new points. Based on the results reported in \citep{2020_Tripp} we choose to retrain every $r=50$ acquisition steps. At each retraining, we recompute the weights as explained in \citep{2020_Tripp} with $k=1e-3$.
\subsubsection{Fitting the GP}
We use a sparse GP \citep{2009_Titsias} with 500 inducing points trained on the $N_{best}$ points with higher score and $N_{rand}$ points taken at random from the remainder of the dataset (the number of points varies across tasks, see \ref{App:hyper_parameters}). Inputs are normalised and targets are standardised in Topology and Expression task but not in Molecule, similar to \citep{2020_Tripp}. Finally the GP is trained from scratch after each retraining of the VAE as targets can change but it is not retrained from scratch after each acquisition as we only add one point to the dataset (and in turn only 50 points in-between each VAE retraining as $r=50$) which saves time and computational resources in practice while ensuring a good model fit over regions of the latent space populated with good points.
\subsubsection{Running the BO Loop}
To perform BO on the latent space we follow Algorithm \textcolor{blue}{1} from the main paper and use Expected Improvement \citep{jones1998} as an acquisition function. At each step, we acquire one new point by optimising the acquisition function but before evaluating it on the black-box we check if it is already present in the current dataset. If it is present, we perturb it or restart the iteration until we find a novel point that is not already in our dataset. This scheme - applied to all experimental settings for the sake of fair comparison - enforces novelty in our exploration and can avoid the optimiser becoming stuck at the same point for multiple iterations.

\subsection{Topology \& Expression Latent Space Distance Histograms}
We demonstrate the latent structure of the VAE on the topology and expression tasks in the same fashion as in the main paper, i.e. comparing the VAE latent space distribution of distances with and without metric learning, see Figure~\ref{fig:latent_dist_topo} and Figure~\ref{fig:latent_dist_expr}. In Topology, because of the nature of this task, similar inputs already have similar scores. Indeed inputs are images and we optimise the similarity measure between each input and a target image. Noticing this property, it is understandable that the latent space obtained by training a vanilla VAE already exhibits some desirable structure, i.e. points with similar scores will be closer together in latent space. Using the right metric loss while training the VAE encourages this structure even more. From Figure~\ref{fig:latent_dist_topo} we can visualise that points with higher scores have been clustered closer together in $\mathcal{Z}$ but also a potential negative effect; points with the lowest scores do not appear to be closer together on average than to other points. Latent separation in this toy example is less evident when compared to other scenarios and such an observation partially explains the similar regret results across many algorithms. However the desired clustering of points is detectable in the expression task (see Figure~\ref{fig:latent_dist_expr}).
\begin{figure*}[h!]
    \centering
    \includegraphics[scale=0.22]{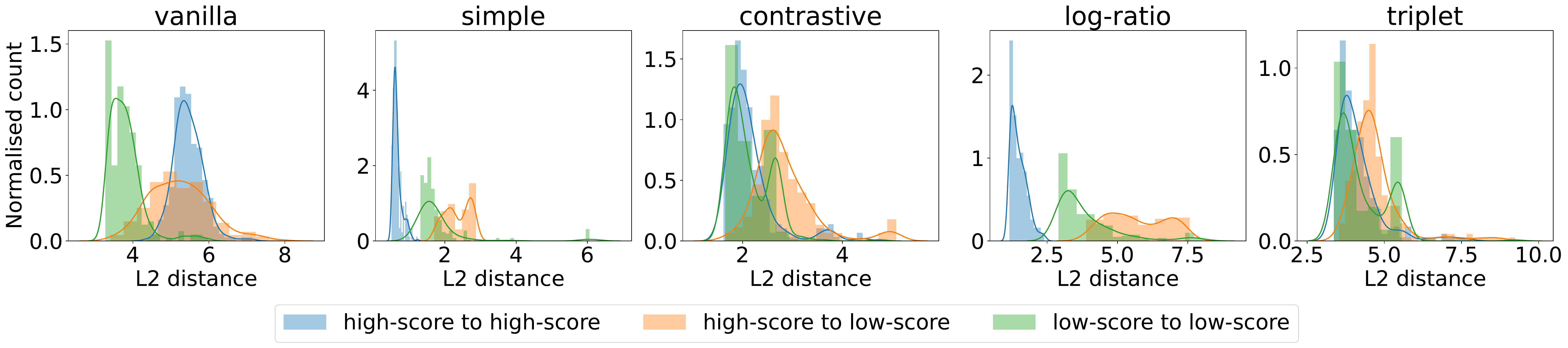}
    \caption{Latent space distances in the topology task.}
    \label{fig:latent_dist_topo}
\end{figure*}
\begin{figure*}[h!]
    \centering
    \includegraphics[scale=0.22]{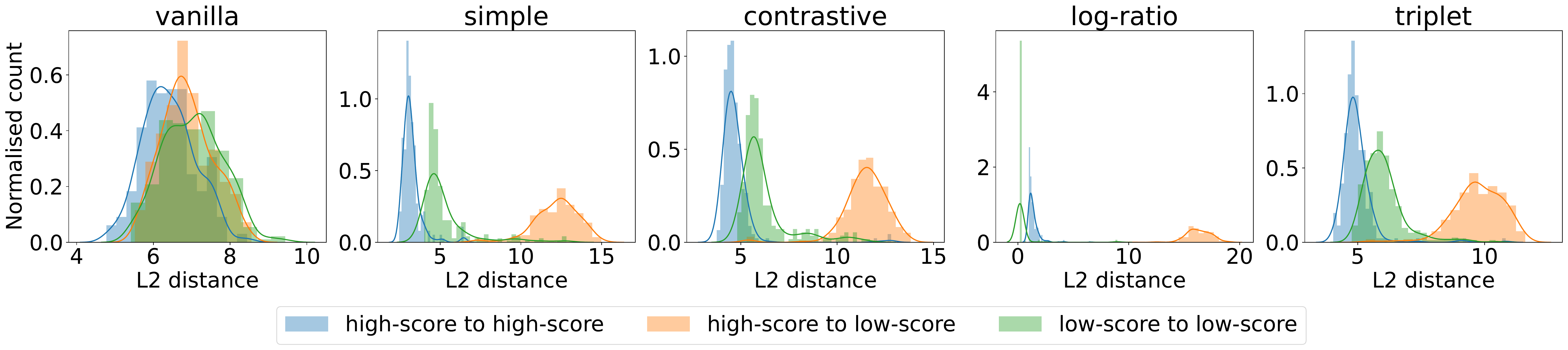}
    \caption{Latent space distances in the expression task.}
    \label{fig:latent_dist_expr}
\end{figure*}

\subsection{Topology and Expression semi-supervised regret results}
The Topology and Expression tasks were also put to the test of the semi supervised setting, i.e. starting with little labelled data and a large amount of unlabelled data. The setting and algorithm details are similar to what is described in the main paper as well as in Appendix section~\ref{App:elbo_comp_details}. The results are presented in Figure~\ref{fig:cold_start_topo_expr}.

\begin{figure*}
    \centering
    \includegraphics[scale=0.12]{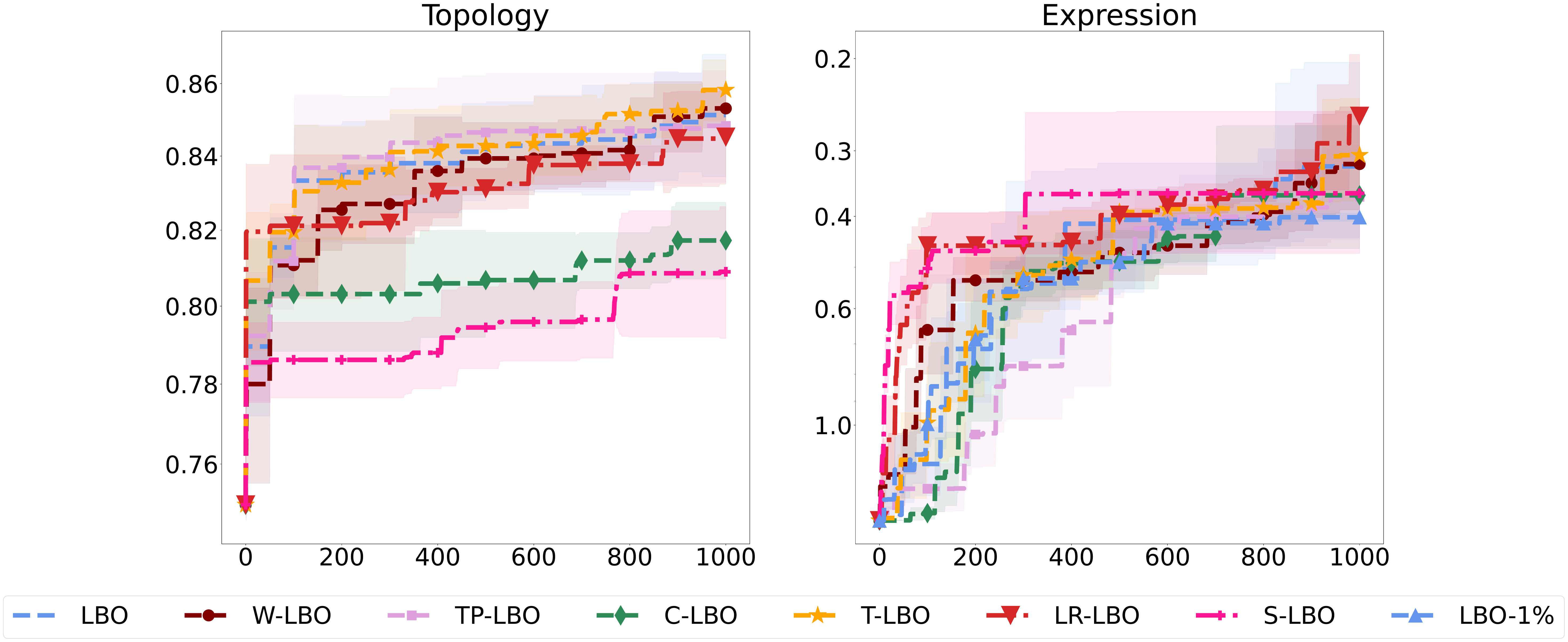}
    \caption{Regret results in the semi-supervised setting on Topology and Expression tasks.}
    \label{fig:cold_start_topo_expr}
\end{figure*}

% \subsection{Correlation Plots}
% \Alex{Should we add them there ?}

% \subsection{Toy experiment }
% \Alex{Should we add it there ?}

%\subsection{Log-Ratio Loss}
%\begin{itemize}
%    \item does not work very well
%    \item we suppose it is because we do not do data augmentation as we cannot here in that case
%    \item \Alex{check what is said in main paper about it}
%\end{itemize}
\subsection{Hyper-Parameters}\label{App:hyper_parameters}
For reproducibility, we now detail all hyper-parameters used across all experiments. Note that in the semi-supervised setting, as we start from $\mathcal{D}_{\mathbb{L}}^{1\%}$, we cannot have the same number of points to initially train our GP model so we make use of all available data. The rest is similar.

%% making first table horizontal
\begin{table*}[h]\caption{Hyper-parameters of LBO and W-LBO settings on all tasks.}
\centering
\begin{tabular}{|c | c | c c c|| c c |} 
 \hline
  \multicolumn{2}{|c|}{} & \multicolumn{3}{c||}{LBO} & \multicolumn{2}{c|}{W-LBO} \\
 \cline{3-7}
 \multicolumn{2}{|c|}{}  & Topology & Expression & Molecule & Topology & Expression \\
 \hline
 \parbox[t]{1mm}{\multirow{8}{*}{\rotatebox[origin=c]{90}{VAE pre-training}}}
 & epochs & 300 & 300 & - & 300 & 300  \\
 & k & 1e-3 & 1e-3 & 1e-3 & 1e-3 & 1e-3 \\
 & r & 50 & 50 & 50  & 50 & 50 \\
 & optimiser & Adam & Adam & Adam  & Adam & Adam \\
 & lr & 1e-3 & 1e-3 & 1e-3  & 1e-3 & 1e-3  \\
 & batch size & 1024 & 256 & -  & 1024 & 256 \\
 & $\beta_{KL}^{init}$ & 1e-6 & 1e-6 & 1e-3  & 1e-6 & 1e-6 \\
 & $\beta_{KL}^{final}$ & 1e-4 & 0.04 & 1e-3 & 1e-4 & 0.04 \\
 \hline
 \parbox[t]{1mm}{\multirow{6}{*}{\rotatebox[origin=c]{90}{VAE retraining}}}
 & epochs & 1 & 1 & 1  & 1 & 1 \\
 & k & 1e-3 & 1e-3 & 1e-3 & 1e-3 & 1e-3 \\
 & optimiser & Adam & Adam & Adam  & Adam & Adam \\
 & lr & 1e-3 & 1e-3 & 1e-3 -  & 1e-3 & 1e-3 \\
 & batch size & 256 & 256 & 128  & 256 & 256  \\
 & $\beta_{KL}$ & 1e-4 & 0.04 & 1e-3  & 1e-4 & 0.04 \\
 \hline
 \parbox[t]{1mm}{\multirow{5}{*}{\rotatebox[origin=c]{90}{GP}}}
 & inducing pts. & 500 & 500 & 500 & 500 & 500  \\
 & best pts. & 2500 & 2500 & 2000 & 2500 & 2500 \\
 & rand. pts. & 500 & 500 & 8000 & 500 & 500 \\
 & kernel & RBF & RBF & RBF  & RBF & RBF \\
 & mean & const. & const. & const. & const. & const. \\
 & transf. & - & - & - & Kumaraswarmy & Kumaraswarmy \\
 \hline
 & Acq. func. & EI & EI & EI & EI & EI\\
 & optimiser & LBFGS & LBFGS & LBFGS & LBFGS & LBFGS \\
 \hline
\end{tabular}
\label{table:hyper-params-LBO}
\end{table*}

\begin{table*}[h]\caption{Hyper-parameters of C-LBO and T-LBO settings on all tasks.}
\centering
\begin{tabular}{|c | c | c c c|| c c c |} 
 \hline
 \multicolumn{2}{|c|}{} & \multicolumn{3}{c||}{C-LBO} &  \multicolumn{3}{c|}{T-LBO} \\
 \cline{3-8}
 \multicolumn{2}{|c|}{}  & Topology & Expression & Molecule & Topology & Expression & Molecule \\
 \hline
 \parbox[t]{1mm}{\multirow{11}{*}{\rotatebox[origin=c]{90}{VAE pre-training}}}
 & epochs & 300 & 300 & 20 & 300 & 300 & 20 \\
 & k & 1e-3 & 1e-3 & 1e-3  & 1e-3 & 1e-3 & 1e-3 \\
 & optimiser & Adam & Adam & Adam  & Adam & Adam & Adam \\
 & lr & 1e-3 & 1e-3 & 1e-3  & 1e-3 & 1e-3 & 1e-3 \\
 & batch size & 1024 & 256 & 1024  & 1024 & 256 & 128 \\
 & $\beta_{KL}^{init}$ & 1e-6 & 1e-6 & 1e-3 & 1e-6 & 1e-6 & 1e-3 \\
 & $\beta_{KL}^{final}$ & 1e-4 & 0.04 & 1e-3  & 1e-4 & 0.04 & 1e-3 \\
 & metric & s-cont & s-cont & s-cont  & s-triple & s-triple & s-triple \\
 & $\rho$ & 0.1 & 0.1 & 0.1 & 0.1 & 0.1 & 0.1 \\
 & $\eta$ & - & - & - & 0 & 0 & 0 \\
 & $\beta_{\text{metric}}$ & 1 & 10 & 1 & 1 & 10 & 1 \\
 \hline
 \parbox[t]{1mm}{\multirow{11}{*}{\rotatebox[origin=c]{90}{VAE retraining}}}
 & epochs & 1 & 1 & 1 & 1 & 1 & 1 \\
 & k & 1e-3 & 1e-3 & 1e-3 & 1e-3 & 1e-3 & 1e-3 \\
 & r & 50 & 50 & 50 & 50 & 50 & 50 \\
 & optimiser & Adam & Adam & Adam  & Adam & Adam & Adam \\
 & lr & 1e-3 & 1e-3 & 1e-3 & 1e-3 & 1e-3 & 1e-3 \\
 & batch size & 256 & 256 & 128  & 256 & 256 & 128  \\
 & $\beta_{KL}$ & 1e-4 & 0.04 & 1e-3  & 1e-4 & 0.04 & 1e-3 \\
 & metric & s-cont & s-cont & s-cont  & s-triple & s-triple & s-triple \\
 & $\rho$ & 0.1 & 0.1 & 0.1 & 0.1 & 0.1 & 0.1 \\
 & $\eta$ & - & - & - & 0 & 0 & 0 \\
 & $\beta_{\text{metric}}$ & 1 & 10 & 1 & 1 & 10 & 1 \\
 \hline
 \parbox[t]{1mm}{\multirow{5}{*}{\rotatebox[origin=c]{90}{GP}}}
 & inducing pts. & 500 & 500 & 500 & 500 & 500 & 500  \\
 & best pts. & 2500 & 2500 & 2000 & 2500 & 2500 & 2500 \\
 & rand. pts. & 500 & 500 & 8000 & 500 & 500 & 8000 \\
 & kernel & RBF & RBF & RBF  & RBF & RBF & RBF \\
 & mean & const. & const. & const. & const. & const. & const. \\
 \hline
 & Acq. func. & EI & EI & EI & EI & EI & EI\\
 & optimiser & LBFGS & LBFGS & LBFGS & LBFGS & LBFGS & LBFGS \\
 \hline
\end{tabular}
\label{table:hyper-params-C-LBO}
\end{table*}

\begin{table*}[h]\caption{Hyper-parameters of S-LBO and LR-LBO settings on all tasks.}
\centering
\begin{tabular}{|c | c | c c c|| c c c |} 
 \hline
 \multicolumn{2}{|c|}{} & \multicolumn{3}{c||}{S-LBO} &  \multicolumn{3}{c|}{LR-LBO} \\
 \cline{3-8}
 \multicolumn{2}{|c|}{}  & Topology & Expression & Molecule & Topology & Expression & Molecule \\
 \hline
 \parbox[t]{1mm}{\multirow{11}{*}{\rotatebox[origin=c]{90}{VAE pre-training}}}
 & epochs & 300 & 300 & 20 & 300 & 300 & 20 \\
 & k & 1e-3 & 1e-3 & 1e-3  & 1e-3 & 1e-3 & 1e-3 \\
 & optimiser & Adam & Adam & Adam  & Adam & Adam & Adam \\
 & lr & 1e-3 & 1e-3 & 1e-3  & 1e-3 & 1e-3 & 1e-3 \\
 & batch size & 1024 & 256 & 1024  & 1024 & 256 & 128 \\
 & $\beta_{KL}^{init}$ & 1e-6 & 1e-6 & 1e-3 & 1e-6 & 1e-6 & 1e-3 \\
 & $\beta_{KL}^{final}$ & 1e-4 & 0.04 & 1e-3  & 1e-4 & 0.04 & 1e-3 \\
 & metric & s-cont & s-cont & s-cont  & s-triple & s-triple & s-triple \\
 & $\beta_{\text{metric}}$ & 1 & 10 & 1 & 1 & 10 & 1 \\
 \hline
 \parbox[t]{1mm}{\multirow{11}{*}{\rotatebox[origin=c]{90}{VAE retraining}}}
 & epochs & 1 & 1 & 1 & 1 & 1 & 1 \\
 & k & 1e-3 & 1e-3 & 1e-3 & 1e-3 & 1e-3 & 1e-3 \\
 & r & 50 & 50 & 50 & 50 & 50 & 50 \\
 & optimiser & Adam & Adam & Adam  & Adam & Adam & Adam \\
 & lr & 1e-3 & 1e-3 & 1e-3 & 1e-3 & 1e-3 & 1e-3 \\
 & batch size & 256 & 256 & 128  & 256 & 256 & 128  \\
 & $\beta_{KL}$ & 1e-4 & 0.04 & 1e-3  & 1e-4 & 0.04 & 1e-3 \\
 & metric & s-cont & s-cont & s-cont  & s-triple & s-triple & s-triple \\
 & $\beta_{\text{metric}}$ & 1 & 10 & 1 & 1 & 10 & 1 \\
 \hline
 \parbox[t]{1mm}{\multirow{5}{*}{\rotatebox[origin=c]{90}{GP}}}
 & inducing pts. & 500 & 500 & 500 & 500 & 500 & 500  \\
 & best pts. & 2500 & 2500 & 2000 & 2500 & 2500 & 2500 \\
 & rand. pts. & 500 & 500 & 8000 & 500 & 500 & 8000 \\
 & kernel & RBF & RBF & RBF  & RBF & RBF & RBF \\
 & mean & const. & const. & const. & const. & const. & const. \\
 \hline
 & Acq. func. & EI & EI & EI & EI & EI & EI\\
 & optimiser & LBFGS & LBFGS & LBFGS & LBFGS & LBFGS & LBFGS \\
 \hline
\end{tabular}
\label{table:hyper-params-LR-LBO}
\end{table*}

\begin{table*}[h!]\caption{Hyper-parameters of TP-LBO setting on all tasks.}
\centering
\begin{tabular}{|c | c | c c c|} 
 \hline
 &  & Topology & Expression & Molecule \\
 \hline
 \parbox[t]{1mm}{\multirow{10}{*}{\rotatebox[origin=c]{90}{VAE pre-training}}}
 & epochs & 300 & 300 & - \\
 & k & 1e-3 & 1e-3 & 1e-3 \\
 & optimiser & Adam & Adam & Adam \\
 & lr & 1e-3 & 1e-3 & 1e-3 \\
 & batch size & 1024 & 256 & - \\
 & $\beta_{KL}^{init}$ & 1e-6 & 1e-6 & 1e-6 \\
 & $\beta_{KL}^{final}$ & 1e-4 & 0.04 & 1e-3 \\
 & regressor & MLP-128-128 & MLP-128-128 & MLP-128-128 \\
 & $\beta_{R}$ & 10 & 1 & 10 \\
 \hline
 \parbox[t]{1mm}{\multirow{10}{*}{\rotatebox[origin=c]{90}{VAE retraining}}}
 & epochs & 1 & 1 & 1 \\
 & k & 1e-3 & 1e-3 & 1e-3 \\
 & r & 50 & 50 & 50 \\
 & optimiser & Adam & Adam & Adam \\
 & lr & 1e-3 & 1e-3 & 1e-3 \\
 & batch size & 256 & 256 & 128 \\
 & $\beta_{KL}$ & 1e-4 & 0.04 & 1e-3 \\
 & regressor & MLP-128-128 & MLP-128-128 & MLP-128-128 \\
 & $\beta_{R}$ & 10 & 1 & 10 \\
 \hline
 \parbox[t]{1mm}{\multirow{5}{*}{\rotatebox[origin=c]{90}{GP}}}
 & inducing pts. & 500 & 500 & 500 \\
 & best pts. & 2500 & 2500 & 2000 \\
 & rand. pts. & 500 & 500 & 8000 \\
 & kernel & RBF & RBF & RBF \\
 & mean & const. & const. & const. \\
 \hline
 & Acq. func. & EI & EI & EI \\
 & optimiser & LBFGS & LBFGS & LBFGS \\
 \hline
\end{tabular}
\label{table:hyper-params-R-LBO}
\end{table*}

% \subsection{Hard Soft}
% In this section we need to provide additional experiments for two cases:
% \begin{enumerate}
%     \item Contrastive hard vs contrastive soft 
%     \item Triplet hard vs triplet soft
% \end{enumerate}

\subsection{Analysis of other Hyper-Parameters \& Generated Molecules}
\paragraph{Weight and Margin Effects on Performance:} In this section we collate all additional experiments implemented. We choose to implement these experiments on the expression task as it is more complex than a toy task yet does not carry the computational overhead of the molecule task. We compare the hard and soft versions of the contrastive loss, we vary the threshold parameter $\rho$ as well as the weight parameter from the soft-triplet loss $\nu$. Results are displayed in Figures \ref{fig:warm_ablation_expr} and \ref{fig:cold_ablation_expr}. 
\paragraph{Generated Molecules Depiction:} In Figure~\ref{Fig:molecule--samples}, we demonstrate molecules generated with our method across the VAEs retaining phases. Figure~\ref{Fig:molecule--samples} (a) shows the case when using the complete dataset $\mathcal{D}_{\mathbb{L}}$, while Figure~\ref{Fig:molecule--samples} (b) reiterate these results but only accessing $\mathcal{D}_{\mathbb{L}}^{1\%}$. In both those cases, we observe that T-LBO is capable of significantly improving PlogP values beyond the best molecules available in the dataset. 

\begin{figure*}[ht!]
    \centering
    \includegraphics[scale=0.11]{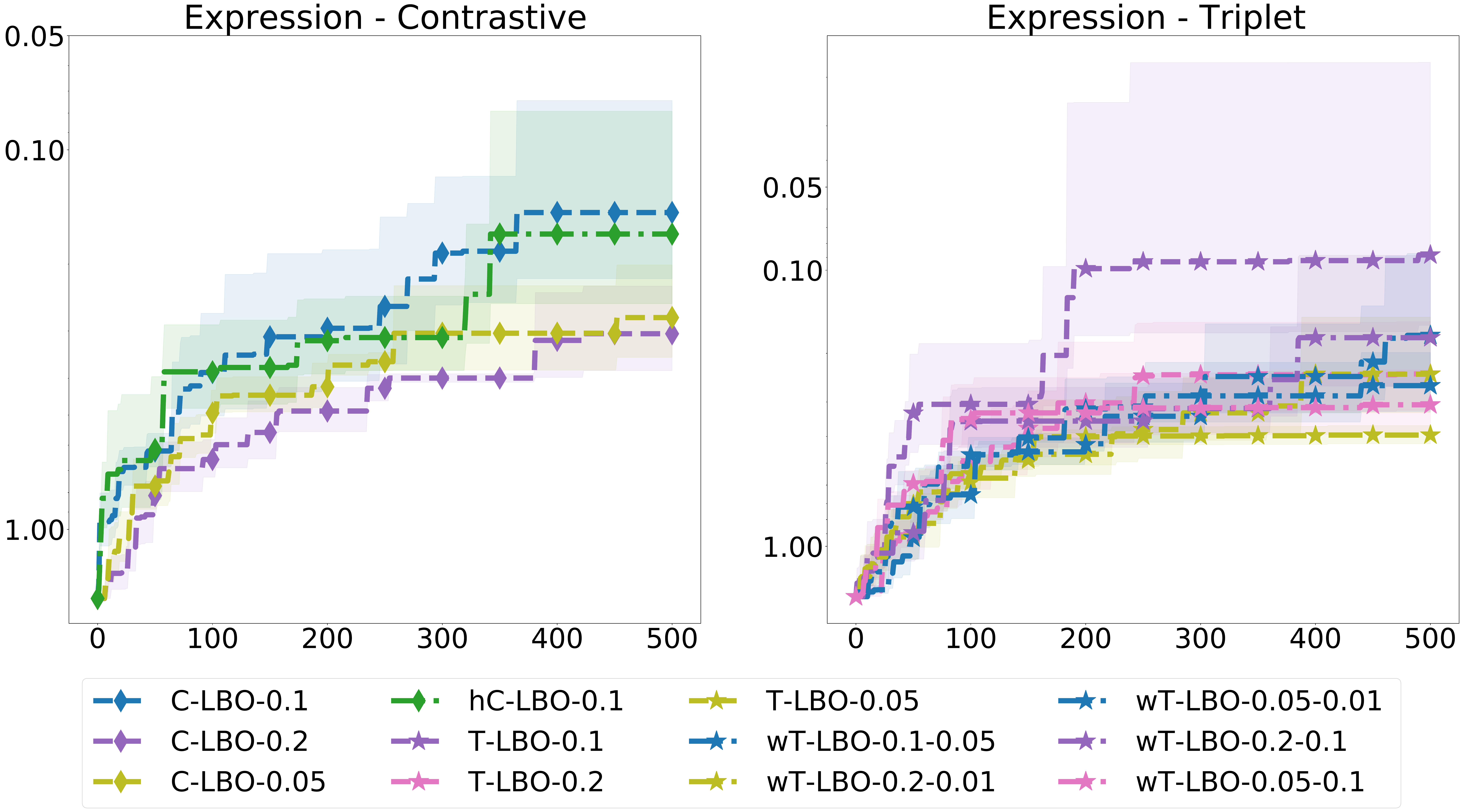}
    \caption{Comparison of the effect of parameter $\rho$ and $\nu$ across settings on the Expression task in the same setting as in \citep{2020_Tripp}. The threshold value $\rho$ is the first value in each of the figure labels. C-LBO and T-LBO are the settings described in \ref{App:elbo_comp_details} while hC-LBO is the hard contrastive loss and wT-LBO uses the weighted triplet loss with parameter $\nu$ being the second value in the label.}
    \label{fig:warm_ablation_expr}
\end{figure*}

\begin{figure*}[ht!]
    \centering
    \includegraphics[scale=0.11]{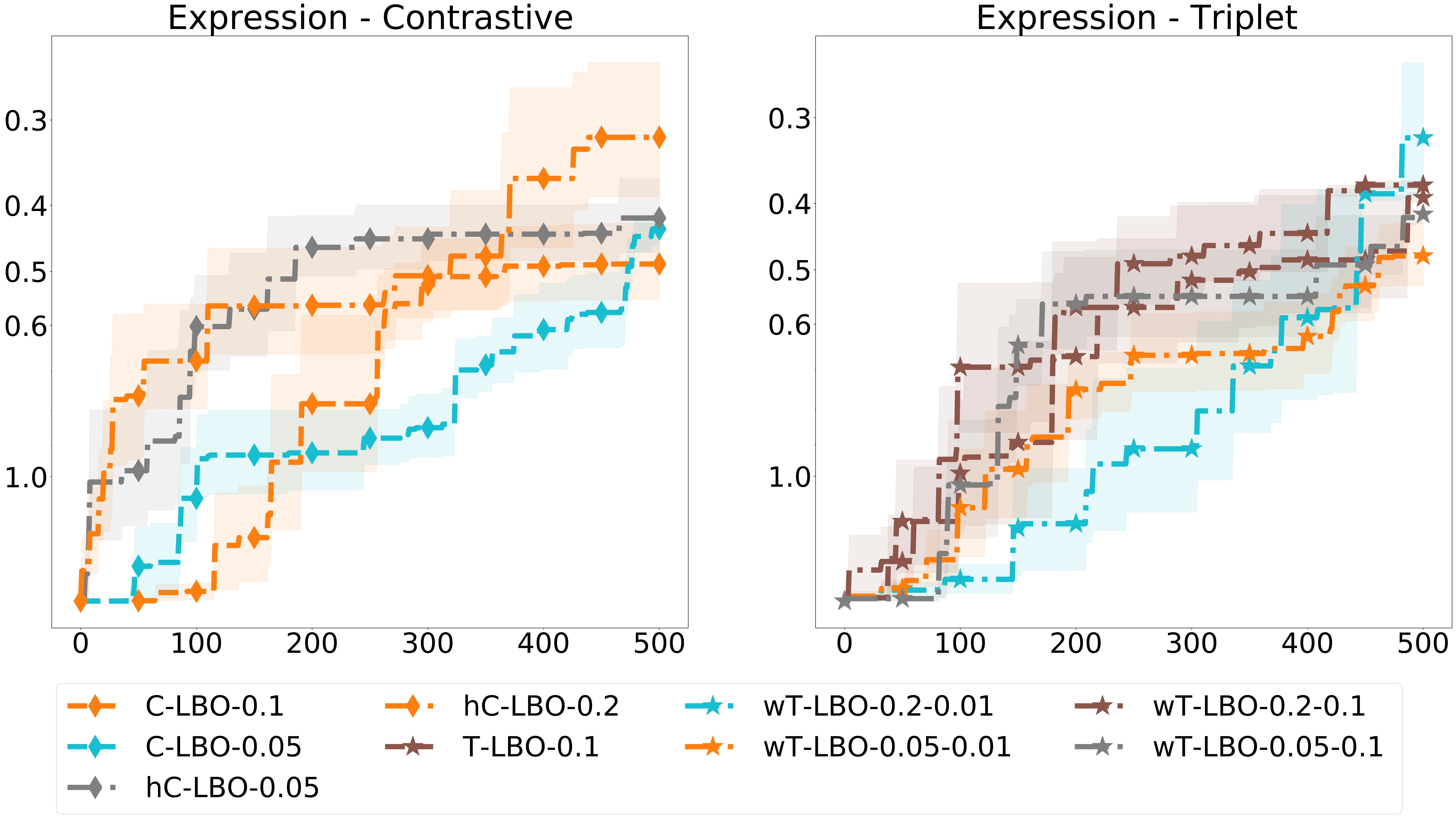}
    \caption{Comparison of the effect of parameter $\rho$ and $\nu$ across settings on the Expression task, in the semi-supervised setting. The threshold value $\rho$ is the first value in each of the figure labels. C-LBO and T-LBO are the settings described in \ref{App:elbo_comp_details} while hC-LBO is the hard contrastive loss and wT-LBO uses the weighted triplet loss with parameter $\nu$ being the second value in the label.}
    \label{fig:cold_ablation_expr}
\end{figure*}

\newpage
\subsection{Hardware}
For further reproducibility, we also provide details concerning the hardware we utilised in our experiments. We report an estimation of the running time of training the VAEs in Table \ref{table:hardware}. All experiments were run on a single GPU (either \textbf{NVIDIA} \texttt{Tesla V100} or \texttt{GeForce}).
\begin{table}[h!]\caption{Average estimated runtime.}
\centering 
\begin{tabular}{l c} 
 \hline
 Task & Average time\\
 \hline
 Topology & 0.75h\\
 Expression& 1.5h\\
 Molecule & 30h\\
\end{tabular}
\label{table:hardware}
\end{table}

\begin{figure*}
\centering
\subfloat[\textbf{T-LBO} -- Starting from all available datapoints $\mathcal{D}_\mathbb{L}$, the best molecule initially observed in the dataset (displayed in the top-left corner) has a score of $5.80$. Across acquisitions and retrainings of the JTVAE with triplet loss, the best score increases reaching $29.06$ after the final retraining (bottom right molecule)]{\label{app:TLBO-samples}\includegraphics[scale=0.3]{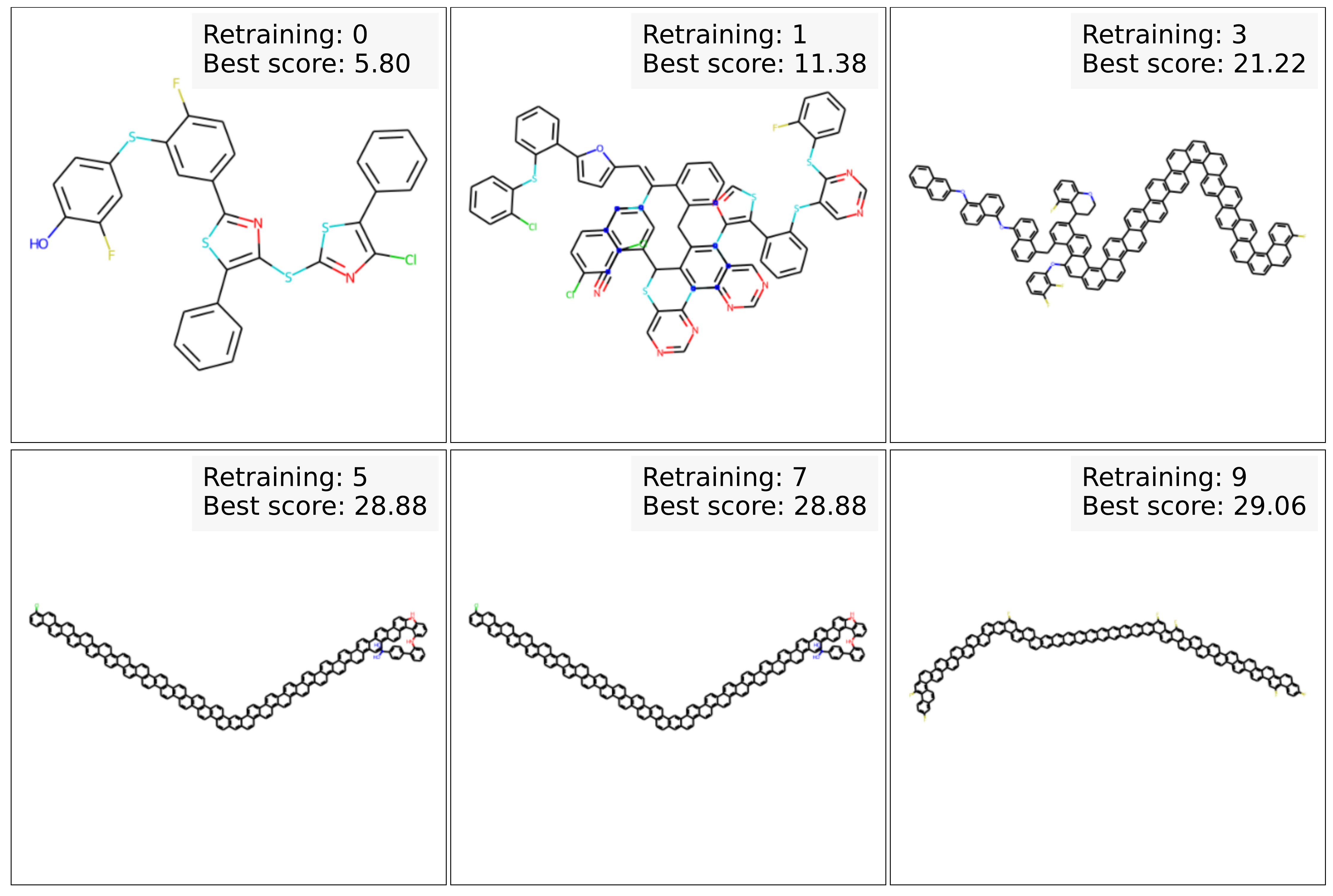}}
\par
\centering
\subfloat[\textbf{T-LBO} -- Starting with observation of only 1\% of labelled datapoints $\mathcal{D}_\mathbb{L}^\text{1\%}$, the best molecule initially available (displayed on the top-left corner) has a score of $4.09$. Under this semi-supervised setup, our method manage to recover \textbf{T-LBO} results reaching $29.14$ after only $6$ retrainings of the JTVAE with triplet loss (bottom right molecule)]{\label{app:STLBO-samples}\includegraphics[scale=0.3]{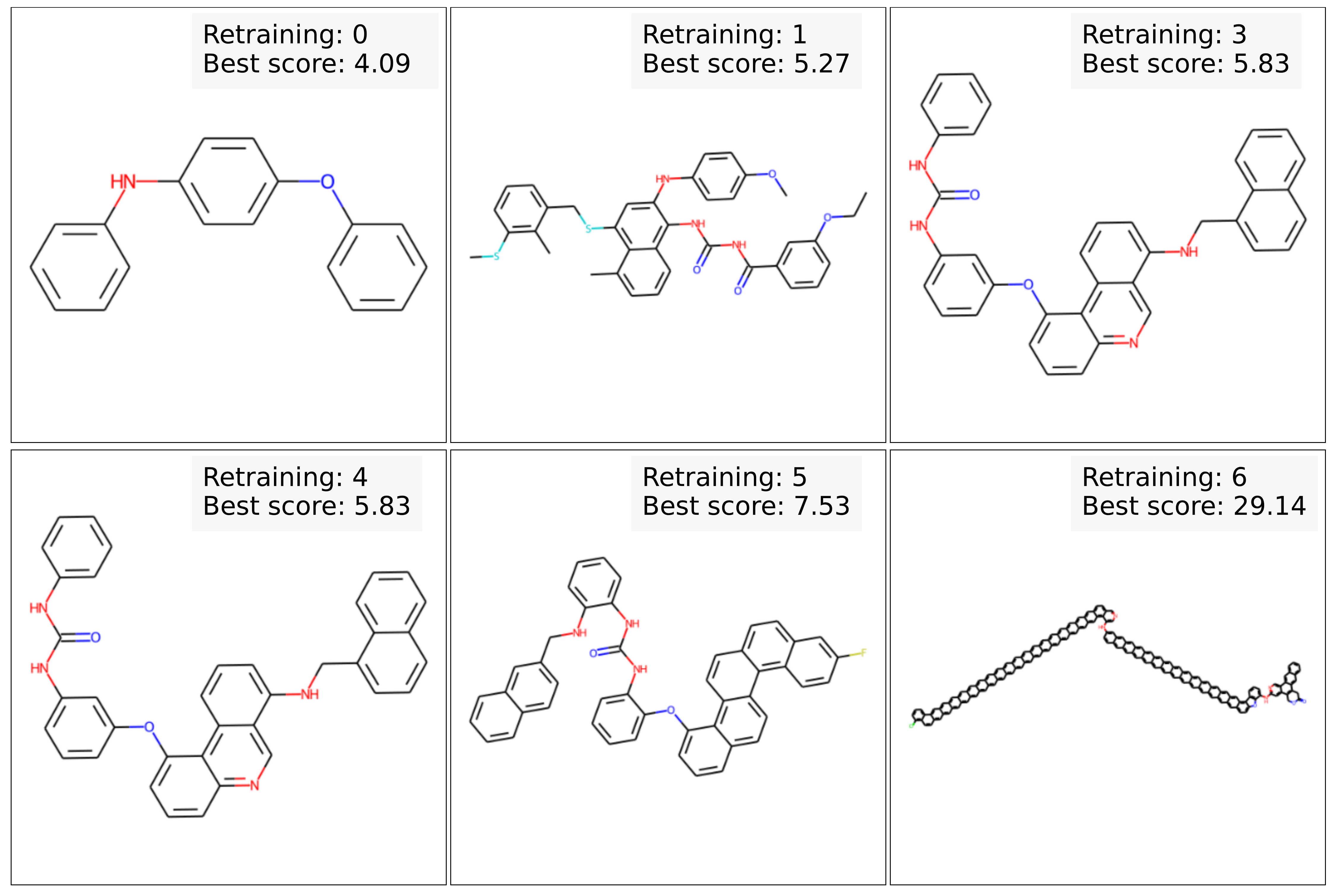}}
\caption{Evolution of the molecules obtained when applying \textbf{T-LBO} to penalised logP maximisation.}
\label{Fig:molecule--samples}
\end{figure*}

\section{Proof of Theorem 1}\label{App:Theory}
For clarity the proof of Theorem \textcolor{blue}{1} is split into subsections. First, we describe the assumptions we make for the black-box objective function and the encoder-decoder mappings. Second we provide the proof of  the sub-linear regret guarantees stated in Theorem \textcolor{blue}{1}. This proof will demonstrate that the declared assumptions afford sufficient conditions for vanishing regret. Third, we provide necessary conditions for vanishing regret by constructing an example black-box objective for which any latent space Bayesian optimisation method achieves constant regret. 

\subsection{Assumptions}\label{App:Assumptions}
We state here the assumptions guaranteeing vanishing regret for Algorithm \textcolor{blue}{1}.

\begin{assumption}\label{App:Assumption_1}
Let us consider the black-box objective $f(\cdot)$ defined on the primal space $\mathcal{X}$ and its latent counterpart $f_{\text{latent}}(\bm{z}) = \mathbb{E}_{\bm{x}\sim g_{\bm{\theta
}}(\cdot|\bm{z})}\left[f(\bm{x})\right]$ defined on the latent space via a decoder $g_{\bm{\theta}}(\cdot|\bm{z})$. We assume:
\begin{enumerate}
    \item Each evaluation of the black-box function $f(\bm{x})$ is subject to zero-mean Gaussian noise, i. e. $y(\bm{x}) = f(\bm{x}) + \epsilon$, where  $\epsilon\sim\mathcal{N}(\cdot; 0,\sigma^2_{\text{noise}})$.
    %\item for a given distance measure $\text{d}_{\mathcal{X}}(\cdot, \cdot):\mathcal{X}\times\mathcal{X}\to\mathbb{R}^+$ we assume that there exist constant $D > 0$ such that for any $\bm{x}, \bm{x}'$:
    %\begin{align*}
        %\text{d}_{\mathcal{X}}(\bm{x}, \bm{x}') \le D.
    %\end{align*}
    \item The function $f_{\text{latent}}(\cdot)$ is smooth according to the reproducing kernel Hilbert space associated with the GP squared exponential covariance function  $\text{k}_{\text{SE}}(\cdot,\cdot)$ (cf.~\cite{rasmussen2006}).
    \item The function $f(\cdot)$ is bounded, i.e. for any $\bm{x}\in \mathcal{X}$ we have  $|f(\bm{x})|\le G_{f}$ for some constant $G_f > 0$.
\end{enumerate}
\end{assumption}

\begin{assumption}\label{App:Assumption_2}
Without loss of generality, we assume the following:
\begin{enumerate}
    \item Given a dataset of observations $\mathcal{D}_{\mathbb{Z}} = \langle \bm{z}_i, f(\bm{x}_i)\rangle^N_{i=1}$, the associated posterior variance for $f_{\text{latent}}(\cdot)|\mathcal{D}_{\mathbb{Z}}$ is lower and upper-bounded, i.e. there are constants  $g_1,G_1 > 0$, such that for any $\bm{z}\in\mathcal{Z}$:
    $\sigma_{f_{\text{latent}}}(\bm{z}|\mathcal{D}_{\mathbb{Z}}) \ge g_1$ and $\sigma_{f_{\text{latent}}}(\bm{z}|\mathcal{D}_{\mathbb{Z}})  \le G_1$.
    \item In the covariance function associated with the trained GP for function $f_{\text{latent}}(\bm{z})$ for any $\bm{z}\in\mathcal{Z}$ we have $\text{k}_{\text{SE}}(\bm{z}, \bm{z}) =1$.
    \item The noise random variables $\epsilon\sim\mathcal{N}(\cdot; 0, \sigma^2_{\text{noise}})$ which corrupt the black-box function evaluations at each iteration $\ell$ of BO are uniformly bounded by $\sigma_{\text{noise}}$.
\end{enumerate}
\end{assumption}

\begin{assumption}\label{App:Assumption_3}
We assume that starting from some epoch $\tilde{\ell}$ the decoder $g_{\bm{\theta}^*_{\ell}}(\cdot|\bm{z})\in\mathcal{P}(\mathcal{X})$ improves its recovery ability of the global maximiser $\bm{x}^* = \arg\max_{\bm{x}\in\mathcal{X}}f(\bm{x})$ with all subsequent epochs, in the sense that  for any $\ell \ge \tilde{\ell}$ for input $\bm{x}^*$ the probability that it can be recovered by the decoder for some latent input $\bm{z}' = \bm{z}'(\ell)$ is increasing with epochs:
\begin{align*}
    \forall \ell\ge \tilde{\ell}, \ \  \exists \bm{z}' = \bm{z}'(\ell)\ \in\mathcal{Z}\text{ such that } \mathbb{P}\left[\bm{x}^*\sim g_{\bm{\theta}^*_{\ell}}(\cdot|\bm{z}')\right]\ge 1 - \Upsilon(\ell),
\end{align*}
for some decreasing positive-valued function $\Upsilon(\ell)$ such that  $\lim_{T\to +\infty}\frac{\int^T_{0}\Upsilon(a)da}{T} \to 0$.
\end{assumption}

Although the first two assumptions are standard in the BO literature (cf.~\cite{RegretEI2017}) the last assumption is necessary to study the regret behaviour between outer epochs when encoder and decoder are re-trained. This assumption presents the sufficient conditions for achieving sub-linear regret. In section \ref{App: necessary_condition} we provide the necessary conditions.

\subsection{Assumption Validation on Empirical Domain}\label{App:AssumptionV}
On a high-level the Assumption~\ref{App:Assumption_3} postulates that as we retrain the VAE every $n$ BO acquisitions, we can find better and better latent points, in the sense that the probabilities of generating the global optimiser $\boldsymbol{x^*}$  when decoding these points increase and converge asymptotically to 1.

To illustrate this assumption we would ideally want to exhibit a sequence of latent points $(\boldsymbol{z}'(\ell))_{\ell=0,\dots,L}$ with $\boldsymbol{z}'(\ell)$ belonging to the latent space of the VAE obtained after
$\ell$ retraining steps, and such that the sequence of optimal generation probability $(p'(\ell) = \mathbb{P}[\boldsymbol{x}^* \sim g_{\boldsymbol{\theta}^*_\ell}(\cdot|\boldsymbol{z}'(\ell))])_{\ell=0,\dots,L}$ is increasing towards $1$.    

\paragraph{Experimental design:} We choose to illustrate the soundness of Assumption 3 on the Topology experiment as in this case we know the global optimiser $\boldsymbol{x^*}$ (that is the target image) and can therefore compute the probability of optimal generation from any latent points. As the property described in Assumption~\ref{App:Assumption_3} is asymptotic and we have a finite number of BO steps, we loosen the focus on optimum generation and consider near-optimum generation probability $\tilde{p}(\ell) = \mathbb{P}_{\boldsymbol{x} \sim g_{\boldsymbol{\theta}^*_\ell}(\cdot|\boldsymbol{\tilde{z}}(\ell))}[\text{sim}(\boldsymbol{x}, \boldsymbol{x}^*) > \alpha]$ where $\text{sim}(\boldsymbol{x}, \boldsymbol{x}^*)$ quantifies the similarity between $\boldsymbol{x}$ and $\boldsymbol{x}^*$  and $\alpha$ is a given threshold. We proceed as follows to obtain a sequence of latent points $(\boldsymbol{\tilde{z}}(\ell))_{\ell=0,\dots,10}$ for which the probabilities of generating near-optimal points $(\tilde{p}(\ell))_{\ell=0,\dots,10}$ is increasing.  
\begin{itemize}
    \item At the initial step $\ell=0$: we encode the target image $\boldsymbol{x^*}$ to get a latent point $\boldsymbol{\tilde{z}}(0)$ (encoder outputs a mean and a standard deviation and we set $\boldsymbol{\tilde{z}}(0)$ to the mean). To get the initial near-optimal recovery probability $\tilde{p}(0)$, we decode $\boldsymbol{\tilde{z}}^*(0)$ $100$ times back to the original space and approximate the exact probability by the frequency of near-optimal generation based on the similarity score and threshold ($\alpha=.8$).
    \item After retraining step $\ell+1$, we encode the target image $\boldsymbol{x^*}$ and use the mean and standard deviation output by the encoder to sample candidate latent points $(\boldsymbol{\tilde{z}}_i)_i$. For each candidate latent point $\boldsymbol{\tilde{z}}_i$, we compute it's associate $\tilde{p}_i$ using MC sampling as described for the initial step. We keep testing latent points until obtaining one latent point $\boldsymbol{\tilde{z}}_{i^*})$ for which $\tilde{p}_{i^*} > \tilde{p}(\ell)$ and we set $\boldsymbol{\tilde{z}}(\ell + 1)  = \boldsymbol{\tilde{z}}_{i^*}$ and $\tilde{p}(\ell + 1) = \tilde{p}_{i^*}$.
\end{itemize}

We consider the Assumption~\ref{App:Assumption_3} corroborated when the process above terminates as it then produce a sequence of latent points whose optimum recovery ability is increasing.
  	
\paragraph{Experimental results:} Repeating the experiment across 5 random seeds we were able to exhibit sequences of latent points $(\boldsymbol{\tilde{z}}(\ell))_{\ell=0,\dots,10}$ as described in our experimental design (average and standard deviation of $\tilde{p}(\ell)$ are reported Figure~\ref{fig:domain_recovery_experiment} for each retraining step $\ell$). Moreover, we note that we obtain higher probability of generating near-optimal points when searching in  latent spaces shaped with T-LBO than without (i.e. LBO). 

\begin{figure}
    \centering
    \includegraphics[scale=0.7]{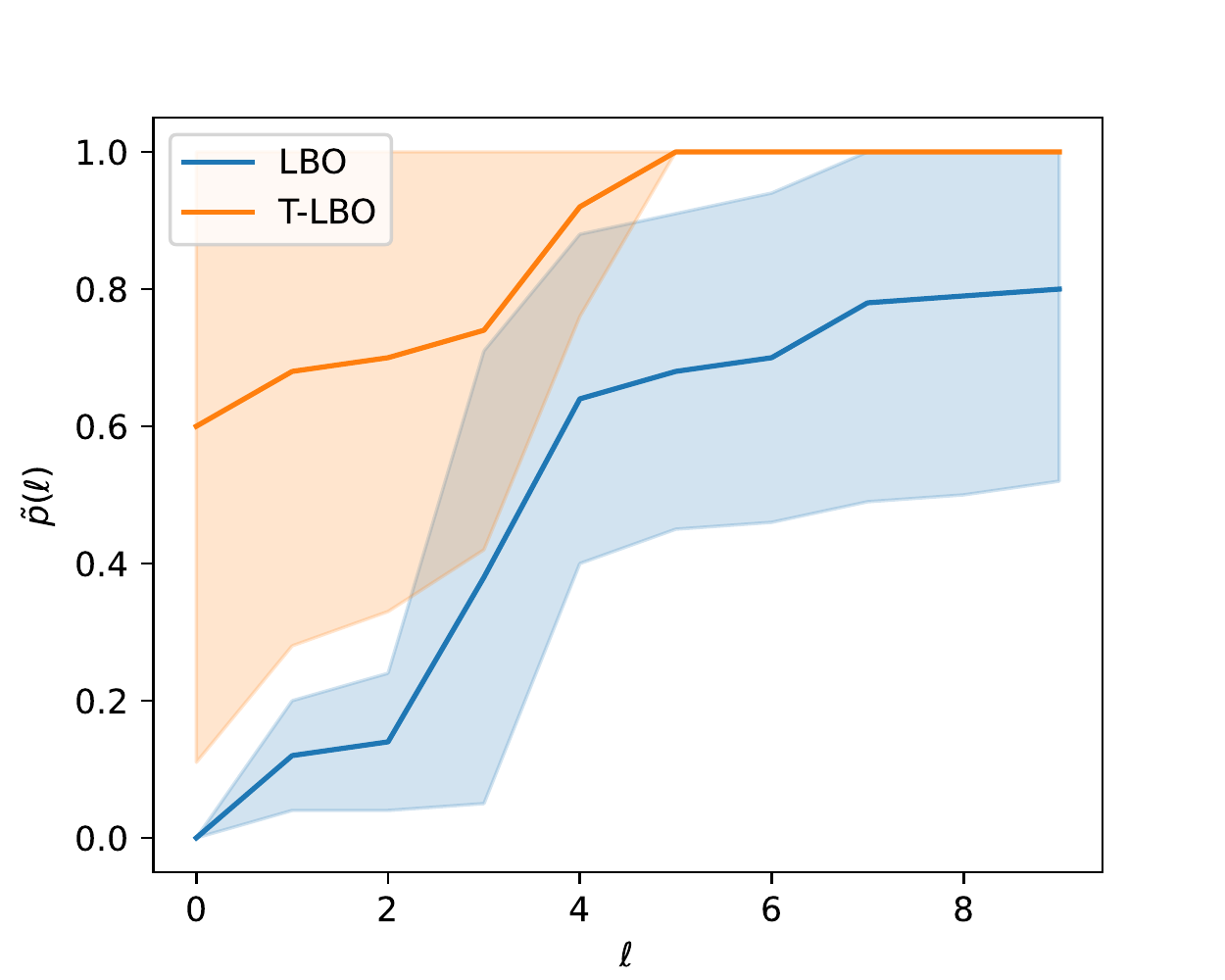}
    \caption{Domain Recovery results on Topology task. T-LBO is able to achieve $\tilde{p}(10)=1$ with no standard deviation where LBO only achieves about $\tilde{p}(10)=0.8$ with about 0.3 standard deviation.}
    \label{fig:domain_recovery_experiment}
\end{figure}

% Of course, this experiment does not mean to demonstrate that Assumption 3 would hold for any optimisation problem (and for problems where the global optimiser is not known in advance it cannot be verified) but nonetheless shows that it is not utterly unrealistic.
The intention of this domain recovery experiment is not to claim that Assumption~\ref{App:Assumption_3} holds for any general optimisation problem but rather that it holds in the setting where $\bm{x^*}$ is known. This is not an artefact of our proof but reflects rather the fact that knowledge of the black-box is assumed generally in such regret proofs, for example assumptions about the RKHS norm of the black-box in \citep{Srinivas2010}.

\subsection{Proof of Vanishing Regret}
In this section we present the proof of vanishing regret within the scope of Assumptions \ref{App:Assumption_1},\ref{App:Assumption_2} and \ref{App:Assumption_3}. We start by fixing the stochasticity induced by all encoders and decoders up to epoch $\ell$ and study the regret behaviour during $q$ consecutive steps of the BO procedure on this epoch. Then, using assumption 3, we consider the effect of switching the encoder and decoder as a result of re-training. Finally, we derive the optimal separation of the total evaluation budget $B$ between outer epochs $L$ and the total number of BO steps $q$. \\ 

Let us consider all stochasticity induced by the encoder and decoder during  epoch $\ell$. This can be formally defined as following collection of independent random variables:
\begin{enumerate}
    \item Stochasticity induced by the encoder when constructing the initial latent dataset $\mathcal{D}_{\mathbb{Z}}$ consisting of points  $\bm{z}_{1,\ell},\ldots, \bm{z}_{N_{\ell-1},\ell}$ with $N_{\ell-1}$ being the size of dataset $\mathcal{D}_{\mathbb{Z}}$ at the end of the $\ell-1^{th}$ epoch. This stochasticity is defined by a collection of i.i.d random variables $\bm{\zeta}_1,\ldots, \bm{\zeta}_{N_{\ell-1}}$. 
    %\item Randomness induced by the decoder and encoder when constructing the reconstruction error dataset $\mathcal{D}^{(\text{Err})}$. This randomness is defined by a collection of i.i.d. random variables $\{\bm{\eta}^{(p)}_1\ldots, \bm{\eta}^{(p)}_{N_{\ell-1}}\}^{P}_{p = 2}$ associated with the decoder and i.i.d. random variables $\{\bm{\zeta}^{(h)}_1,\ldots, \bm{\zeta}^{(h)}_{N_{\ell-1}}\}^{H}_{h=2}$ associated with the encoder. Here $P,H$ defines how many samples are used in Monte Carlo approximation of expectation operators in Equation \ref{Eq: Req_error_expression}. 
    \item Stochasticity induced by the decoder when constructing the primal outputs corresponding to new latent candidates during $q$ steps of the BO procedure. This stochasticity is defined by a collection of i.i.d random variables $\bm{\eta}_{\ell,0},\ldots, \bm{\eta}_{\ell, q-1}$ associated with the decoder.
    %\item Randomness induced by sampling reconstruction error corresponding to variable $\bm{z}$ in the denominator of acquisition function given by Equation \ref{Eq: error_aware_acquisition}. This expectation is generated by a collection of i.i.d. variables $\{\chi_i\}^{q}_{i=1}$.
\end{enumerate}
For clarity, we denote the stochasticity defined by $\bm{\zeta}_1,\ldots, \bm{\zeta}_{N_{\ell-1}}$ as $\mathbb{A}_{\ell}$ and share it across all $q$ steps of BO routine. We denote the stochasticity induced by $\bm{\eta}_{\ell,0},\ldots, \bm{\eta}_{\ell, k-1}$ in the first $k$ steps of the BO procedure as $\mathbb{B}_{\ell,k}$, so $\mathbb{B}_{\ell,0} \subseteq \mathbb{B}_{\ell, 1} \subseteq \ldots \subseteq \mathbb{B}_{\ell,q-1}$ and combine these collections in $\mathbb{U}_{\ell} = \{\mathbb{A}_1, \ldots, \mathbb{A}_{\ell}\}$ and $\mathbb{V}_{\ell,k} = \{\mathbb{B}_{1,q-1}, \ldots, \mathbb{B}_{\ell-1,q-1},\mathbb{B}_{\ell,k}\}$ - all stochasticity introduced by the generative model for the first full $\ell-1$ epochs and first $k$ inner iterations of BO look at $\ell^{th}$ epoch. Following the definition of cumulative regret we define the notion of stochastic cumulative regret at epoch $\ell^{th}$ as 
\begin{equation*}
    R_{\ell}(\mathbb{U}_{\ell}, \mathbb{V}_{\ell,q-1}) = \sum_{k=0}^{q-1}f(\bm{x}^{*}) - f_{\text{latent}}(\hat{\bm{z}}_{\ell,k+1}(\mathbb{U}_{\ell}, \mathbb{V}_{\ell,k})),
\end{equation*}
where $\bm{x}^* = \arg\max_{\bm{x}\in\mathcal{X}}f(\bm{x})$ is a global maximiser of the black-box objective,  $\hat{\bm{z}}_{\ell,k+1}(\mathbb{U}_{\ell},\mathbb{V}_{\ell,k}) = \arg\max_{\bm{z}\in\mathcal{Z}}\alpha_{\text{EI}}(\bm{z}|\mathcal{D}_{\mathbb{Z}}(\mathbb{U}_{\ell}, \mathbb{V}_{\ell,k}))$ is the latent point obtained by maximising the the  Expected Improvement acquisition function defined for observations $\mathcal{D}_{\mathbb{Z}}(\mathbb{U}_{\ell}, \mathbb{V}_{\ell,k}))$, and $f_{\text{latent}}(\bm{z}) = \mathbb{E}_{\bm{x}\sim g_{\bm{\theta}^*_{\ell}(\cdot|\bm{z})}}\left[{f(\bm{x})}\right]$ Given this definition, it is easy to see that the overall cumulative regret after $L$ epochs is:
\begin{equation*}
    \text{Regret}_{L,q}(\langle \hat{\bm{z}}_{\ell,k}(\mathbb{U}_{L},\mathbb{V}_{L,q-1})\rangle) = \sum_{\ell=1}^{L}R_{\ell}(\mathbb{U}_{\ell}, \mathbb{V}_{\ell,q-1}).
\end{equation*}
where due to the relation $\mathbb{U}_{\ell} \subset \mathbb{U}_{L}$ and $\mathbb{V}_{\ell, k} \subset \mathbb{V}_{L, q-1}$ we have   $\hat{\bm{z}}_{\ell,k}(\mathbb{U}_{L},\mathbb{V}_{L,q-1}) = \hat{\bm{z}}_{\ell,k}(\mathbb{U}_{\ell},\mathbb{V}_{\ell,k-1})$. To analyse the regret bound we first investigate the behaviour of the regret $R_{\ell}(\mathbb{U}_{\ell}, \mathbb{V}_{\ell,q-1})$. To do so, let us fix some realisation of all random variables collected in $\mathbb{U}_{L}, \mathbb{V}_{L,q-1}$. We denote these realisations as $U_{L}$ and $V_{L,q-1}$ respectively. Note for these fixed realisations, the dataset $\mathcal{D}_{\mathbb{Z}} = \mathcal{D}_{\mathbb{Z}}(U_{L}, V_{L, q-1})$ at any inner iteration $k$ consists of latent points (defined by fixed $U_{\ell}\subset U_{L}, V_{\ell,k}\subset V_{L,q-1}$) and the corresponding black-box function evaluation $f(\bm{x}_{\ell,k} = g_{\bm{\theta}^*_{\ell}}(\bm{z}_{\ell,k}; \bm{\eta}_{\ell, k}))$ distorted only by the observation noise  $\epsilon_k\sim\mathcal{N}(\cdot; 0,\sigma^2_{\text{noise}})$ (Assumption \ref{App:Assumption_1}). Next, for a fixed realisations $U_L, V_{L,q-1}$ we establish the following:
\begin{lemma}\label{App:additional_lemma_one}
Let $\tau,\delta_0\in(0,1)$ be the stopping criterion and confidence parameter in Algorithm \textcolor{blue}{1}. Consider fixed realisations $U_{\ell}, V_{\ell, q-1}$ and let Assumptions \ref{App:Assumption_1},\ref{App:Assumption_2},\ref{App:Assumption_3} hold. Then, for any epoch $\ell \ge \tilde{\ell}$ with probability at least $1 - 2q\delta_0$ we have:
\begin{align}\label{Lemma_one_result}
    \sum_{k=0}^{q-1}&\left[f(\bm{x}^*) - f_{\text{latent}}(\hat{\bm{z}}_{\ell, k+1}(U_{\ell}, V_{\ell, k}))\right] 
    \le  ~q\Upsilon(\ell)G_f + \mathcal{O}(\sqrt{q}\log^{d+2.5} q),
\end{align}
where $\hat{\bm{z}}_{\ell,k+1}(U_{\ell}, V_{\ell, k}) = \arg\max_{\bm{z}\in\mathcal{Z}}\alpha_{\text{EI}}(\bm{z}|\mathcal{D}_{\mathbb{Z}}(U_{\ell}, V_{\ell, k-1}))$ - is a latent candidate returned in the $k+1^{th}$ step of BO procedure that corresponds to observations $\mathcal{D}_{\mathbb{Z}}(U_{\ell}, V_{\ell, k})$ associated with observations  $U_{\ell}\subset U_{L}$ and $V_{\ell, k-1}\subset V_{L, q-1}$.
\end{lemma}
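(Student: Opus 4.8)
The plan is to split each per-step instantaneous regret $f(\bm{x}^*) - f_{\text{latent}}(\hat{\bm{z}}_{\ell,k+1})$ into a \emph{representational} gap, controlled by Assumption~\ref{App:Assumption_3}, and an \emph{optimisation} gap, controlled by the standard analysis of the EI acquisition. Concretely, letting $\bm{z}'(\ell)$ denote the latent point furnished by Assumption~\ref{App:Assumption_3}, I would write
\begin{equation*}
f(\bm{x}^*) - f_{\text{latent}}(\hat{\bm{z}}_{\ell,k+1}) = \underbrace{\left[f(\bm{x}^*) - f_{\text{latent}}(\bm{z}'(\ell))\right]}_{\text{representational gap}} + \underbrace{\left[f_{\text{latent}}(\bm{z}'(\ell)) - f_{\text{latent}}(\hat{\bm{z}}_{\ell,k+1})\right]}_{\text{optimisation gap}},
\end{equation*}
and bound the two pieces separately before summing over the $q$ inner BO steps.

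For the representational gap I would invoke Assumption~\ref{App:Assumption_3} together with the boundedness of $f$ (Assumption~\ref{App:Assumption_1}, part 3). Writing $f_{\text{latent}}(\bm{z}'(\ell)) = \mathbb{E}_{\bm{x}\sim g_{\bm{\theta}^*_{\ell}}(\cdot|\bm{z}'(\ell))}[f(\bm{x})]$ and conditioning on whether the decoder recovers $\bm{x}^*$, the mass $\ge 1 - \Upsilon(\ell)$ placed on $\bm{x}^*$ contributes $f(\bm{x}^*)$ while the remaining mass $\le \Upsilon(\ell)$ contributes at least $-G_f$. This yields $f(\bm{x}^*) - f_{\text{latent}}(\bm{z}'(\ell)) \le \Upsilon(\ell)\,(f(\bm{x}^*) + G_f) \le 2\Upsilon(\ell) G_f$, a bound independent of $k$. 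Summing over the $q$ inner iterations produces the $\mathcal{O}(q\Upsilon(\ell)G_f)$ term in the statement; note that this piece is deterministic once the realisations $U_\ell, V_{\ell,q-1}$ are fixed, so it contributes nothing to the failure probability.

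For the optimisation gap, since $\bm{z}'(\ell)\in\mathcal{Z}$ we have $f_{\text{latent}}(\bm{z}'(\ell)) \le \max_{\bm{z}\in\mathcal{Z}} f_{\text{latent}}(\bm{z})$, so the summed optimisation gaps are dominated by the cumulative regret incurred by GP-EI when optimising $f_{\text{latent}}$ over $\mathcal{Z}$. Here the two loops decouple: for the fixed realisations $U_\ell, V_{\ell,q-1}$ the dataset $\mathcal{D}_{\mathbb{Z}}$ is a set of latent inputs with black-box targets corrupted only by the Gaussian observation noise of Assumption~\ref{App:Assumption_1}, which is exactly the setting of classical GP bandit analysis. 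Because $f_{\text{latent}}$ lies in the RKHS of the squared-exponential kernel (Assumption~\ref{App:Assumption_1}, part 2) and the posterior variance is uniformly sandwiched between $g_1$ and $G_1$ (Assumption~\ref{App:Assumption_2}), I would apply the expected-improvement regret bound of~\cite{RegretEI2017}: a union bound over the $q$ steps, each requiring a GP confidence event of probability $1-2\delta_0$, delivers the overall confidence $1 - 2q\delta_0$ and a cumulative optimisation regret that scales as $\sqrt{q}$ up to polylogarithmic factors governed by the maximum information gain $\gamma_q = \mathcal{O}((\log q)^{d+1})$ of the SE kernel; accounting for the additional logarithmic widths of the EI confidence intervals produces the stated $\mathcal{O}(\sqrt{q}\log^{d+2.5}q)$.

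The main obstacle is this optimisation-gap step: cumulative-regret control for EI (as opposed to UCB) is delicate, and the argument must carefully thread the uniform posterior-variance bounds of Assumption~\ref{App:Assumption_2} through the EI confidence intervals so that the per-step simple regret telescopes into the $\sqrt{q}$ scaling while the SE-kernel information gain supplies the correct $(\log q)^{d+2.5}$ exponent. By contrast the representational-gap step is conceptually the novelty of the analysis — it is where the new Assumption~\ref{App:Assumption_3} enters to absorb the error introduced by the VAE decoder across outer epochs — but it is technically routine once the conditioning argument above is set up.
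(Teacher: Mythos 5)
Your proposal is correct and follows essentially the same route as the paper: the paper likewise isolates a representational term bounded via Assumption~\ref{App:Assumption_3} (through the point $\bm{z}^{*}_{\ell}$ with $|f_{\text{latent}}(\bm{z}^{*}_{\ell})-f(\bm{x}^{*})|\le \Upsilon(\ell)G_f$) and controls the remainder with the EI machinery of~\cite{RegretEI2017} (its Lemmas~1, 6, 7 and~9, the posterior-variance bounds of Assumption~\ref{App:Assumption_2}, the stopping criterion $\tau$, and a union bound over the $q$ steps yielding $1-2q\delta_0$). The only cosmetic difference is that the paper splits the instantaneous regret at the incumbent $\xi_{\ell,k}$ and re-derives the EI cumulative bound inline, whereas you split at $f_{\text{latent}}(\bm{z}'(\ell))$ and invoke the cited EI regret result more directly.
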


\begin{proof}
Let $\bm{x}^*_{\ell}(\Upsilon(\ell)) = \arg\max_{\bm{x}: \exists \bm{z}^*_{\ell}\in\mathcal{Z}\text{ s.t. } \mathbb{P}[\bm{x}\sim g_{\bm{\theta}^*_{\ell}}(\cdot|\bm{z}^*_{\ell})] \ge 1 - \Upsilon(\ell)}f(\bm{x})$ be the best primal point that can be recovered using the decoder at epoch $\ell$ with probability at least $1 - \Upsilon(\ell)$. Assumption \ref{App:Assumption_3} guarantees that for any fixed $U_{\ell}, V_{\ell,k}$ collection  $\{\bm{x}: \exists \bm{z}^*_{\ell}\in\mathcal{Z}\text{ s.t. } \mathbb{P}[\bm{x}\sim g_{\bm{\theta}^*_{\ell}}(\cdot|\bm{z}^*_{\ell})] \ge 1 - \Upsilon(\ell)\}$ is not empty, because at least $\bm{x}^*$ belongs to this collection. Moreover, by the definition of $\bm{x}^*_{\ell}(\Upsilon(\ell))$ we have:
\begin{align*}
    \bm{x}^*_{\ell}(\Upsilon(\ell)) = \bm{x}^*, \ \ \ \forall \ell \ge \tilde{\ell}
\end{align*}
Hence, for any epoch $\ell\ge \tilde{\ell}$ we can write:
\begin{align*}
    R_{\ell}(U_{\ell}, V_{\ell,q-1}) &= \sum_{k=0}^{q-1}\big[f(\bm{x}^*) - f_{\text{latent}}(\hat{\bm{z}}_{\ell,k+1}(U_{\ell}, V_{\ell,k}))\big] \\ &=\sum_{k=0}^{q-1}\big[f(\bm{x}^*) - f(\bm{x}^*_{\ell}(\Upsilon(\ell))) + f(\bm{x}^*_{\ell}(\Upsilon(\ell))) -  f_{\text{latent}}(\hat{\bm{z}}_{\ell,k+1}(U_{\ell}, V_{\ell,k}))\big] \\\nonumber
    &= \sum_{k=0}^{q-1}(f(\bm{x}^*) - f(\bm{x}^*) + f(\bm{x}^*) -  f_{\text{latent}}(\hat{\bm{z}}_{\ell,k+1}(U_{\ell}, V_{\ell,k}))) \\ \nonumber
    &= \sum_{k=0}^{q-1}(f(\bm{x}^*) -  f_{\text{latent}}(\hat{\bm{z}}_{\ell,k+1}(U_{\ell}, V_{\ell,k}))) \\\nonumber
    &= \sum_{k=0}^{q-1}(\underbrace{f(\bm{x}^*) - \xi_{\ell,k}}_{A_{\ell,k}(U_{\ell}, V_{\ell,k})} + \underbrace{\xi_{\ell,k} -   f_{\text{latent}}(\hat{\bm{z}}_{\ell,k+1}(U_{\ell}, V_{\ell,k}))}_{B_{\ell,k}(U_{\ell}, V_{\ell,k})})
\end{align*}
where $\xi_{\ell,k} = \xi_{\ell,k}(U_{\ell}, V_{\ell, k})$ is the maximum black-box function value observed so far (in the first $k$ inner BO steps ) at epoch $\ell$. Note that this value also depends on realisations $U_{\ell}, V_{\ell, k}$, but for brevity we use $\xi_{\ell, k}$ for this value. Now, let us study each term separately. 
\begin{enumerate}
    \item Because  $\bm{x}^* = \bm{x}^*_{\ell}(\Upsilon(\ell))$ can be recovered with probability at least $1 - \Upsilon(\ell)$ we have that $|f_{\text{latent}}(\bm{z}^*_{\ell}) - f(\bm{x}^*)| \le \Upsilon(\ell) G_f$. Hence, 
    \begin{align*}
        A_{\ell,k}(U_{\ell}, V_{\ell,k}) &= f(\bm{x}^*) - \xi_{\ell,k} \le \Upsilon(\ell)G_{f} + f_{\text{latent}}(\bm{z}^*_{\ell}) - \xi_{\ell,k}
    \end{align*}
    Lemma 6 from ~\cite{RegretEI2017} gives, that with probability at least $1 - \delta_0$ for any $\bm{z}\in\mathcal{Z}$:
    \begin{align*}
        \text{ReLU}(f_{\text{latent}}(\bm{z}) - \xi_{\ell,k}) - \sqrt{\beta_k}\sigma_{f_{\text{latent},\ell,k}}(\bm{z}) \le  \mathbb{E}_{f_{\text{latent}}(\cdot)|\mathcal{D}_{\mathbb{Z}}(U_{\ell}, V_{\ell,k})}\left[\text{ReLU}(f_{\text{latent}}(\bm{z}) - \xi_{\ell,k})\right]
    \end{align*}
    where for the squared exponential kernel $\text{k}_{\text{SE}}$ we have $\beta_k = \mathcal{O}\left(\log^{d+1} q\log^3\left[\frac{k}{\delta_0}\right]\right)$, $\sigma^2_{f_{\text{latent},\ell,k}}(\bm{z}|\mathcal{D}_{\mathbb{Z}}(U_{\ell}, V_{\ell,k}))$ is the posterior variance associated with $f_{\text{latent}}(\cdot)$ based on observations $\mathcal{D}_{\mathbb{Z}}(U_{\ell}, V_{\ell,k})$. Hence,
    we have with probability at least $1 - \delta_0$:
    \begin{align*}
        f_{\text{latent}}(z^*_{\ell}) - \xi_{\ell,k} &\le \sqrt{\beta_{k}}\sigma_{f_{\text{latent}}, \ell,k}(\bm{z}^*_{\ell}|\mathcal{D}_{\mathbb{Z}}(U_{\ell}, V_{\ell,k})) + \mathbb{E}_{f_{\text{latent}}(\cdot)|\mathcal{D}_{\mathbb{Z}}(U_{\ell}, V_{\ell,k} )}\left[\text{ReLU}(f_{\text{latent}}(\bm{z}^*_{\ell}) - \xi_{\ell,k})\right]  \\\nonumber
        &\le^{1} \sqrt{\beta_{k}}\sigma_{f_{\text{latent}},\ell, k}(\bm{z}^*_{\ell}|\mathcal{D}_{\mathbb{Z}}(U_{\ell}, V_{\ell,k}))+ \alpha_{\text{EI}}(\hat{\bm{z}}_{\ell, k+1}|\mathcal{D}_{\mathbb{Z}}(U_{\ell}, V_{\ell, k})) \\\nonumber
        &=^{2}\sigma_{f_{\text{latent}},\ell,k}(\hat{\bm{z}}_{\ell,k+1}(U_{\ell}, V_{\ell, k})|\mathcal{D}_{\mathbb{Z}}(U_{\ell}, V_{\ell,k}))\times \nu(s_{\ell,k}(\hat{\bm{z}}_{\ell,k+1}(U_{\ell}, V_{\ell, k}))) \\\nonumber
        & ~~~~~~~~ + \sqrt{\beta_{k}}\sigma_{f_{\text{latent}},\ell, k}(\bm{z}^*_{\ell}|\mathcal{D}_{\mathbb{Z}}(U_{\ell}, V_{\ell,k})),
    \end{align*}
    where we use notation $s_{\ell,k}(\hat{\bm{z}}_{\ell,k+1}(U_{\ell}, V_{\ell,k})) = \frac{\mu_{f_{\text{latent}},\ell,k}(\hat{\bm{z}}_{\ell,k+1}(U_{\ell}, V_{\ell,k})|\mathcal{D}_{\mathbb{Z}}(U_{\ell}, V_{\ell,k})) - \xi_{\ell,k}}{\sigma_{f_{\text{latent}},\ell, k}(\hat{\bm{z}}_{\ell,k+1}(U_{\ell}, V_{\ell,k})|\mathcal{D}_{\mathbb{Z}}(U_{\ell}, V_{\ell,k}))}$
    with $\mu_{f_{\text{latent}},\ell,k}(\bm{z}|\mathcal{D}_{\mathbb{Z}}(U_{\ell}, V_{\ell,k}))$ the posterior mean associated with $f_{\text{latent}}(\cdot)$ based on observations $\mathcal{D}_{\mathbb{Z}}(U_{\ell}, V_{\ell,k})$, function $\nu(s) = s\Phi(s) + \phi(s)$ with $\Phi(s),\phi(s)$ being c.d.f. and p.d.f.of a standard univariate Gaussian respectively, in step 1 we use the definition of $\hat{\bm{z}}_{\ell,k+1}(U_{\ell}, V_{\ell, k})$, and in step 2 we use the result of Lemma 1 from ~\cite{RegretEI2017}.
    Hence, we can bound the term $A_{\ell,k}(U_{\ell}, V_{\ell,k})$ as follows:
    \begin{align*}
        A_{\ell,k}(U_{\ell}, V_{\ell,k}) \le & ~\Upsilon(\ell)G_f + \sqrt{\beta_{k}}\sigma_{f_{\text{latent}},\ell, k}(\bm{z}^*_{\ell}|\mathcal{D}_{\mathbb{Z}}(U_{\ell}, V_{\ell,k})) \\
        & + \sigma_{f_{\text{latent}},\ell,k}(\hat{\bm{z}}_{\ell,k+1}(U_{\ell}, V_{\ell, k})|\mathcal{D}_{\mathbb{Z}}(U_{\ell}, V_{\ell,k})) \nu(s_{\ell,k}(\hat{\bm{z}}_{\ell,k+1}(U_{\ell}, V_{\ell, k}))).
    \end{align*}
    with probability at least $1 - \delta_0$.
    Hence, we have (with probability at least $1 - q\delta_0$ ):
    \begin{align*}
        \sum_{k=0}^{q-1}A_{\ell,k}(U_{\ell}, V_{\ell,k}) &\le^{1}
        q\Upsilon(\ell)G_f + \sum_{k=0}^{q-1}\sqrt{\beta_{k}}\sigma_{f_{\text{latent}},\ell, k}(\bm{z}^*_{\ell}|\mathcal{D}_{\mathbb{Z}}(U_{\ell}, V_{\ell,k})) \\\nonumber
        & ~~~~ + \sum_{k=0}^{q-1}\sigma_{f_{\text{latent}},\ell,k}(\hat{\bm{z}}_{\ell,k+1}(U_{\ell}, V_{\ell,k})|\mathcal{D}_{\mathbb{Z}}(U_{\ell}, V_{\ell,k})) \nu(s_{\ell,k}(\hat{\bm{z}}_{\ell,k+1}(U_{\ell}, V_{\ell,k}))) \\ 
        & \le^1 q\Upsilon(\ell)G_f + \mathcal{O}\left[\sqrt{\frac{q\beta_q\log^{d+1}q}{\log(1 + \sigma^{-2}_{\text{noise}})}}\right]\\
        & ~~~~ + \sum_{k=0}^{q-1}\sigma_{f_{\text{latent}},\ell,k}(\hat{\bm{z}}_{\ell,k+1}(U_{\ell}, V_{\ell,k})|\mathcal{D}_{\mathbb{Z}}(U_{\ell}, V_{\ell,k})) \nu(s_{\ell,k}(\hat{\bm{z}}_{\ell,k+1}(U_{\ell}, V_{\ell,k})))
    \end{align*}
    where in step 1 we use the result of Lemma 7 in ~\cite{RegretEI2017}, Cauchy-Schwartz inequality. 
    From result of Lemma 9 in ~\cite{RegretEI2017} it follows that for stopping criteria $\tau < \sqrt{\frac{g_1}{2\pi}}$ we have:
    \begin{align*}
        &\nu(s_{\ell,k}(\hat{\bm{z}}_{\ell,k+1}(U_{\ell}, V_{\ell,k}))) \le 1 + \log\left(\frac{G^2_1}{2\pi\tau^2}\right)
    \end{align*}
    Hence, applying the result of Lemma 7 in ~\cite{RegretEI2017}, Cauchy-Schwartz inequality gives:
    \begin{align}\label{interm_bound_ten_new}
        &\sum_{k=0}^{q-1}A_{\ell,k}(U_{\ell}, V_{\ell,k}) \le q\Upsilon(\ell)G_f + \mathcal{O}\left[\sqrt{\frac{q\log^{d+1}q}{\log(1 + \sigma^{-2}_{\text{noise}})}}\right] \left[\sqrt{\beta_q} + \left[1 + \log\left(\frac{G^2_1}{2\pi\tau^2}\right)\right]\right].
    \end{align}
    with probability at least $1 - q\delta_0$.

    \item For the term $B_{\ell,k}(U_{\ell}, V_{\ell,k})$ we have, with probability at least $1 - \delta_0$:
    \begin{align*}
        B_{\ell,k}(U_{\ell}, V_{\ell,k}) &= \xi_{\ell,k} - \mu_{f_{\text{latent}},\ell, k}(\hat{\bm{z}}_{\ell,k+1}(U_{\ell}, V_{\ell,k})|\mathcal{D}_{\mathbb{Z}}(U_{\ell}, V_{\ell,k})) + \mu_{f_{\text{latent}},\ell, k}(\hat{\bm{z}}_{\ell,k+1}(U_{\ell}, V_{\ell,k})|\mathcal{D}_{\mathbb{Z}}(U_{\ell}, V_{\ell,k})) \\
        & ~~~~ - f_{\text{latent}}(\hat{\bm{z}}_{\ell,k+1}(U_{\ell}, V_{\ell,k}))
    \end{align*}
    Using definition of  $s_{\ell,k}(\hat{\bm{z}}_{\ell,k+1}(U_{\ell}, V_{\ell,k})) = \frac{\mu_{f_{\text{latent}},\ell,k}(\hat{\bm{z}}_{\ell,k+1}(U_{\ell}, V_{\ell,k})|\mathcal{D}_{\mathbb{Z}}(U_{\ell}, V_{\ell,k})) - \xi_{\ell,k}}{\sigma_{f_{\text{latent}},\ell, k}(\hat{\bm{z}}_{\ell,k+1}(U_{\ell}, V_{\ell,k})|\mathcal{D}_{\mathbb{Z}}(U_{\ell}, V_{\ell,k}))}$, result of Theorem 6 in ~\cite{Srinivas2010}, and the fact that $\nu(s)- \nu(-s) = s$ we have:
    \begin{align*}
        B_{\ell,k}(U_{\ell}, V_{\ell,k}) &\le^{1} \xi_{\ell,k} - \mu_{f_{\text{latent}},\ell, k}(\hat{\bm{z}}_{\ell,k+1}(U_{\ell}, V_{\ell,k})|\mathcal{D}_{\mathbb{Z}}(U_{\ell}, V_{\ell,k})) + \sqrt{\beta_k}\sigma_{f_{\text{latent}},\ell, k}(\hat{\bm{z}}_{\ell,k+1}(U_{\ell}, V_{\ell,k})|\mathcal{D}_{\mathbb{Z}}(U_{\ell}, V_{\ell,k}))\\
        &=^{2} -s_{\ell,k}(\hat{\bm{z}}_{\ell,k+1}(U_{\ell}, V_{\ell,k}))\times \sigma_{f_{\text{latent}},\ell, k}(\hat{\bm{z}}_{\ell,k+1}(U_{\ell}, V_{\ell,k})|\mathcal{D}_{\mathbb{Z}}(U_{\ell}, V_{\ell,k}))\\
        & \hspace{20em} + \sqrt{\beta_k}\sigma_{f_{\text{latent}},\ell, k}(\hat{\bm{z}}_{\ell,k+1}(U_{\ell}, V_{\ell,k})|\mathcal{D}_{\mathbb{Z}}(U_{\ell}, V_{\ell,k})) \\\nonumber
        &=^{3} \Big[\nu(-s_{\ell,k}(\hat{\bm{z}}_{\ell,k+1}(U_{\ell},V_{\ell,k}))) + \sqrt{\beta_k} - \nu(s_{\ell,k}(\hat{\bm{z}}_{\ell,k+1}(U_{\ell},V_{\ell,k})))\Big] \\
        & \hspace{25em}\sigma_{f_{\text{latent}},\ell, k}(\hat{\bm{z}}_{\ell,k+1}(U_{\ell}, V_{\ell,k})|\mathcal{D}_{\mathbb{Z}}(U_{\ell}, V_{\ell,k}))
    \end{align*}
    Hence, with probability at least $1 - q\delta_0$ we have:
    \begin{align*}
        \sum_{k=0}^{q-1}B_{\ell,k}(U_{\ell}, V_{\ell,k}) &\le \sum_{k=0}^{q-1}\sigma_{f_{\text{latent}},\ell, k}(\hat{\bm{z}}_{\ell,k+1}(U_{\ell},V_{\ell,k})|\mathcal{D}_{\mathbb{Z}}(U_{\ell}, V_{\ell,k}))  \times \Big[\nu(-s_{\ell,k}(\hat{\bm{z}}_{\ell,k+1}(U_{\ell},V_{\ell,k}))) + \sqrt{\beta_k} \\ 
        & \hspace{25em} - \nu(s_{\ell,k}(\hat{\bm{z}}_{\ell,k+1}(U_{\ell},V_{\ell,k})))\Big] \\\nonumber
        &\le^{1} \sum_{k=0}^{q-1}\sigma_{f_{\text{latent}},\ell, k}(\hat{\bm{z}}_{\ell,k+1}(U_{\ell},V_{\ell,k})|\mathcal{D}_{\mathbb{Z}}(U_{\ell}, V_{\ell,k}))\bigg[ \sqrt{\beta_q} + 2\left[1 + \log\left(\frac{G^2_1}{2\pi\tau^2}\right) \right]\bigg]  \\\nonumber
        &\le^{2}\sqrt{
        \begin{aligned}
        &q\sum_{k=0}^{q-1}\sigma^2_{f_{\text{latent}},\ell, k}(\hat{\bm{z}}_{\ell,k+1}(U_{\ell},V_{\ell,k})|\mathcal{D}_{\mathbb{Z}}(U_{\ell}, V_{\ell,k}))\left( \sqrt{\beta_q} + 2\left[1 + \log\left(\frac{G^2_1}{2\pi\tau^2}\right) \right]\right)^2
        \end{aligned}
        }\\
        &\le^{3}\sqrt{
        \begin{aligned}
        &3q\sum_{k=0}^{q-1}\sigma^2_{f_{\text{latent}},\ell, k}(\hat{\bm{z}}_{\ell,k+1}(U_{\ell},V_{\ell,k})|\mathcal{D}_{\mathbb{Z}}(U_{\ell}, V_{\ell,k}))\left(\beta_q + 8\left[1 + \log^2\left(\frac{G^2_1}{2\pi\tau^2}\right) \right]\right)
        \end{aligned}
        },\\
    \end{align*}
    where in step 1 we use the result of Lemma 9 in ~\cite{RegretEI2017} (for stopping criteria $\tau < \sqrt{\frac{g_1}{2\pi}}$), in step 2 we use the Cauchy-Schwartz inequality, in step 3 we use $(a + b + c)^2 \le 3(a^2 + b^2 + c^2)$. Applying the result of Lemma 7 in ~\cite{RegretEI2017}  eventually gives
    \begin{align}\label{interm_bound_eleven_new}
        &\sum_{k=0}^{q-1}B_{\ell,k}(U_{\ell}, V_{\ell,k}) \le\sqrt{
        \begin{aligned}
        &3q\left[ \beta_q + 8\left[1 + \log^2\left(\frac{G^2_1}{2\pi\tau^2}\right) \right]\right] \mathcal{O}\left[\frac{\log^{d+1} q}{\log(1 + \sigma^{-2}_{\text{noise}})}\right]
        \end{aligned}
        }
    \end{align},
    with probability at least $1 - q\delta_0$.
\end{enumerate}
Combining results (\ref{interm_bound_ten_new}) and (\ref{interm_bound_eleven_new}) and using an asymptotic rate for $\beta_q = \mathcal{O}(\log^{d+4} q)$, gives \begin{align*}
    &\sum_{k=0}^{q-1}(f(\bm{x}^*) - f_{\text{latent}}(\hat{\bm{z}}_{\ell, k+1}(U_{\ell}, V_{\ell, k})))\le q\Upsilon(\ell)G_f + \mathcal{O}(\sqrt{q}\log^{d + 2.5} q),
\end{align*}
with probability at least $1 - 2q\delta_0$.
\end{proof}
Using the result of Lemma \ref{App:additional_lemma_one} for any realisation $U_{L}, V_{L, q-1}$ with probability at least $1 - 2Lq\delta_0$ we have: 
\begin{align*}
    \text{Regret}_{L,q}(\langle \hat{\bm{z}}_{\ell,k}(U_{L},V_{L,q-1})\rangle_{\ell,k}^{L, q}) &=\sum_{\ell=1}^{L}R_{\ell}(U_{\ell}, V_{\ell,q-1})  = \sum_{\ell=1}^{\tilde{\ell}-1}R_{\ell}(U_{\ell}, V_{\ell,q-1}) + \sum_{\ell=\tilde{\ell}}^{L}R_{\ell}(U_{\ell}, V_{\ell,q-1}) \\\nonumber
    &\le^{1} 2qG_f(\tilde{\ell}-1) + \sum_{\ell=\tilde{\ell}}^{L}R_{\ell}(U_{\ell}, V_{\ell,q-1}) \\
    & \le^2 2qG_f(\tilde{\ell} - 1) + qG_f\sum_{\ell = \tilde{\ell}}^L\Upsilon(\ell) + \mathcal{O}(L\sqrt{q}\log^{d + 2.5} q)\\
    & \le^3 2qG_f(\tilde{\ell}-1) + G_fB\frac{\int^L_{0}\Upsilon(a)da}{L} + \mathcal{O}\left[\frac{B}{\sqrt{q}}\log^{d + 2.5} q\right].
\end{align*}
In step 1 we use Assumption \ref{App:Assumption_1}, in step 2 we use $L - \tilde{\ell}\le L = \frac{B}{q}$ in step 3 we use that for decreasing positive valued function   $\sum_{\ell=\tilde{\ell}}^L\Upsilon(\ell) < \sum_{\ell=1}^L\Upsilon(\ell) \le \int_{0}^L\Upsilon(a)da$. Because these result holds for any realisation of $\mathbb{U}_{L}, \mathbb{V}_{L,q-1}$ we have, that with probability at least $1 - 2Lq\delta_0$:
\begin{align*}
    \text{Regret}_{L,q}(\langle \hat{\bm{z}}_{\ell,k}(\mathbb{U}_{L},\mathbb{V}_{L,q-1})\rangle_{\ell,k}^{L, q}) \le 2qG_f(\tilde{\ell}-1) + G_fB\frac{\int^L_{0}\Upsilon(a)da}{L} + \mathcal{O}\left[\frac{B}{\sqrt{q}}\log^{d + 2.5} q\right].
\end{align*}
Hence, for the averaged cumulative regret:
\begin{align*}
    &\frac{1}{B}\text{Regret}_{L,q}(\langle \hat{\bm{z}}_{\ell,k}(\mathbb{U}_{L},\mathbb{V}_{L,q-1})\rangle_{\ell,k}^{L, q}) \le  \frac{2qG_f(\tilde{\ell}-1)}{B} + G_f\frac{\int^L_{0}\Upsilon(a)da}{L} +\mathcal{O}\left[\frac{\log^{d + 2.5} q}{\sqrt{q}}\right].
\end{align*}
Choosing $q = \lceil B^{\frac{2}{3}}\rceil$ and applying that $\lim_{B\to\infty}\frac{\int^{\lceil B^{\frac{1}{3}}\rceil}_{0}\Upsilon(a)da}{\lceil B^{\frac{1}{3}}\rceil}\to 0$ (due to Assumption \ref{App:Assumption_3}) gives:
\begin{align*}
    \lim_{B\to\infty} \frac{1}{B}\text{Regret}_{L, q}(\langle \hat{\bm{z}}_{\ell, k}\rangle_{\ell,k}^{L, q})  = 0,
\end{align*}
with probability at least $1 - 2B\delta_0$. Finally choosing $\delta\in(0, \min\{1, 2\delta_0 B\})$ establishes the statement of the theorem.

\subsection{Necessary Conditions for Vanishing Regret}\label{App: necessary_condition}
According to Theorem \textcolor{blue}{1} we see that Assumption \ref{App:Assumption_3} provides sufficient conditions for vanishing regret. In this section, we study the necessary conditions for vanishing regret. \\

Let us consider an underlying black-box function defined over a bounded input space $\mathcal{X}$ with a unique global maximiser $\bm{x}^*$ and maximum value $f(\bm{x}^*)$ isolated from the rest of the range of function $f(\cdot)$, i.e. $\exists c > 0: \ f(\bm{x}^*) - \max_{\bm{x}\in\mathcal{X}\setminus\{\bm{x}^*\}}f(\bm{x}) = c$. Assume that the optimal point $\bm{x}^*$ cannot be recovered by the generative model in the sense of Assumption \ref{App:Assumption_3}. In other words, assume that among any number of epochs indexed from $1$ to  $L$,  there is a collection of indices $\mathcal{Y}(L) = \{\ell^{'}_1, \ell^{'}_2, \ldots, \ell^{'}_{|\mathcal{Y}(L)|}\}$ such that as $\lim_{L\to\infty}|\mathcal{Y}(L)| = \infty$, and on these epochs the Assumption \ref{App:Assumption_3} does not hold for global maximiser $\bm{x}^*$:
\begin{align*}
    \forall \ell^{'}\in \mathcal{Y}(L) \ \ \ \forall \bm{z}\in\mathcal{Z} \ \ \ \mathbb{P}\left[\bm{x}^*\sim g_{\bm{\theta}_{\ell^{'}}}(\cdot |\bm{z}) \right] \le \delta_1
\end{align*}  
for some positive constant $\delta_1\in(0,1)$.Then, for such epochs we have:
\begin{align*}
    f_{\text{latent}}(\bm{z}) = \mathbb{E}_{\bm{x}\sim g_{\bm{\theta}_{\ell^{'}}(\cdot|\bm{z})}}\left[f(\bm{x})\right]\le \delta_1f(\bm{x}^*) + (1 - \delta_1)(f(\bm{x}^*) - c) = f(\bm{x}^*) - c(1 - \delta_1), ~~~~ \forall\bm{z}\in\mathcal{Z}.
\end{align*}
Hence, for epochs $\ell^{'}_i\in\mathcal{Y}(L)$ we have $f(\bm{x}^*) - f_{\text{latent}}(\bm{z}) \ge c(1 - \delta_1)$ for all $\bm{z}\in\mathcal{Z}$. Hence, for the cumulative regret over $L$ iterations we have:
\begin{align*}
    \sum_{\ell=1}^L\sum_{k=0}^{q-1}\left[f(\bm{x}^*) - f_{\text{latent}}(\hat{\bm{z}}_{\ell,k})\right] &= \sum_{i=1}^{|\mathcal{Y}(L)|}\sum_{k=0}^{q-1}\left[f(\bm{x}^*) - f_{\text{latent}}(\hat{\bm{z}}_{\ell^{'}_i,k})\right] + \sum_{\ell\notin \mathcal{Y}(L)}\sum_{k=0}^{q-1}\left[f(\bm{x}^*) - f_{\text{latent}}(\hat{\bm{z}}_{\ell,k})\right] \\\nonumber
    &\ge \sum_{i=1}^{|\mathcal{Y}(L)|}\sum_{k=0}^{q-1}\left[f(\bm{x}^*) - f_{\text{latent}}(\hat{\bm{z}}_{\ell^{'}_i,k})\right]  \ge q|\mathcal{Y}(L)|c(1 - \delta_1)
\end{align*}
Hence, for the average cumulative regret we have:
\begin{align*}
    &\sum_{\ell=1}^L\sum_{k=0}^{q-1}\left[f(\bm{x}^*) - f_{\text{latent}}(\hat{\bm{z}}_{\ell,k})\right] \ge \frac{|\mathcal{Y}(L)|}{L}c(1 - \delta_1)
\end{align*}
Now, if $\lim_{L\to\infty}\frac{|\mathcal{Y}(L)|}{L} = h$ for some $h> 0$, we have that the average cumulative regret is not sub-linear. In other words, the necessary condition to guarantee sub-linear regret, is to ensure that the portion of epochs $\mathcal{Y}(L)$ is asymptotically small in comparison with $L$, i.e. $|\mathcal{Y}(L)| = o(L)$.

\section{Broader impact}\label{App:impacts}

% With reference to the NeurIPS ethics guidelines
% General purpose optimisation methodology.
% Societal impact through deployed applications
% Applications may be positive or negative
% Examples: materials discovery 2018_Song, 2018_Hase, 2020_Hase, 2020_Felton, 2019_Griffiths, 2020_Cheng, 2020_Thawani
% Biological and chemical warfare
% Infrastructure required unlikely to stimulate creation of new groups.
% machine learning hyperparameter tuning 2019_Kandasamy, 2020_Cowen, 2021_Turner, 2018_Falkner
% military applications 2016_Calandra, 2019_Grant - counterterrorism

% Sample efficiency in high-D spaces present in wide variety of real-world problems, example includes the current COVID-19 pandemic cite moonshot.
% Nascent stages
% Concern for the future of work 2017_Frey - expedite the automation of mining and large engineering processes, loss of jobs, climate mitigation 2008_MacKay in the search for perovskite solar cells
% core component behind self-driving laboratories. May make technicians redundant but may free up the time for synthetic chemists to work on more efficient routes to molecules.

With reference to the NeurIPS ethics guidelines, the work presented in this paper is liable to impact society through deployed applications rather than as a standalone method. From an application perspective, our contribution may be summarised as an improvement to the state-of-the-art in high-dimensional Bayesian optimisation over structured input spaces. A stark and topical example of such a problem, at the time of writing, is the search for antiviral drugs for the COVID-19 virus \cite{moonshot}. Indeed, the gravity of the current global crisis underlines the importance of high-dimensional optimisation problems over structured inputs such as drug molecules as well as the need for sample efficiency to expedite the resolution of the crisis. Given that our method may garner use in a range of fields, here, we choose three case studies to illustrate potential positive and negative impacts of our research:

\begin{enumerate}
    \item \textbf{Molecule and Materials Discovery:} Bayesian optimisation methodologies hold great promise for accelerating the discovery of molecules and materials \cite{2018_Song, 2019_Griffiths, 2018_Hase, 2020_Thawani, 2020_Felton, 2020_Cheng}. That being said, the societal effects of novel molecules and materials may range from decreased mortality due to a more diverse set of active drug molecules to a broader array of chemical and biological weapons. On this latter point, as with previous work on high-dimensional Bayesian optimisation \cite{2020_Tripp}, we would hope due to additional demands on scientific infrastructure that our machine learning technology alone would not be sufficient to incite individuals to commence production of weapons.
    \item \textbf{Machine Learning Hyperparameter Tuning:} Machine learning model hyperparameter tuning is a relevant use-case for Bayesian optimisation in the machine learning community \cite{2020_Kandasamy, 2020_Cowen, 2018_Falkner, 2021_Turner}. As with molecules and materials, machine learning models may have positive and negative consequences for society. In this respect, we would again hope that our technology will not stimulate individuals to use their models for nefarious purposes, but rather at worst, will accelerate their ability to do so.
    \item \textbf{Military Applications:} Bayesian optimisation is also used in  robotics and sensor placement systems \cite{2016_Calandra, 2019_Grant} with use-cases for military drones and UAVs. In similar fashion to the previous applications, these technologies may be misused to incite warfare but may also be beneficial for defence and counter-terrorism purposes.
\end{enumerate}

Concern for unfavourable economic impacts of our research due to unemployment may arise in a number of domains \cite{2017_Frey}. Our methodology holds promise to expedite the automation of industrial processes such as mining, reaction optimisation and nuclear power generation, potentially resulting in the loss of jobs for mining professionals, engineers and technicians. This being said, it is important to balance the negative impacts of temporary unemployment against benefits due to climate change mitigation for example, an undoubtedly important long-term consideration for the global economy. Bayesian optimisation is already a core component in self-driving laboratories \cite{2020_macleod} created with the explicit goal of discovering renewable energy materials \cite{2009_MacKay} such as perovskite solar cells \cite{2018_Herbol}. As such, we would hope that over a long time horizon our contribution will be a net force for social good.

\section{Additional Background and Related Work on Deep Metric Learning}\label{App:additional_related_work}

In this section we discuss additional background and related work in deep metric learning. Due to space constraints in the main paper, we target our discussion there towards VAE-based deep metric learning. Here, we provide a short overview of the development of deep metric learning. 

The performance of many machine learning algorithms critically depends on the availability of an informative metric over the input space~\cite{MJ}. The definition of such a measure is far from trivial especially in high-dimensional domains where standard distances tend to convey sub-optimal notions of similarity. As such, the search for the ``right'' metric has gained considerable attention leading to the development of numerous algorithms which according to~\cite{Tut} can be categorised into dimensionality reduction based~\cite{PCA_Jolliffe, LDA_fisher, ANMM_Wang_Zhang, KLMNN_Weinberger_Saul, singh2007dimensionality}, nearest neighbor specific~\cite{LMNN_Wainberger_Saul, NCA_Goldberger, NCMML_NCMC_Mensink, KLMNN_Torresani_Lee}, and information theoretic techniques~\cite{ITML_Davis, DMLMJ_Nguyen, MCML_Globerson_Roweis, DML_eig_Ying_Li, KDA_Mika}. Most of those methods learn a form of a Mahalanobis distance~\cite{schweizer1960statistical} by employing a linear transformation of the input space and then optimising a task specific loss function, e.g., maximising class separation in linear discriminant analysis~\cite{LDA_fisher}, or minimising expected leave-one-out errors in neighborhood component analysis~\cite{NCA_Goldberger}.  

Although early works on metric learning concentrated on linear methods, such models have shown limited separation capability when applied to nonlinear structures like those considered in this paper~\cite{Zabour, Zaboura}. Amongst many works attempting to remedy those limitations, e.g., kernelisation~\cite{Zababeer, torresani2007large, he2014nonlinear}, and localisation~\cite{chen2007nonlinear, hong2011learning, Zaboura, weinberger2009distance}, in this paper, we focus on \emph{deep metric learning methods} which have shown outstanding performance in a variety of fields such as textual entailment classiﬁcation \cite{wu2020}, image retrieval \cite{kim2019}, and reinforcement learning~\cite{zhang2021}. While deep metric learning originally garnered acclaim in classification domains, there is an extensive literature focussed on extending deep metric learning to regression problems \cite{2019_Kim, 2020_Kim, 2019_Kaya}. In addition, there is much work on augmenting triplet and contrastive losses for new problem domains \cite{2017_Wang_angular, 2018_Ge}. As such, the incorporation of deep metric learning methodologies for Bayesian optimisation would appear to be timely.

\end{document}

% --- supplement: supplement.tex ---

% If your paper is accepted and the title of your paper is very long,
% the style will print as headings an error message. Use the following
% command to supply a shorter title of your paper so that it can be
% used as headings.
%
%\runningtitle{I use this title instead because the last one was very long}

% If your paper is accepted and the number of authors is large, the
% style will print as headings an error message. Use the following
% command to supply a shorter version of the authors names so that
% they can be used as headings (for example, use only the surnames)
%
%\runningauthor{Surname 1, Surname 2, Surname 3, ...., Surname n}

% Supplementary material: To improve readability, you must use a single-column format for the supplementary material.
\onecolumn
\aistatstitle{Instructions for Paper Submissions to AISTATS 2022: \\
Supplementary Materials}

\section{FORMATTING INSTRUCTIONS}

To prepare a supplementary pdf file, we ask the authors to use \texttt{aistats2022.sty} as a style file and to follow the same formatting instructions as in the main paper.
The only difference is that the supplementary material must be in a \emph{single-column} format.
You can use \texttt{supplement.tex} in our starter pack as a starting point, or append the supplementary content to the main paper and split the final PDF into two separate files.

Note that reviewers are under no obligation to examine your supplementary material.

\section{MISSING PROOFS}

The supplementary materials may contain detailed proofs of the results that are missing in the main paper.

\subsection{Proof of Lemma 3}

\textit{In this section, we present the detailed proof of Lemma 3 and then [ ... ]}

\section{ADDITIONAL EXPERIMENTS}

If you have additional experimental results, you may include them in the supplementary materials.

\subsection{The Effect of Regularization Parameter}

\textit{Our algorithm depends on the regularization parameter $\lambda$. Figure 1 below illustrates the effect of this parameter on the performance of our algorithm. As we can see, [ ... ]}

\vfill